\documentclass{article}
\usepackage{authblk}
\usepackage{tgpagella}
\usepackage[dvips,letterpaper,margin=1in]{geometry}

\usepackage{silence}
\WarningFilter[pdftoc]{hyperref}{Token not allowed in a PDF string}
\WarningsOff[caption]
\usepackage[utf8]{inputenc}
\usepackage[export]{adjustbox}
\usepackage[T1]{fontenc}
\usepackage{tabularx}
\usepackage{tcolorbox}
\usepackage{changepage}
\usepackage{hyperref}
\usepackage{lipsum}
\usepackage{url}
\usepackage{placeins}
\usepackage{booktabs}
\usepackage{soul}
\usepackage{amsfonts}
\usepackage{nicefrac}
\usepackage{microtype}
\usepackage[numbers,sort]{natbib}
\usepackage{proba}
\usepackage{wrapfig}
\usepackage{caption}
\usepackage{amssymb}
\usepackage{pifont}

\usepackage{subcaption}
\usepackage{float}
\usepackage{rotating}
\usepackage{etoolbox}
\usepackage{xparse}
\usepackage{grffile}
\usepackage{amsmath}
\usepackage{xspace}
\usepackage{euscript}
\usepackage{enumitem}
\usepackage{mathtools}
\usepackage{tikz}
\usepackage{tablefootnote}
\usepackage[flushleft]{threeparttable}
\usepackage{multirow}
\usepackage[toc,page,header,title]{appendix}

\usepackage{minitoc}
\usepackage{arydshln}
\usepackage{array, booktabs}
\usepackage{comment}
\usepackage{graphicx}
\usepackage{adjustbox}
\usepackage{dsfont}
\usepackage{amsmath,amsthm}
\usepackage{balance}
\usepackage{multicol}
\usepackage{nameref}
\usepackage{pdfpages}
\usepackage{braket}
\usepackage[normalem]{ulem}
\usepackage{bbding}
\usepackage[capitalise,noabbrev]{cleveref}   
\usepackage[usestackEOL]{stackengine}
\usepackage{indentfirst}
\usepackage{csquotes}
\usepackage{pgf}
\usepackage{varwidth}
\usepackage{verbatimbox}
\usepackage{verbatim}
\usepackage{bm}
\usepackage{thm-restate}
\usepackage{environ}
\usepackage{orcidlink}


\captionsetup{font=small}
\captionsetup[sub]{font=small}
\captionsetup[subfigure]{labelformat = parens, labelsep = space, font = small}


\definecolor{pastelblue}{RGB}{76,113,175}
\definecolor{pastelgreen}{RGB}{144,238,144}
\definecolor{pastelred}{RGB}{196,78,82}
\definecolor{pastelgrey}{RGB}{230,230,230}
\definecolor{pastelbeige}{RGB}{243,236,221}
\definecolor{pastelpurple}{RGB}{154,139,192}
\definecolor{salmon}{RGB}{250, 128, 114}
\definecolor{darkgreen}{rgb}{0,0.6,0}
\definecolor{darkred}{rgb}{0.5,0,0}
\definecolor{verylightgreen}{HTML}{F6FFF9}
\definecolor{verylightred}{HTML}{FFF4F3}
\definecolor{verylightgray}{HTML}{F4F6F6}
\definecolor{navyblue}{HTML}{473992}
\definecolor{redorange}{HTML}{ED135A}
\definecolor{babyblueeyes}{rgb}{0.63, 0.79, 0.95}
\definecolor{lightpink}{rgb}{1.00, 0.714, 0.757}

\usepackage{tikzpeople}
\usetikzlibrary{shapes,decorations,arrows,calc,arrows.meta,fit,positioning, matrix}
\tikzset{ 
    table/.style={
        matrix of nodes,
        row sep=-\pgflinewidth,
        column sep=-\pgflinewidth,
        nodes={
            rectangle,
            draw=black,
            align=center
        },
        minimum height=0.5em,
        minimum width=4ex,
        text depth=6ex,
        text height=4ex,
        text width=4ex,
        nodes in empty cells,
        every even row/.style={
            nodes={fill=gray!20}
        },
        column 1/.style={
            nodes={text width=2em,font=\bfseries}
        },
        row 1/.style={
            nodes={
                fill=black,
                text=white,
                font=\bfseries
            }
        }
    }
}

\tikzset{
    -Latex,auto,node distance =1 cm and 1 cm,semithick,
    state/.style ={ellipse, draw, minimum width = 0.7 cm},
    point/.style = {circle, draw, inner sep=0.04cm,fill,node contents={}},
    bidirected/.style={Latex-Latex,dashed},
    el/.style = {inner sep=2pt, align=left, sloped}
}

\newcommand\Warning{%
 \makebox[1.4em][c]{%
 \makebox[0pt][c]{\raisebox{.1em}{\small!}}%
 \makebox[0pt][c]{\color{red}\Large$\bigtriangleup$}}}%




%


\newtheorem{assumption}{Assumption}

\newtheorem{corollary}{Corollary}

\newtheorem{definition}{Definition}

\newtheorem{proposition}{Proposition}


\makeatletter
\def\thmt@refnamewithcomma #1#2#3,#4,#5\@nil{%
	\@xa\def\csname\thmt@envname #1utorefname\endcsname{#3}%
	\ifcsname #2refname\endcsname
	\csname #2refname\expandafter\endcsname\expandafter{\thmt@envname}{#3}{#4}%
	\fi}
\makeatother
\Crefname{conjecture}{Conjecture}{Conjectures}
\Crefname{definition}{Definition}{Definitions}
\Crefname{observation}{Observation}{Observations}
\Crefname{assumption}{Assumption}{Assumptions}
\Crefname{axiom}{Axiom}{Axioms}
\Crefname{case}{Case}{Cases}
\Crefname{claim}{Claim}{Claims}
\Crefname{conclusion}{Conclusion}{Conclusions}
\Crefname{condition}{Condition}{Conditions}
\Crefname{criterion}{Criterion}{Criteria}
\Crefname{exercise}{Exercise}{Exercises}
\Crefname{example}{Example}{Examples}
\Crefname{notation}{Notation}{Notations}
\Crefname{problem}{Problem}{Problems}
\Crefname{property}{Property}{Properties}
\Crefname{remark}{Remark}{Remarks}
\Crefname{solution}{Solution}{Solutions}
\Crefname{summary}{Summary}{Summaries}
\Crefname{motivation}{Motivation}{Motivations}
\newcommand{\mathcolorbox}[2]{\colorbox{#1}{$\displaystyle #2$}}

\DeclareMathOperator*{\argmin}{argmin}
\DeclareMathOperator*{\argmax}{argmax}
\newcommand*\dbar[1]{\overline{\overline{\lower0.2ex\hbox{$#1$}}}}
\newcommand{\supp}{\mathrm{supp\xspace}}

\newcommand{\btheta}{\boldsymbol{\theta}}

\newcommand{\setsize}[1]{\mid \! #1 \! \mid}

\renewcommand{\Var}{\mathrm{Var}}

\newcommand{\Exp}{\mathds{E}}
\newcommand{\eqdist}{\stackrel{d}{=}}

\newcommand{\indep}{\perp\!\!\!\perp}

\newcommand{\Parents}{\mathrm{Pa}}
\newcommand{\parents}{\mathrm{pa}}
\newcommand{\iso}{\cong}

\newcommand{\aut}[1]{\mathrm{Aut}(#1)}


\def\cB{{\mathcal{B}}}

\def\cD{{\mathcal{D}}}

\def\cL{{\mathcal{L}}}

\def\cN{{\mathcal{N}}}
\def\cO{{\mathcal{O}}}

\def\cR{{\mathcal{R}}}

\def\bW{{\mathbf{W}}}

\def\bY{{\mathbf{Y}}}

\def\by{{\mathbf{y}}}


\DeclareFontFamily{U}{BOONDOX-calo}{\skewchar\font=45 }
\DeclareFontShape{U}{BOONDOX-calo}{m}{n}{
  <-> s*[1.05] BOONDOX-r-calo}{}
\DeclareFontShape{U}{BOONDOX-calo}{b}{n}{
  <-> s*[1.05] BOONDOX-b-calo}{}
\DeclareMathAlphabet{\mathcalb}{U}{BOONDOX-calo}{m}{n}
\SetMathAlphabet{\mathcalb}{bold}{U}{BOONDOX-calo}{b}{n}
\DeclareMathAlphabet{\mathbcalb}{U}{BOONDOX-calo}{b}{n}

\newcommand{\ecal}[1]{ \EuScript{#1} }

\pdfstringdefDisableCommands{%
  \def\\{}%
  \def\texttt#1{<#1>}%
}

\renewcommand{\paragraph}[1]{{\noindent \textbf{#1.}}}
\newcommand{\Appendix}{supplement\xspace}

\renewcommand{\G}[1]{G^{(#1)}}
\newcommand{\Vt}[1]{V^{(#1)}}
\newcommand{\n}[1]{n^{(#1)}}
\newcommand{\Et}[1]{E^{(#1)}}
\renewcommand{\E}[1]{\ecal{E}^{(#1)}}
\renewcommand{\X}[1]{\ecal{X}^{(#1)}}
\newcommand{\e}[1]{e^{(#1)}}
\newcommand{\x}[1]{x^{(#1)}}
\newcommand{\ep}[1]{e^{(#1)'}}
\newcommand{\xp}[1]{x^{(#1)'}}
\renewcommand{\A}[2]{A^{(#1)}_{#2}}
\renewcommand{\a}[2]{a^{(#1)}_{#2}}
\renewcommand{\IJ}[0]{(I,J)}
\newcommand{\IJt}[1]{(I^{(t_{#1})},J^{(t_{#1})})}
\newcommand{\ijt}[1]{(i^{(t_{#1})},j^{(t_{#1})})}
\newcommand{\UV}[0]{(U,V)}
\newcommand{\pol}[0]{\mu(\G{t_0})}
\renewcommand{\Y}[1]{Y_{#1}^{(t_1)}}
\newcommand{\sY}[0]{\bY^{(M)}}
\newcommand{\sy}[0]{\by^{(M)}}
\newcommand{\Yt}[3]{Y_{{#1}^{(t_{#3})}{#2}^{(t_{#3})}}^{ ( t_{#3} ) } }
\newcommand{\domA}[0]{\mathbb{A}}
\renewcommand{\ij}[0]{(i,j)}
\newcommand{\uv}[0]{(u,v)}
\newcommand{\orb}[1]{\cO_{#1}}
\renewcommand{\U}[0]{\ecal{U}}
\newcommand{\m}[0]{\ecal{C}}
\renewcommand{\M}[0]{\mathbb{C}}
\renewcommand{\F}[0]{\ecal{F}}
\renewcommand{\V}[0]{\ecal{V}}
\newcommand{\Ux}[1]{\ecal{U}^{(#1)}_{\ecal{X}}}
\newcommand{\ux}[1]{u^{(#1)}_{\ecal{X}}}

\newcommand{\Ue}[1]{\ecal{U}^{(#1)}_{\ecal{E}}}
\newcommand{\fx}[1]{f^{(#1)}_{\ecal{X}}}

\newcommand{\fe}[1]{f^{(#1)}_{\ecal{E}}}
\def\th{^{\text{th}}}

\newcommand{\GGP}{\m}
\newcommand{\ORB}[1]{\cO^{(t_0)}_{#1}}
\def\XE{{\Xi^{(t_0)}}}
\newcommand{\Gammastar}{\Gamma^{\star}}
\newcommand{\Gammamostexp}{\Gamma^{\star}}
\newcommand{\Gammajoint}{\Gamma^{\text{(joint)}}}
\ActivateWarningFilters[pdftoc]

\hypersetup{
    colorlinks = true,
    linkcolor = navyblue,
    urlcolor  = black, 
    citecolor = redorange,
    anchorcolor = navyblue
}

\usepackage{authblk}

\makeatletter
\def\blfootnote{\G{t_0}def\@thefnmark{}\@footnotetext}
\makeatother

\title{\Huge Causal Lifting and Link Prediction}

\author[1]{Leonardo Cotta~\orcidlink{0000-0002-4751-7643}\footnote{Work partially done at Purdue University and Intel Labs.}\footnote{Correspondence to: \texttt{leonardo.cotta@vectorinstitute.ai}}}
\author[2]{Beatrice Bevilacqua~\orcidlink{0000-0002-9462-4599}}
\author[3]{Nesreen Ahmed~\orcidlink{0000-0002-7913-4962}}
\author[2]{Bruno Ribeiro~\orcidlink{0000-0002-3527-6192}}

\affil[1]{Vector Institute}
\affil[2]{Purdue University}
\affil[3]{Intel Labs}

\begin{document}

\doparttoc 
\faketableofcontents 

\maketitle
\begin{abstract}
Existing causal models for link prediction assume an underlying set of inherent node factors ---an innate characteristic defined at the node's birth--- that governs the causal evolution of links in the graph. In some causal tasks, however, link formation is \emph{path-dependent}: The outcome of link interventions depends on existing links.
Unfortunately, these existing causal methods are not designed for path-dependent link formation, as the cascading functional dependencies between links (arising from \emph{path dependence}) are either unidentifiable or require an impractical number of control variables. To overcome this, we develop the first causal model capable of dealing with path dependencies in link prediction.

In this work we introduce the concept of causal lifting, an invariance in causal models of independent interest that, on graphs, allows the identification of causal link prediction queries using limited interventional data. Further, we show how structural pairwise embeddings exhibit lower bias and correctly represent the task's causal structure, as opposed to existing node embeddings, \textit{e.g.}, graph neural network node embeddings and matrix factorization. Finally, we validate our theoretical findings on three scenarios for causal link prediction tasks: knowledge base completion, covariance matrix estimation and consumer-product recommendations.
\end{abstract}


\section{Introduction}

Predicting links between entities via latent factors has captivated the scientific community ever since Charles Spearman published {\em The Abilities of Man}~\cite{spearmanabilities} in 1927, where he described the mathematical tools to uncover latent {\em common factors} of intelligence just by observing a subject $i$ perform a task $j$ successfully ($A_{ij}=1$) or unsuccessfully ($A_{ij}=0$) over $n$ subjects and $m$ tasks.
Spearman's work started a revolution that gave us, among other things,  matrix and tensor factorizations, Principal Component Analysis (PCA), and Independent Component Analysis (ICA).
Simultaneously, {\em The Abilities of Man} also warned us about interpreting the factors of subject $i$ as innate rather than acquired abilities.
For instance, regarding Woolley and Fischer's observation that ``boys are {\em enormously superior} [to girls] at $[\dots]$ spatial relations'' (\textit{i.e.}, in how objects relate in space)~\cite{woolley1914mental}, Spearman warns that ``{\em evidence of this difference being really innate [rather than acquired] is still dubious}''.

Today, we can describe Spearman's warning as being about two competing causal hypotheses that describe link formation between young children and their abilities. 
A {\em path-dependent hypothesis} where past links influence future links~\cite{liben2007link} and an {\em innate factors} hypothesis where link formation is just a manifestation of latent innate factors~\cite{wang2020causal}.
In Woolley and Fischer's experiments, both hypotheses are able to describe the data: Either boys are innately better than girls at spatial reasoning (innate factors hypothesis), or boys in 1914 just happened to have had more playtime with spatial tasks than girls, with each task further improving their skills (path-dependent hypothesis).
If we were to hide the performance of girl $i$ on task $j$ (\textit{i.e.}, hide $A_{ij}$), both hypotheses could give equally accurate predictive models for the missing association $A_{ij}$.
Observing boys and girls perform these tasks over time does not disambiguate these two hypotheses, since inner factors can also evolve over time.

Under a causal task, however, incorrect innate factor assumptions may lead to incorrect predictions about the effect of interventions.
Encouraging girls to perform spatial tasks in playtime (an intervention) could improve their spatial reasoning skills under the path-dependent hypothesis but not under an innate factors hypothesis.
Similarly, performing an intervention to probe into whether a consumer will buy a computer keyboard (through an ad or a top search ranking)~\cite{joachims2021recommendations, wang2020causal}, who then buys it, will lead to assuming that (a) maybe the consumer no longer needs (another) keyboard under a path-dependent model; or (b) the consumer will keep buying more keyboards (if we keep showing more ads and top search results)~\cite{ji2022recommender} under an innate factor hypothesis.
Current causal models for link prediction operate under the assumption of innate factors~\cite{radhakrishnan2022simple,wang2020causal}. Path-dependent causal models are able to react to the evolution of the graph structure. Part of the reason for the absence of path-dependent models in the literature are the relational cross-dependencies of path dependency that are hard to address with existing causal methods.

The key insight of our work is observing that graph symmetries can encode the relationship between an evolving graph structure and a link formation process that reacts to it. To illustrate this, let us look at the social network example in \Cref{fig:intro}. Consider the graph formed by the observed friendships between users (black edges). We then intervene by recommending a friendship between users 1 and 8. Next, we observe a new friendship (link) appearing (green edge). Finally, we wonder what would have happened had we recommended a friendship between users 3 and 9 instead (counterfactual query). From the figure, we can see that both pairs of users are symmetric (isomorphic), \textit{i.e.}, indistinguishable if we do not consider their identifiers (cf.\ \Cref{sec:background}). Thus, answering the counterfactual query about 3 and 9 with the same outcome observed in 1 and 8 seems intuitively correct. However, without causal modeling assumptions it is not possible to answer counterfactual queries in general~\cite{Bar+2020}. Hence, our work establishes sufficient causal modeling assumptions for this intuition to hold in practice, \textit{i.e.}, for graph symmetries to govern the causal link formation process of a graph. Note how previous interventions in the graph formation process will be reflected in the structural symmetries, which itself governs the link formation process. As such, this can be a path-dependent model. Finally, we generalize this setting to a supervised learning task, where graph embedding models are trained to predict intervention outcomes from the observed graph structure. We show that graph embeddings capturing such symmetries, \textit{i.e.}, assigning the same embeddings to symmetric node pairs, are the unbiased estimators capturing the underlying causal mechanisms of the task. Next, we formalize these notions and detail our contributions.
\begin{figure}[ht]
\begin{center}
\includegraphics[scale=0.15]{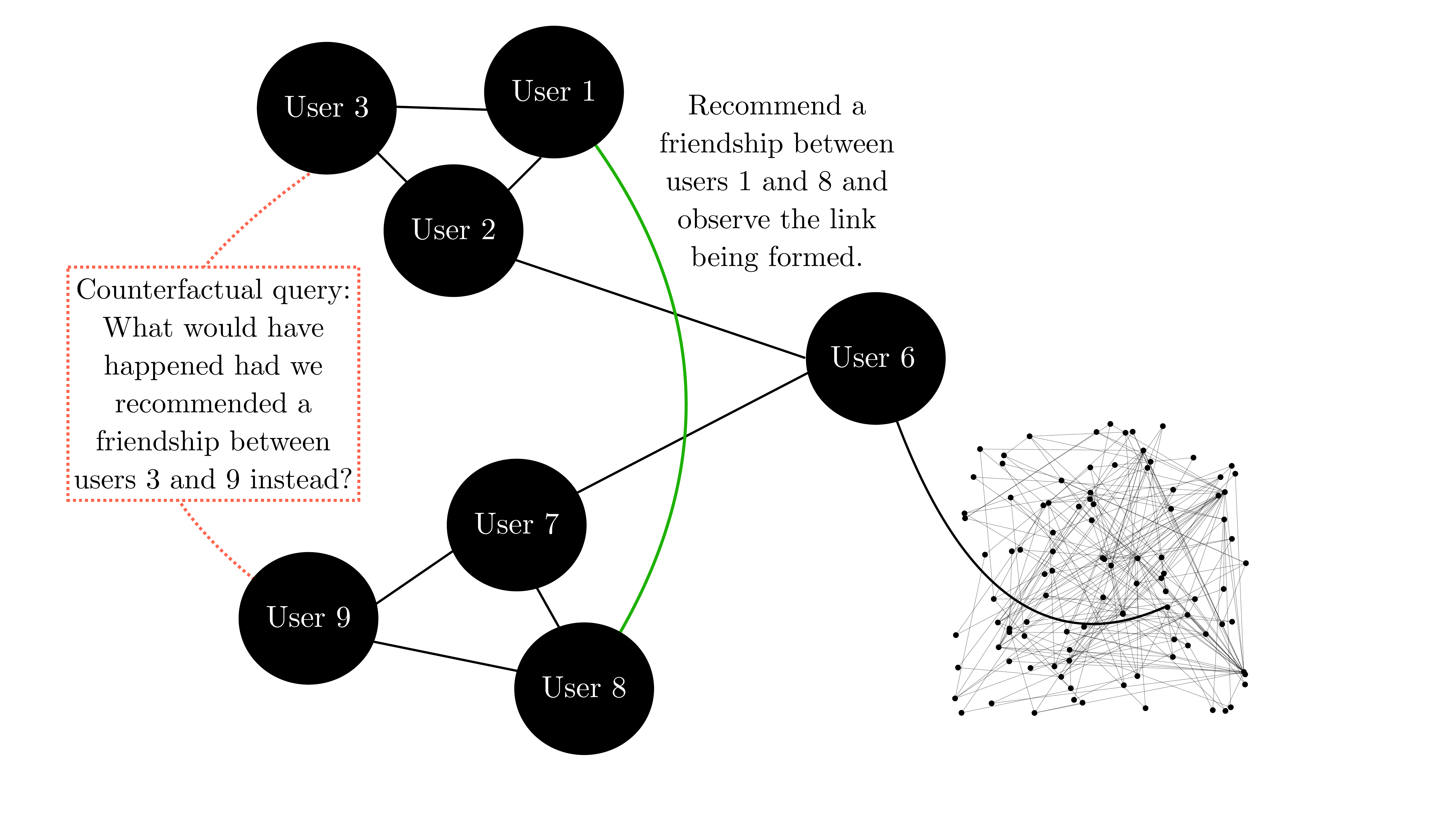}
\end{center}
\caption{\label{fig:intro} We illustrate the role of symmetry in a social network. Given that we observe the social graph formed by the black edges, we intervene and recommend a friendship between users 1 and 8. After observing that a new link is created between them, we ask whether a link would have been created had we recommended a friendship between users 3 and 9 instead.}
\end{figure}
\\~\\
\paragraph{Task setup} We consider first observing at time $t_0$ a (possibly static and attributed) graph $\G{t_0}$. Next, at a later time $t_1$ we are able to intervene in a pair of nodes $\E{t_1}$ and observe whether the link appears in $\G{t_1}$. Finally, we are interested in answering counterfactual queries of the form: \emph{What would have happened had we intervened in this other pair of nodes?} We outline the needed assumptions on the causal generating process $\m$ of $\G{t_0}$ such that we can intervene on more pairs and train a graph embedding model to answer the counterfactual queries. The causal model considered is (possibly) path-dependent, \textit{i.e.}, the link formation in $\G{t_1}$ can depend on how $\G{t_0}$ was created, including previous interventions (before $t_0$) that we do not observe. Moreover, the graph embedding is trained in a supervised learning fashion using the observational data $\G{t_0}$ to predict the interventional data $\G{t_1}$, \textit{i.e.}, the targets are the links and non-links in which we intervened.
\\~\\
\paragraph{Contributions} Our contributions are centered around using invariances to both i) define a set of sufficient causal modeling assumptions for causal identification and to ii) define the needed graph embedding for unbiased estimation of causal links. Regarding identification, we develop the concept of causal lifting, an invariance property of causal models that is able to serve link prediction under interventions in both path-dependent and innate factor models. Causal lifting serves more than a causal link prediction tool, it is also an experimental design tool and identification strategy for invariant data.
The key insight of causal lifting is to identify causal quantities by assuming invariances in the causal mechanisms of the task, rather than relying on variable controls or covariate-based adjustments common in do-calculus and potential outcomes analyses. Further, on the estimation side, we show how structural pairwise embeddings, a type of graph embedding that incorporates the known causal invariances of the task, achieves lower bias and variance than node embedding methods. Finally, we validate our theoretical findings on four datasets under three different scenarios of causal link prediction tasks.

\subsection*{A Family of Causal Link Prediction Tasks}

In our task, we observe graph data at some (pre-trial) time $t_0$ from an unknown (causal) graph generation process $\GGP$ with potential path dependencies. 
As such, future links and probes might depend on previous states of the graph. We denote the observed graph at pre-trial time $t_0$ by $\G{t_0}$, with corresponding adjacency matrix $\a{t_0}{}$, node set $\Vt{t_0} \text{ of size }\n{t_0}:=\setsize{\Vt{t_0}}$ and edge set $\Et{t_0}$. Without loss of generality, we refer to graphs at any other time points of interest $t \in \N$ using the same notation ($\G{t}, \a{t}{}, \Vt{t}, \n{t}, \Et{t}$). Moreover, we use capital letters to denote the corresponding random variable of an observation, \textit{e.g.}, $\A{t_0}{}$ is the random variable of the observed adjacency $\a{t_0}{}$. Finally, unless otherwise stated, our results consider an adjacency $ \a{t_0}{} \in \domA^{\n{t_0} \times \n{t_0}}$ with arbitrary finite domain $\domA$. Thus, $\a{t_0}{}$ possibly contains information about time, node and edge features ---which implies that $\a{t_0}{}$ is not necessarily a matrix (it can be a tensor representing this heterogeneous node adjacency).

After observing the graph $\G{t_0}$, we are often interested in \emph{probing into a certain relation}.
More specifically, we probe into the relation of $\IJ \sim \pol$, where $p\big(\IJ;\G{t_0}\big)$ is a distribution over the node pairs in $\G{t_0}$.  We refer to $\pol$ as the probe policy, \textit{i.e.}, it (stochastically) defines which relation from $\G{t_0}$ we wish to probe into. For instance, in online social networks we can probe into a friendship between two users by making a friendship recommendation. Note that the act of probing induces an intervention in the graph formation process, \textit{i.e.}, despite of being friends two users might not add each other at that point of time without an explicit recommendation from the system. We will refer to interventions of this type, \textit{i.e.}, probing into a relation, simply by \textbf{probes}. 
Finally, note that a probe can be seen as an experimental trial as well, \textit{e.g.}, in the \textit{recommendations as treatments} framework~\cite{joachims2021recommendations} products are seen as treatments that the recommender system chooses or not to administer to each of its users.

We consider the act of probing and its outcome happening simultaneously at a posterior time $t_1 > t_0$. That is, the difference in time between an intervention and its outcome is small enough that variables associated with other pairs are not impacted by the intervention. As mentioned, the probe is an intervention in the graph formation process. Hence, we define as $\E{t_1}$ the random variable of the graph process we make interventions in. We can then define the random variable of the outcome of a probe in $\IJ \sim \pol$ as 

\begin{equation}\label{eq:int}
  \underbrace{\Y{IJ}}_{\substack{\text{Outcome of}\\\text{probe in} \text{ $\IJ.$}}}\!\!\!\!\!:= \A{t_1}{\E{t_1}}\!\!\underbrace{\Big( \E{t_1} = \IJ \Big)}_{\text{ Probe in $\IJ$.}} \mid  \G{t_0},
\end{equation}
where $t_1$ is both the time of the probe and when we see its effect (post-trial time), $\A{t_1}{}$ is the adjacency of the graph $\G{t_1}$ (after the probe), and $\E{t_1}$ is the probe random variable. In our work we will use the potential outcomes notation~\cite{rubin2005causal} and Pearl's Causal Hierarchy~\cite{elias2020PCH} framework to describe causal tasks.

In real-world systems, we can actively probe and observe an outcome for \Cref{eq:int}, \textit{e.g.}, make a recommendation and observe whether two users add each other in an online social network. However, we also would like to keep probes to a minimum. Either because they are expensive to perform or because they interfere with users' experience~\cite{ferrara2022link} or the system's normal operation.
Rather, we describe our task as the following \textbf{idealized learning task}: 
Perform one probe in $\IJ \sim \pol$, then predict what would have happened had we probed into the relation of a different pair $\UV \sim \pol, \UV \neq \IJ$. We refer to this task as \textbf{causal link prediction}, and more formally define it as estimating the counterfactual quantity
\begin{equation}\label{eq:cf}
\begin{split}
    P\Big( \underbrace{\A{t_1}{\E{t_1}}\big( \E{t_1} = \UV \big)}_{ \substack{ \text{ What would have } \\ \text{happened had we probed } \\ \text{ in $\UV$ instead?} } } \mid \A{t_1}{\E{t_1}}(\E{t_1} = \IJ) , \G{t_0} \Big) \equiv P\Big( \Y{UV} \mid \Y{IJ}\Big).
\end{split}
\end{equation}
Note that $\E{t_1}$ is an individual quantity, \textit{i.e.}, for an observation $\G{t_0}$ it can only take one value, thus \Cref{eq:cf} is, without further assumptions, a strict counterfactual query (see \cite{elias2020PCH} for a precise definition).

\Cref{eq:cf}'s query is of interest to a wide range of applications, for instance (we will present experiments in \Cref{sec:res}): (i) determining what would have been the outcome of recommending product $V$ to user $U$, given user $I$ was recommended product $J$ and bought ($\A{t_1}{\E{t_1}}(\E{t_1} = \IJ) = 1$) or didn't buy ($\A{t_1}{\E{t_1}}(\E{t_1} = \IJ)=0$) it; (ii) knowledge base completion, where we manually check the relation between a pair of entities $(I,J)$ in a knowledge base and then use this experiment to predict what would have happened had we manually checked the relation between entities $(U,V)$;
(iii) in refining estimations of the covariance matrix between random variables from experiments, \textit{i.e.}, if $\A{t_1}{\E{t_1}}(\E{t_1} = \IJ) \approx \text{cov}(I,J)$ is the empirical covariance between two random variables $I$ and $J$, and our policy $\pol$ decides to collect more field data to improve the covariance estimates between $I$ and $J$, \Cref{eq:cf} allows us to predict what would have been the improved covariance estimate of $\text{cov}(U,V)$ had the policy chosen $\UV$ as the pair of random variables. 

We highlight that our counterfactual query is with respect to the probing policy $\pol$, \textit{i.e.}, we require interventional data collected under $\pol$ and can only answer counterfactual queries about pairs sampled from $\pol$. A possible interesting extension of this work is showing how to merge data from different policies to answer queries about one of them. Finally, we note that counterfactual questions such as ``what would have happened had we not intervened?'' are also out of the scope of this work. These questions involve a different causal quantity ($\A{t_1}{\E{t_1}} \mid \A{t_1}{\E{t_1}}(\E{t_1} = \IJ)$) than the one in \Cref{eq:cf} and would need either further causal assumptions or different data. Moreover, we can see this query as one about a different probing policy, \textit{i.e.}, where edges would have been generated according to the graph's natural evolution process.
\\~\\
\paragraph{Learning from multiple probes}
For ease of exposition, we have so far referred to an idealized task where we learn from a single probe $\Y{IJ}$ at time $t_1$ ---learning predictions from a single intervention would result in high variance estimates. To consider a more practical solution, we first define the random variable of the outcome of a probe at time $t_M \geq t_1$ in $\IJt{M}\sim \pol$ as

\begin{equation}\label{eq:ytm}
\Yt{I}{J}{M} := \A{t_M}{ \E{t_M} } \Big( \E{t_M} = \IJt{M} \Big) \mid  \Yt{I}{J}{1}, \ldots, \Yt{I}{J}{M-1}, \G{t_0},
\end{equation}
where each probe outcome $ \Yt{I}{J}{m}, 1 \leq m \leq M  $ is defined recursively in the same way. Now, we are ready to define the random variable of a sequence of $M$ probes in $\G{t_0}$ at times $t_1, \ldots, t_M$ as
\begin{equation}\label{eq:Y}
    \sY := \Big( \Yt{I}{J}{m} \Big)_{m=1}^M.
\end{equation}
Then, we are finally left with the generalized task of estimating the counterfactual quantity
\begin{equation}\label{eq:cf-multi}
    P\Big( \underbrace{\A{t_1}{\E{t_1}}\big( \E{t_1} = \UV \big)}_{ \substack{ \text{ What would have } \\ \text{happened had we probed } \\ \text{ in $\UV$ at time $t_1$ instead?} } } \mid \bigwedge_{m=1}^M \A{t_m}{ \E{t_m} } \Big( \E{t_m} = \IJt{m} \Big), \G{t_0} \Big) \equiv P\Big( \Y{UV} \mid \sY \Big).
\end{equation}

\paragraph{Challenges} Estimating \Cref{eq:cf-multi} brings the following challenges:

\begin{enumerate}[leftmargin=*]
    \item The (causal) data generating process $\GGP$ of $\G{t_0}$ is often unknown. Thus, there might have been unknown probes prior to $t_0$ and the (causal) evolution of the graph may be path-dependent (\textit{e.g.}, new links may depend on existing ones~\cite{albert2002statistical,fabrikant2002heuristically,perc2014matthew,puffert2002path}).
    
    \item Due to spillover~\cite{bakshy2012social,christakis2007spread,eckles2017design,holland1981exponential,hong2015causality,manski2013identification,rosenbaum2007interference,ugander2013graph,yuan2021causal} and 
    carryover effects~\cite{kohavi2012trustworthy}, subsequent probes might be affected by previous ones. As such, multiple probes ($\sY$) cannot be treated as i.i.d.\ data.
    
    \item Observing the outcome of a probe in $\IJt{m}$ might change our belief of what would have been the outcome of a probe in $\UV$, \textit{i.e.}, probes ($\sY$) and our prediction $\UV$ cannot be treated as i.i.d.\ data.
    
\end{enumerate}

\paragraph{Outline of contributions} In what follows we outline this work's contributions to tackle the above challenges.

\begin{enumerate}[label=\Alph*., leftmargin=*]

    \item We define the general concept of causal lifting: A generalization of probabilistic lifting~\cite{poole2003first} to different layers of Pearl's Causal Hierarchy. These definitions are of independent interest, not tied to link prediction.
    
    
    \item {\em We define the causal link prediction learning task. We show that lifting in causal link prediction is essentially the structural task of finding symmetries in $\G{t_0}$, which is more akin to finding logical rules than the positional goal of finding nearby similar relations in $\G{t_0}$. We then show why node embeddings are generally undesirable for these tasks and argue that structural (permutation invariant or equivariant) pairwise embeddings ---a special type of joint node pair representation--- present low learning bias and capture the correct causal structure from the task, as opposed to existing node embedding methods (e.g., matrix factorization and GNNs).}
    
    
    \item Finally, we introduce three situations where our theoretical results can be leveraged: (i) to extrapolate knowledge in knowledge bases, (ii) to learn to improve covariance matrix estimations with partially collected data, and (iii) to make recommendations in recommender systems. Overall, results confirm that invariant pairwise embedding methods consistently outperform node embedding ones in causal link prediction tasks.
    
    \item \textit{Contribution in the \Appendix}: We present ---to the best of our knowledge--- the first universal family of Structural Causal Models (SCMs) for finite graphs (finite exchangeable two-dimensional arrays). 
    
\end{enumerate}

\section{Existing Link Prediction Literature}

Link prediction started with Spearman's 1927 work but has more recently gained traction in different applications, \textit{e.g.}, matrix denoising~\cite{candes2010matrix}, social networks~\cite{adamic2003friends}, recommender systems~\cite{mnih2008probabilistic} and many others~\cite{ghasemian2020stacking}. In each problem domain, link prediction has been either treated as an associational (layer 1 of Pearl's Causal Hierarchy) or as a causal task (layer 2 and 3 of Pearl's Causal Hierarchy). In what follows, we review each of these perspectives, highlighting relevant related works while contrasting with our approach.
\\~\\
\paragraph{Associational link prediction methods} Since the first half of the last century, link prediction as an associational task has been studied through what can be formally described as a self-supervised learning task, where a masking process (a.k.a., noise process) hides edges as nonedges (and in some applications also vice-versa). The goal is to predict the true edges and nonedges in the adjacency matrix. By defining the task as self-supervised learning, \Cref{sec:selfsuper} shows how existing methods either implicitly or explicitly assume the masking process is known in training.

To illustrate our self-supervised learning perspective, let us consider matrix factorization on the observed graph $\G{t_0}$. More specifically, we let our loss function be given by the mean squared error $\Vert \a{t_0}{} - U V^T \Vert_2^2 = \sum_{(i,j) \in V^2} (\a{t_0}{ij} - U_{i\cdot} V^T_{j\cdot})^2$ with $U \in \R^{n \times d}$ and $V \in \R^{n \times d} $ as our learned low-rank matrices. In this scenario, as further detailed in \Cref{sec:selfsuper}, we assume a uniform mask over $\G{t_0}$'s node pairs, while our link reconstruction model is given by a normal distribution around its latent factors' dot product, \textit{i.e.}, $\A{t_0}{\IJ} \mid \a{t_0}{-\IJ}  \sim \cN(U_{I\cdot} V^T_{J\cdot},1)$. Note that in this case, apart from the usual causal restrictions imposed by latent factor models, we also have an observational limitation: At test time we have to sample node pairs uniformly at random ---an unlikely setting in some applications.


State-of-the-art tensor factorization methods used in knowledge base completion~\cite{ren2020query2box,sun2019rotate,trouillon2017knowledge} force the embeddings to encode properties of logical rules, such as symmetry/antisymmetry and composition. More recently, Graph Neural Networks (GNNs)~\cite{Sca+2009} have emerged as an alternative way to perform link prediction.
Unlike matrix factorization, a GNN embedding is not sensitive to permutations of the node identifiers (e.g., user id in the system).
Matrix factorization and GNN node embeddings are the modern workhorses of link prediction.
These node embeddings encode associations between graph topology (edges of the graph) as well as node and edge attributes, but they are designed to handle observational instead of interventional data.
The framing of link prediction as a self-supervised learning task helps us understand why even state-of-the-art link prediction approaches are associational and unable to predict the query in \Cref{eq:cf-multi}.

There are alternative associational approaches, often referred to as \textit{model-based} methods, that fit a random graph model to $\G{t_0}$ in different ways, \textit{e.g.}, considering hierarchical structures \cite{clauset2008hierarchical} or stochastic block models~\cite{holland1983stochastic,holland1981exponential}. These models can output the probability of any link in the observed graph. Note, however, that these methods are by definition associational, \textit{i.e.}, they directly compute a (associational) probability distribution from which $\G{t_0}$ was generated. As such, they do not provide much flexibility for causal extensions outside from the process defined by the model.
\\~\\
\paragraph{Existing causal link prediction methods} 
{\em Mechanistic (network creation) methods:} Link prediction methods have also been derived from models of network formation~\cite{liben2007link}. For instance, the common neighbors mechanism~\cite{newman2001clustering} assumes links are more likely created by nodes that have many common neighbors. The vast majority of other mechanisms such as the Jaccard, the Adamic-Adar~\cite{adamic2003friends}, the Katz~\cite{katz1953new}, and the preferential attachment~\cite{newman2001clustering} indices are all variations of the common neighbors method. All of such methods assume a fixed network formation process and predict links based on it. In the context of this work such methods can be interpreted as defining the interventional policy $\pol$. The effect of interventions is only captured by these methods if the network formation process follows their assumptions. Otherwise, one needs to estimate a method's effects by extensively performing experiments or, as we propose here, learning to estimate the effects from a few experiments (probes) using the method as our policy $\pol$.
\\ 

\noindent
{\em Inner factors literature.} Recent works have proposed to merge matrix factorization and interventional data in recommender systems~\cite{wang2020causal,bonner2018causal,xu2021causal}. In all of such works, both the interventional policy and the outcomes of interventions are given by inner (latent) factor models. What is missing in these methods is a causal structure where the link creation reacts to the current state of the graph (path-dependent models). It is understandable that these existing works choose not to model such co-dependence since it is unclear how to encode it in a causal model and still have a practical method. Later in this work we show how, under mild assumptions, one can develop a practical causal predictor that is robust to this co-evolution. More recently, the authors of \cite{zhao2022learning} proposed to learn the essential factors determining the existence of links by asking the question ``would the link still exist had the graph structure been different?''. Note how this is essentially a different causal question than the one we approach in this work (see \Cref{eq:cf}).
\\ 

\noindent
{\em Recommendations as treatments literature.} Estimating the effect of recommendations, an application of \Cref{eq:int}, has recently gained traction in the literature denoted as {\em recommendations as treatments}~\cite{joachims2021recommendations}. Under this framework, we can see $\pol$ as a (probabilistic) recommendation algorithm and the expected value of \Cref{eq:int} as its effectiveness, \textit{i.e.}, a higher expected value corresponds to links with higher probability of recommendation being formed. Solutions to estimate such effectiveness often comes in two flavors: Online and off-line A/B testing. \textit{Online A/B testing}, or simply A/B testing, is the most widely used method to evaluate recommender systems in industry. In this context, A/B testing can be seen as Randomized Controlled Trials (RCTs), where one part of the population (graph) is probed according to $\pol$ while the another is not, \textit{i.e.}, it is the control group. Unfortunately, this method is often expensive and time consuming, while possibly exposing the population to unwanted risks. To avoid running into these problems, methods under the umbrella of \textit{off-line A/B testing} propose to evaluate $\pol$ based on past user experience data, \textit{i.e.}, without running new experiments. The essence of these methods lies in the use of inverse propensity scores~\cite{joachims2021recommendations}, which re-weights the importance of previous interventions according to the policy used at the time and a new evaluated policy. Although such methods do not explicitly encode causal modeling assumptions, they implicitly make strong causal assumptions: Both the graph evolution process and the effect of probing into it are time-homogeneous, \textit{i.e.}, the probability of a link being created, under a probe or not, is the same at any point in time. Furthermore, all previous probes are observed, \textit{i.e.}, we have all past interventional and observational data. In this work, we consider a more general scenario where we only have past observational data and no assumptions about time homogeneity in the past.
\\ 

\noindent
{\em Causal effect estimation on networks.} Another relevant set of works aims at estimating the difference in potential outcomes of two treatments given in a network ---also known as causal effect estimation. In this scenario, a given treatment might not just affect the treated individual, but also their neighbors. Recent work in \cite{song2022estimating} formulates the problem as a multi-task one, which is then solved by a GNN-based framework. The authors of \cite{guo2020learning} propose to use the network's structure to control confounding bias and learn individual treatment effects. More recently, the work of \cite{ma2022learning} focuses on hypergraphs, in order to represent group interactions, and learns how to control for confounders and model higher-order interactions to estimate treatments' effects. All of such works fundamentally differ from ours for a main reason: They do not intervene in the graph structure, but rather on the nodes of a given graph. To the best of our knowledge, \cite{sherman2020intervening} is the only method that investigates interventions on the graph structure, but, differently to our approach, it aims to study the effect of changes resulting from creating or damaging network ties, rather than probing into relations.

\section{Causal Lifting}\label{sec:lift}

We commence our discussion by presenting one of our key contributions: {\em Causal lifting}, a general concept of independent interest beyond link prediction. {\em Causal lifting} is a causal extension to the associational definition of lifting in probabilistic inference~\cite{van2015lifted,poole2003first}. Here, instead of defining symmetries in the associational distribution, we define them in different layers of Pearl's Causal Hierarchy (associational, interventional and counterfactual). Let us first recall the classical definition of lifting in associational distributions.

\begin{definition}[Associational lifting~\cite{van2015lifted,poole2003first}]\label{def:assoclift} Let $X$ be our random variable of interest and $\ecal{G}$ a group where $\cdot$ is the left action of $\ecal{G}$ onto $\supp(X)$ (i.e., the function $\cdot: \ecal{G} \times \supp(X) \to \supp(X)$ satisfies the following axioms: (i) $e \cdot x =x$, where $e \in \ecal{G}$ is the identity element; (ii) ${g \cdot (h \cdot x)=(g h)\cdot x}$, where $g,h \in \ecal{G}$).
We say that $\ecal{G}$ lifts the associational distribution of $X$ if, $\forall x \in \supp(X), \forall g \in \ecal{G},$
\begin{equation} P\Big( X = x \Big) = P\Big( X = g \cdot x \Big).
\end{equation}
\end{definition}
%
Now, since an intervention defines an interventional distribution (see \Cref{sec:background}), we can directly extend \Cref{def:assoclift} to interventional distributions in \Cref{def:intlift}.

\begin{definition}[Interventional lifting]\label{def:intlift} Consider $X$ and $\ecal{G}$ as in \Cref{def:assoclift}, while $Y$ is a target random variable of interest. We say that $\ecal{G}$ lifts the interventional distribution of $X$ and $Y$ if, $\forall x \in \supp(X), \forall g \in \ecal{G},$
\begin{equation} P\Big( Y (X =  x) \Big) =  P\Big( Y (X = g \cdot x ) \Big). \end{equation}
\end{definition}

Finally, following \Cref{def:intlift} it is also straightforward to extend lifting to counterfactual distributions as we do in \Cref{def:cflift} below.

\begin{definition}[Counterfactual lifting]\label{def:cflift} Consider $X,Y$ and $\ecal{G}$ as in \Cref{def:intlift}. We say that $\ecal{G}$ lifts the counterfactual distribution of $X$ and $Y$ if, $\forall x,x' \in \supp(X), \forall g \in \ecal{G},$
\begin{equation} P\Big( Y (X =  x) \mid X=x' \Big) =  P\Big( Y (X = g \cdot x )\mid X=x' \Big).
\end{equation}
\end{definition}

\Cref{def:assoclift} is used in probabilistic inference algorithms~\cite{van2015lifted,poole2003first} to avoid unnecessary computations: One can replace the need to estimate $\setsize{\ecal{G}}$ quantities in the marginal probability $ \sum_{g \in \ecal{G}} P\big( X = g \cdot x \big)$ by estimating a single quantity $P\big( X = x \big)$. While in probabilistic inference we are interested in efficiently computing associational distributions, in causal inference our main challenge is to \textit{identify a causal quantity}: 
Without causal lifting, our data may not be enough to answer the causal query.
Next, we show how interventional lifting can play a key role in identifying causal link prediction tasks.

\section{Interventional Lifting for Link Prediction}

Our goal in this section is to show how interventional lifting can serve as an identification strategy and experimental design tool for the causal link prediction task (\Cref{eq:cf-multi}). To provide the reader with the basic tools used in our solution, in \Cref{sec:universal} we introduce  a universal family of structural causal models (SCMs), and 
in \Cref{sec:our-lift} we describe the needed invariance assumptions in the SCM mechanisms that allow us to apply interventional lifting for link prediction. 
Then, \Cref{sec:single} presents our main result (\Cref{thm:main}) in the context of our idealized single probe task (\Cref{eq:cf}) with $\IJ$ and $\UV$ as isomorphic node pairs.
Finally, in \Cref{sec:learn,sec:beyondprobes} we show extra causal mechanism assumptions that are sufficient to learn the general task of \Cref{eq:cf-multi} with multiple probes.

\subsection{A universal family of causal models for graphs}
\label{sec:universal}
Our work considers an underlying causal model that generates edges (together with edge and node attributes) and nonedges sequentially, \textit{i.e.} at time step $t \in \N$ it decides whether an specific pair of nodes $\ij$ has an edge (and its value) or not (node attributes are recorded in the pairs $(i,i)$, $i \in \Vt{t_0}$). The causal mechanism deciding which pair of nodes will be assigned an edge or a nonedge at time $t$ is denoted by $\fe{t}$ while the mechanism deciding whether it is an edge vs.\ nonedge is given by $\fx{t}$. Note that such a model can generate attributed graphs by outputting the edge attribute with $\fx{t}$. Generally speaking, both mechanisms take as input the sequence of all previously generated edges and nonedges, \textit{i.e.}, a path-dependent causal model execution. At observation time ($t_0$) the (observed) node identifiers are uniformly permuted from the hidden true identifiers and, if the graph is undirected, $\a{t_0}{}$ is symmetrized. In \Cref{fig:scm-ex} we can see a running example of such an SCM generating a graph of four nodes. 
\\~\\
\paragraph{Path-dependency} Note that since the mechanisms $\fe{t}, \fx{t}$ generating (non)edges at any time $t$ take as input all previously generated variables, a causal model $\m$ in this class can be path-dependent. We say that $\m$ \emph{can be} path-dependent since we are not specifying its mechanisms, thus it might be that $\fe{t}, \fx{t}$ ignore the input and use, \textit{e.g.}, a fixed set of inner factors. This set of inner factors might also be dependent on $t$, which implies that $\M$ contains both path-dependent and (temporal) inner factor models. As we will see in \Cref{sec:our-lift}, our mechanism assumptions are not implying independence of previously generated variables, thus the class of causal models we finally consider remains able to represent both path-dependent and (temporal) inner factor models.
\\~\\
The set of SCMs $\M$ denotes the class of all SCMs taking this form, with any mechanisms ($\fx{t}, \fe{t}$) and exogenous variable distributions. \Cref{def:g-scm,def:g-obs} in \Cref{sec:scm}  introduce a formal and complete description of the above family of SCMs. 
We now show that $\M$ is a universal family of exchangeable graph models, that is, any exchangeable distribution over (countable) graphs can be obtained by some SCM $\m \in \M$.
\begin{restatable}[Universality of our graph SCM]{thm}{zeromainthm}
\label{thm:main0}
Let $\M$ be the family of SCMs as defined in \Cref{def:g-scm,def:g-obs} in \Cref{sec:scm} and $\domA$ be the domain of the entries of the adjacency matrices of the graphs generated by it. Then,
\begin{itemize}[leftmargin=*]
    \item[i.] For every SCM $ \m \in \M$ at an arbitrary observation time $t_0 \geq 0$, $\m$ always generates observed graphs $\G{t_0}$ where $P(\A{t_0}{}=a)=P(\A{t_0}{}=a')$ for any two isomorphic graphs with adjacencies $a, a' \in \domA$;
    \item[ii.] For all finite (jointly) exchangeable graph distributions $P(\A{t_0}{})$, if $\domA$ is a countable set there exists an SCM $\m \in \M$ and an observation time $t_0 \geq 0$ that induces it.
\end{itemize}
\end{restatable}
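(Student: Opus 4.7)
The two parts are essentially dual: part (i) is a symmetrization statement about the class $\M$, while part (ii) is a realizability statement showing the class is rich enough to match any exchangeable target. The overall plan is to handle (i) by exploiting the single permutation step introduced at observation time, and to handle (ii) by exhibiting an explicit ``lookup-table'' SCM that samples the whole graph via one exogenous variable and then reveals it in a canonical order.

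For part (i), the key observation is that by \Cref{def:g-scm,def:g-obs}, the observed adjacency $\A{t_0}{}$ is obtained by first running the sequential (non)edge-generation process with mechanisms $(\fe{t}, \fx{t})_{t \le t_0}$ to produce some ``hidden'' adjacency $\tilde{A}$, and then relabelling its node identifiers by a uniformly random permutation $\Pi$ (followed by symmetrization if undirected). For any two isomorphic adjacencies $a, a'$ with $a' = \pi_0 \cdot a$, I would condition on $\tilde A$ and perform the change of variables $\Pi \mapsto \pi_0^{-1}\Pi$ inside the sum over $\Pi$; since $\Pi$ is uniform on the symmetric group, this change of variables is distribution-preserving, giving $P(\A{t_0}{} = a) = P(\A{t_0}{} = a')$. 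The argument is indifferent to what the mechanisms $\fe{t}, \fx{t}$ actually compute, so it applies uniformly across all of $\M$ and covers path-dependent models as well.

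For part (ii), given a finite exchangeable target $P$ over adjacencies in $\domA^{n \times n}$ with $\domA$ countable, I would construct an SCM $\m^\star \in \M$ as follows. Introduce a single exogenous variable $U_0$ whose distribution equals $P$ itself, and let $A^\star = A^\star(U_0)$ denote the resulting sample; this is a well-defined countable-valued random element because $\domA$ and the index set are countable/finite. Choose $\fe{t}$ to enumerate node pairs $(i,j)$ in a fixed canonical (e.g.\ lexicographic) order independently of all exogenous noise, and let $\fx{t}$ simply read off the corresponding entry of $A^\star$ from the pre-sampled adjacency. Take $t_0$ equal to the total number of pair slots. The observation step then applies the uniform permutation $\Pi$ (and symmetrization, if relevant); by exchangeability of $P$, the permuted sample is again distributed as $P$, so $\m^\star$ induces the target at time $t_0$.

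The main obstacle I anticipate is not probabilistic but formal: verifying that the ``lookup-table'' construction genuinely fits the signature prescribed by \Cref{def:g-scm,def:g-obs} rather than the informal prose description. In particular, I need to check that routing the shared noise $U_0$ into every $\fx{t}$ is permitted by the SCM template and does not violate whatever measurability, acyclicity, or history-dependence conditions are built into the class. A minor secondary subtlety is bookkeeping: ensuring self-loops $(i,i)$ encoding node attributes and off-diagonal pairs are enumerated consistently, and that the final permutation acts compatibly on both. Neither obstacle requires deep probabilistic machinery such as Aldous--Hoover; that would only enter in a hypothetical extension to infinite exchangeable arrays, which is explicitly outside the scope of this statement.
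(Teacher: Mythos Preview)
Your proposal is correct and follows essentially the same approach as the paper's proof. For (i), both you and the paper exploit the uniform permutation at observation time via a change of variables $\pi \mapsto \pi_0^{-1}\pi$ on the symmetric group; for (ii), both constructions sample the entire target graph through a single exogenous seed and then read off entries in a fixed deterministic order, with the final permutation step and the assumed exchangeability of $P$ ensuring the induced law matches $P$. The only cosmetic difference is that the paper encodes the seed as a uniform variable on $[0,1)$ partitioned by an enumeration of $\domA^{n\times n}$ (an inverse-CDF construction), whereas you take $U_0 \sim P$ directly; your version is arguably cleaner, but neither buys anything the other does not. Your flagged concern about fitting the template is warranted but resolvable: since $\Ux{t}$ may depend on all previous $\Ux{m}$, you can set $\Ux{1}=U_0$ and propagate it deterministically, and a lexicographic pair enumeration respects the ``at most one new node per step'' constraint on $\fe{t}$.
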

\begin{figure}[ht]
\begin{center}
\tikzstyle{vertex}=[font=\scriptsize, draw, fill=white, circle, minimum size=20pt,inner sep=1pt, align=left]
\tikzstyle{selected vertex} = [draw, vertex, fill=green!24]
\tikzstyle{edge} = [draw,thick]
\tikzstyle{weight} = [font=\small, align=center, above, sloped, inner sep=2pt, pos=0.3]
\tikzstyle{selected edge} = [draw,line width=5pt,-,green!50]
\tikzstyle{ignored edge} = [draw,line width=5pt,-,black!20]
\usetikzlibrary{matrix,calc}
\pgfdeclarelayer{bottom}
\pgfdeclarelayer{middleb}
\pgfdeclarelayer{middlet}
\pgfdeclarelayer{top}
\pgfsetlayers{bottom,middleb,main,middlet,top}

\begin{tikzpicture}[scale=3.2]
\begin{pgfonlayer}{top}

\node[vertex](E1) at (0,0) {$\E{1}=$\\$\mathcolorbox{red!10}{(1,1)}$};
\node[vertex](E2) at (0.5,0) {$\E{2}=$\\$\mathcolorbox{orange!10}{(1,2)}$};
\node[vertex](E3) at (1,0) {$\E{3}=$\\$\mathcolorbox{green!10}{(3,1)}$};
\node[vertex](E4) at (1.5,0) {$\E{4}=$\\$\mathcolorbox{yellow!30}{(1,4)}$};
\node[vertex](E5) at (2,0) {$\E{5}=$\\$\mathcolorbox{babyblueeyes}{(3,2)}$};
\node[vertex](E6) at (2.5,0) {$\E{6}=$\\$\mathcolorbox{blue!10}{(2,4)}$};
\node[vertex](X1) at (0,-0.85) {$\X{1}=\mathcolorbox{red!10}{0}$};
\node[vertex](X2) at (0.5,-0.85) {$\X{2}=\mathcolorbox{orange!10}{0}$};
\node[vertex](X3) at (1,-0.85) {$\X{3}=\mathcolorbox{green!10}{1}$};
\node[vertex](X4) at (1.5,-0.85) {$\X{4}=\mathcolorbox{yellow!30}{1}$};
\node[vertex](X5) at (2,-0.85) {$\X{5}=\mathcolorbox{babyblueeyes}{1}$};
\node[vertex](X6) at (2.5,-0.85) {$\X{6}=\mathcolorbox{blue!10}{1}$};

\begin{pgfonlayer}{bottom}
\path[edge] (E1) edge [bend left=0] (E2);
\path[edge] (E1) edge [bend left=30] (E3);
\path[edge] (E1) edge [bend left=30] (E4);
\path[edge] (E1) edge [bend left=30] (E5);
\path[edge] (E1) edge [bend left=30] (E6);

\path[edge] (E3) edge [bend left=30] (E5);
\path[edge] (E3) edge [bend left=30] (E6);

\path[edge] (E2) -> (E3);
\path[edge] (E2) edge [bend left=30] (E4);
\path[edge] (E2) edge [bend left=30] (E5);
\path[edge] (E2) edge [bend left=30] (E6);

\path[edge] (E3) -> (E4);
\path[edge] (E3) edge [bend left=30] (E5);
\path[edge] (E3) edge [bend left=30] (E6);

\path[edge] (E4) -> (E5);
\path[edge] (E4) edge [bend left=30] (E6);

\path[edge] (E5) -> (E6);

\path[edge] (X1) -> (X2);
\path[edge] (X1) edge [bend right=30] (X3);
\path[edge] (X1) edge [bend right=30] (X4);
\path[edge] (X1) edge [bend right=30] (X5);
\path[edge] (X1) edge [bend right=30] (X6);

\path[edge] (X2) -> (X3);
\path[edge] (X2) edge [bend right=30] (X4);
\path[edge] (X2) edge [bend right=30] (X5);
\path[edge] (X2) edge [bend right=30] (X6);

\path[edge] (X3) -> (X4);
\path[edge] (X3) edge [bend right=30] (X5);
\path[edge] (X3) edge [bend right=30] (X6);

\path[edge] (X4) -> (X5);
\path[edge] (X4) edge [bend right=30] (X6);

\path[edge] (X5) -> (X6);

\path[edge] (E1) -> (X1);
\path[edge] (E1) -> (X2);
\path[edge] (E1) -> (X3);
\path[edge] (E1) -> (X4);
\path[edge] (E1) -> (X5);
\path[edge] (E1) -> (X6);

\path[edge] (E2) -> (X2);
\path[edge] (E2) -> (X3);
\path[edge] (E2) -> (X4);
\path[edge] (E2) -> (X5);
\path[edge] (E2) -> (X6);

\path[edge] (E3) -> (X3);
\path[edge] (E3) -> (X4);
\path[edge] (E3) -> (X5);
\path[edge] (E3) -> (X6);

\path[edge] (E4) -> (X4);
\path[edge] (E4) -> (X5);
\path[edge] (E4) -> (X6);

\path[edge] (E5) -> (X5);
\path[edge] (E5) -> (X6);

\path[edge] (E6) -> (X6);

\path[edge] (X1) -> (E2);
\path[edge] (X1) -> (E3);
\path[edge] (X1) -> (E4);
\path[edge] (X1) -> (E5);
\path[edge] (X1) -> (E6);

\path[edge] (X2) -> (E3);
\path[edge] (X2) -> (E4);
\path[edge] (X2) -> (E5);
\path[edge] (X2) -> (E6);

\path[edge] (X3) -> (E4);
\path[edge] (X3) -> (E5);
\path[edge] (X3) -> (E6);

\path[edge] (X4) -> (E5);
\path[edge] (X4) -> (E6);

\path[edge] (X5) -> (E6);

\end{pgfonlayer}

\node[vertex, fill=black!10, font=\normalsize](1) at (3,0.19) {$1$};
\node[vertex, fill=black!10, font=\normalsize](2) at (3.33,0.19) {$2$};
\node[vertex, fill=black!10, font=\normalsize](3) at (3,-0.14) {$3$};
\node[vertex, fill=black!10, font=\normalsize](4) at (3.33,-0.14) {$4$};
\node (rect) at (3.16,-0.27) {$\G{6}$}; 
\path[edge,-] (1) -- (2);
\path[edge,-] (1) -- (3);
\path[edge,-] (2) -- (4);
\path[edge,-] (3) -- (4);

\matrix (m) [matrix of nodes,
             nodes={draw,minimum size=1em, outer sep=0pt,inner sep=0,line width=0.5pt},
             nodes in empty cells,column sep=0.0em, row sep=0.0em, fill=white,
              ] at (3.18, -0.77)
{
    $\mathcolorbox{black!10}{0}$ & $\mathcolorbox{black!10}{1}$ & $\mathcolorbox{black!10}{1}$ & $\mathcolorbox{black!10}{0}$  \\
    $\mathcolorbox{blue!10}{1}$ & $\mathcolorbox{black!10}{0}$ & $\mathcolorbox{black!10}{0}$ & $\mathcolorbox{black!10}{1}$ \\
    $\mathcolorbox{yellow!30}{1}$ & $\mathcolorbox{orange!10}{0}$ & $\mathcolorbox{red!10}{0}$ & $\mathcolorbox{black!10}{1}$ \\
    $\mathcolorbox{black!10}{0}$ & $\mathcolorbox{babyblueeyes}{1}$ & $\mathcolorbox{green!10}{1}$ & $\mathcolorbox{black!10}{0}$ \\
};

\node (rect) at (3.16,-1.25) {$\a{6}{}$}; 

\node[draw,align=left, dashed] (rect) at (3.16,0.47) {Possible hidden \\ causal dependencies  \Warning };

\node[draw,align=left](rect) at (0.33,0.7) {Observation time $t_0 = 6$ \\ 
                                        Permutation $\pi = ( 3,2,4,1) $\\
                                        Undirected $= \mathtt{True}$}; 
\end{pgfonlayer}

\begin{pgfonlayer}{bottom}
\node[vertex, fill=black!10, font=\normalsize](a) at (2.2,0.7) {$\a{6}{}$};
\path[edge] (rect) -- (a);
\path[edge] (E1) -- (a);
\path[edge] (E2) -- (a);
\path[edge] (E3) -- (a);
\path[edge] (E4) -- (a);
\path[edge] (E5) -- (a);
\path[edge] (E6) -- (a);
\path[edge] (X1) -- (a);
\path[edge] (X2) -- (a);
\path[edge] (X3) -- (a);
\path[edge] (X4) -- (a);
\path[edge] (X5) -- (a);
\path[edge] (X6) -- (a);
\end{pgfonlayer}

\end{tikzpicture}
\end{center}
\caption{ We show the execution of an SCM as described in \Cref{sec:scm} where, together with the observation parameters $t_0=6$ and $\pi=(3,2,4,1)$, generate the observed graph $\G{6}$ with adjacency matrix $\a{6}{}$. We color the entries in $\a{6}{}$ according to the corresponding variables in the unobserved SCM that generated them.  Gray nodes represent observed random variables, while white nodes represent unobserved ones. Note that for simplicity of exposition we omit the exogenous variables in the causal model.\label{fig:scm-ex}}
\end{figure}
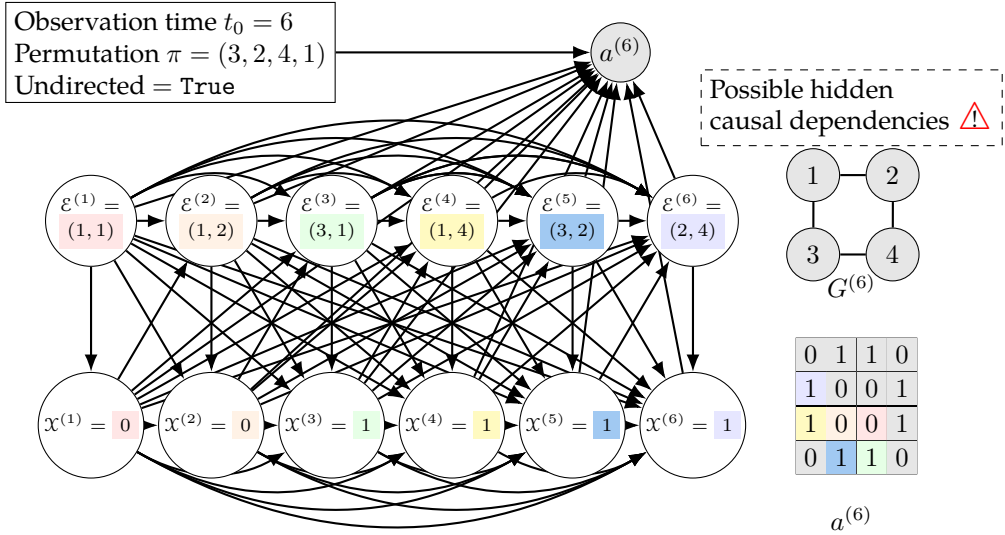

\subsection{Sufficient mechanism invariances for interventional lifting in link prediction}\label{sec:our-lift} 
After introducing a universal family of causal graph models $\M$ in \Cref{sec:universal}, we now restrict this family with four causal modeling assumptions that will allow us to apply interventional lifting in our causal link prediction task. We highlight how these are not causal independence (structural) assumptions as it is commonly assumed in causal identification procedures. Instead, we define four mechanism invariances present in the underlying SCM of the task. The mechanism invariance assumptions are a key feature of our work since, by avoiding causal independence assumptions, our models consider possible path dependencies. These invariances are presented informally in the main text; we refer readers to \Cref{sec:link-lift} for a corresponding formal mathematical description.

\begin{assumption}[Time gap ignorability (informal)]\label{ass:time-igno}
 We say that our SCM satisfies time gap ignorability if the mechanism $\fx{t_1}$ is invariant to the SCM intermediate states between the time the intervention probe is performed $t_0$ and the instant before we see its effect  in $t_1$.
\end{assumption}

\begin{assumption}[Time exchangeability (informal)]\label{ass:time-exc}
 We say that our SCM satisfies time exchangeability if the mechanism $\fx{t_1}$ is invariant to the order in which edges and nonedges have been generated.
\end{assumption}

\begin{assumption}[Non-link ignorability (informal)]\label{ass:link}
We say that our SCM satisfies non-link ignorability if the mechanism $\fx{t_1}$ is invariant to which pairs of nodes were generated as non-links or were not generated at all at time $t_0$.
\end{assumption}

\begin{assumption}[Identifier exchangeability (informal)]\label{ass:id-exc}
We say that our SCM satisfies identifier exchangeability if the mechanism $\fx{t_1}$ is invariant to permutations of the node identifiers.
\end{assumption}

We now discuss how the above mechanism invariance assumptions induce a simplified causal DAG.
First, note that $t_1 \geq t_0 + 1$, which means that the observation time ($t_0$) and the time we see the effect of the probe ($t_1$) are not necessarily consecutive (in the model execution time).
In theory, links and non-links generated between the observation of the graph and the probe could influence the outcome of the probe. 
Since, by having only access to $\G{t_0}$, we cannot account for the graph evolution in the interval $(t_0,t_1)$, we need time gap ignorability (\Cref{ass:time-igno}) ---which assumes that the outcome of the probe is invariant to whatever happened between $t_0$ and the moment before $t_1$. This assumption is pervasive in causal inference since it is not possible to identify causal queries without observing the true distribution of outcomes~\cite{brand200711}.

Time exchangeability (\Cref{ass:time-exc}) is the assumption that the order in which links and non-links have been created until time $t_0$ does not affect the probe. This is necessary because we only observe a static graph $\G{t_0}$ at time $t_0$. Note that if the order is an important aspect of the task, \Cref{ass:time-exc} would still hold if we represent $\G{t_0}$ as a temporal graph (with time stamps as edge attributes in the static graph $\G{t_0}$, see \cite{Gao+2022}).

\Cref{ass:link} is important since at $t_0$ we are not aware of whether an observed non-link is indeed a non-link generated by the causal model or a node pair not-yet-executed, hence we need the outcome of the probe to be invariant to this difference. 
By assuming non-link ignorability (\Cref{ass:link}), we do not need to distinguish not-yet-executed from non-links. For instance, in recommender systems \Cref{ass:link} makes the simplifying assumption that new purchases are causally influenced by past purchases, not by what one has been exposed but chose not to buy. 

It is important to note how \Cref{ass:id-exc} is not equivalent to $\m$ being an exchangeable SCM (\Cref{thm:main0} i.). The mechanism $\fx{t_1}$ can use the node identifiers as input and because they are shuffled by $\pi$ the observed graph is finite exchangeable regardless of $\fx{t_1}$ or any of its mechanisms. Thus, \Cref{ass:id-exc} guarantees that not only the observational distribution is finite exchangeable, but also that its graph generating process also is exchangeable to node ids (at time $t_1$). This assumption is a reasonable one, but {\em if we believe identifiers are relevant to the causal model mechanisms we should use them as node features, and then \Cref{ass:id-exc} still holds}.

Finally, we discuss the concept of i.i.d.\ exogenous variables between node pairs in an SCM $\m \in \M$. Independent and identical exogenous variables are central to identify our counterfactual query. The exogenous variables can be interpreted as the unknown context of a node pair. Then, the i.i.d.\ assumption between two node pairs holds if we believe they belong to different, non-interfering, but identical contexts. For instance, in a video streaming platform a pair containing a person and a movie and another isomorphic pair containing both a person and a movie from a different geographic location are likely to have both independent and identical contexts. Later, we will assume this for subsets of node pairs, \textit{i.e.}, we never require that this is true for every node pair in $\G{t_0}$.

\subsection{Causal link prediction with single probe lifting}
\label{sec:single}
Now we are ready to describe how interventional lifting can allow us to answer our counterfactual query of \Cref{eq:cf}.
We say that two node pairs $\ij, \uv$ from $\G{t_0}$ are isomorphic if there exists a permutation $\pi \in \aut{\G{t_0}}$ in the automorphism group of $\G{t_0}$ such that $(u,v)=\pi \cdot (i,j)$, \textit{i.e.}, $\ij$ and $\uv$ are structurally indistinguishable in $\G{t_0}$. 
Having this notion in mind, under certain conditions we will show we can obtain the interventional lifting (\Cref{def:intlift})
\begin{equation}\label{eq:link-lift}
\begin{split}
    P\Big(\A{t_1}{\E{t_1}} \Big( \E{t_1} = \IJ \Big) \mid  \G{t_0}\Big) = P\Big( \A{t_1}{\E{t_1}} \Big( \E{t_1} = \pi \cdot \IJ \Big) \mid  \G{t_0}\Big),
    \forall \pi \in \aut{\G{t_0}}.
\end{split}
\end{equation}
In words, the above equation states that node pairs isomorphic in $\G{t_0}$ (\textit{i.e.}, pairs indistinguishable without node ids) must have the same distribution of probe outcomes. Intuitively, this means that under certain invariance conditions on the causal mechanisms, the observed graph $\G{t_0}$ is sufficiently expressive of the causal link formation process. In fact, as we state in \Cref{thm:main}, these conditions are precisely the ones discussed in \Cref{sec:our-lift}. 
\Cref{thm:main} shows that if $\uv$ is in $\ij$'s orbit in $\G{t_0}$, under the causal mechanism invariance conditions from \Cref{ass:time-igno,ass:time-exc,ass:link,ass:id-exc} and identical exogenous variables, the probe outcome $\Y{ij}$ has the same distribution as the probe outcome $\Y{uv}$ (conditioned on $\G{t_0}$). That is, the invariances in causal mechanisms coupled with identical exogenous variables ensure interventional lifting, \textit{i.e.}, identical distributions in the orbit. The lifting is better observed in the equivalent causal DAG shown in \Cref{fig:gamma-scm}. We then leverage i.i.d.\ exogenous variables to ensure independence between probes in $\ij$ and in $\uv$ to then build an estimator in \Cref{col:estimator}.
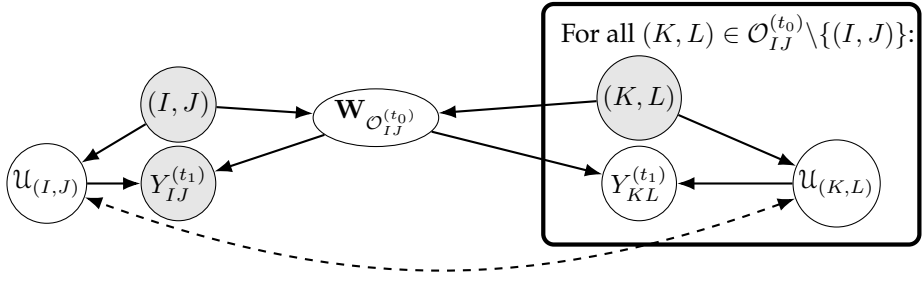
\begin{figure}[ht]
\begin{center}
\tikzstyle{vertex}=[font=\normalsize, draw, fill=white, circle, minimum size=20pt,inner sep=1pt, align=left]
\tikzstyle{selected vertex} = [draw, vertex, fill=green!24]
\tikzstyle{edge} = [draw,thick]
\tikzstyle{weight} = [font=\small, align=center, above, sloped, inner sep=2pt, pos=0.3]
\tikzstyle{selected edge} = [draw,line width=5pt,-,green!50]
\tikzstyle{ignored edge} = [draw,line width=5pt,-,black!20]
\pgfdeclarelayer{bottom}
\pgfdeclarelayer{middleb}
\pgfdeclarelayer{middlet}
\pgfdeclarelayer{top}
\pgfsetlayers{bottom,middleb,main,middlet,top}
\begin{tikzpicture}[scale=1.75]
\node[vertex,fill=black!10](kl) at (0,-0.35) {$(K,L)$};
\node[vertex,fill=black!10](ij) at (-3.5,-0.4) {$\IJ$};
\node[draw, minimum height=3.2cm,minimum width=5cm, ultra thick, rounded corners] at (0.7,-0.55) {};
\node[] at (.75,0.15) {For all $(K,L) \in \ORB{IJ} \backslash \{\IJ\} $: };
\node[vertex,ellipse, minimum size=20pt](W) at (-2,-0.5) {$\bW_{\ORB{IJ}}$};
\node[vertex](U) at (1.5,-1.) {$\ecal{U}_{(K,L)}$};
\node[vertex](Uij) at (-4.5,-1.) {$\ecal{U}_{\IJ}$};
\node[vertex](A) at (0,-1.) {$\Y{KL}$};
\node[vertex,fill=black!10](Aij) at (-3.5,-1.) {$\Y{IJ}$};
\path[edge] (kl) -> (W);
\path[edge] (ij) -> (W);
\path[edge] (ij) -> (Uij);
\path[edge] (kl) -> (U);
\path[edge] (U) -> (A);
\path[edge] (Uij) -> (Aij);
\path[edge] (W) -> (A);
\path[edge] (W) -> (Aij);
\path[bidirected] (U) edge[bend left=20, thick]  (Uij);
\end{tikzpicture}






\caption{(\Cref{thm:main}(i)) Causal DAG of an equivalent data generating process of a probe in $(i,j)$ (\textit{left}) and in its orbit (\textit{right}). As usual, we represent observed and unobserved variables with gray and white nodes respectively.\label{fig:gamma-scm}}
\end{center}
\end{figure}

\begin{restatable}[Invariances for interventional lifting in link prediction]{thm}{mainthm}
\label{thm:main}
Let $\M$ be the family of SCMs as defined in \Cref{def:g-scm,def:g-obs} in \Cref{sec:scm} and $\domA$ be the domain of the entries of the adjacency matrices of the graphs generated by it. Then,
\begin{itemize}[leftmargin=*]
    \item[i.] if $\m \in \M$ has the mechanism invariances described in \Cref{ass:time-igno,ass:time-exc,ass:link,ass:id-exc} and exogenous variable independence at time $t_0+1$, \textit{i.e.}, $\Ux{t_0+1} \indep (\Ux{t})_{t=1}^{t_0} \mid \IJ $ in \Cref{def:g-scm}.
    Then, the effect of an intervention in $\m$ can be equivalently described by \Cref{fig:gamma-scm}'s causal DAG, where $\IJ \sim \pol$ is the node pair we intervene in, $\ORB{IJ} := \{ \pi \cdot \IJ \colon \pi \in \aut{\G{t_0}} \} $ is the set of all structurally indistinguishable pairs to $\IJ$ in $\G{t_0}$, and $\bW_{\ORB{IJ}}$ is a latent variable tied to $\ORB{IJ}$ (the orbit of $\IJ$ in $\G{t_0}$).
    \item[ii.] Under the conditions in (i) and assuming the extra symmetry $P(\U_{\IJ})=P(\U_{\pi \cdot \IJ})$ in \Cref{fig:gamma-scm}'s DAG, we have the following interventional lifting result (\Cref{def:intlift}), where $\forall \pi \in \aut{\G{t_0}}$
    \begin{align*}
    P\left(\Y{IJ}\right) =  P\Big(\A{t_1}{\E{t_1}} \Big( \E{t_1} = \IJ \Big) \mid  \G{t_0}\Big) &= P\Big( \A{t_1}{\E{t_1}} \Big( \E{t_1} = \pi \cdot \IJ \Big) \mid  \G{t_0}\Big) \\&= P\left(\Y{\pi \cdot \IJ}\right).
    \end{align*}
\end{itemize}
\end{restatable}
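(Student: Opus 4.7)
\textbf{Proof plan for \Cref{thm:main}.}
The plan is to prove (i) by unrolling the SCM mechanism $\fx{t_1}$ and applying the four invariance assumptions one at a time to collapse its effective dependence down to an orbit-level latent together with a pair-specific exogenous variable, and then to deduce (ii) as an immediate corollary of the extra exogenous symmetry via the DAG produced in (i).

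I would begin by writing $\A{t_1}{\E{t_1}}\bigl(\E{t_1}=\IJ\bigr) = \fx{t_1}(\text{history},\Ux{t_1})$, where the ``history'' encodes every $\E{t},\X{t}$ for $t\le t_0$ together with the intermediate SCM variables strictly between $t_0$ and $t_1$ and the probe value $\E{t_1}=\IJ$. The goal is to show that, conditional on $\G{t_0}$, the output distribution depends on $\IJ$ only through its orbit $\ORB{IJ}$. The first reduction uses \emph{time gap ignorability} (\Cref{ass:time-igno}) to erase dependence on everything happening in $(t_0,t_1)$, collapsing the input to the pre-trial history plus $\IJ$. The second, \emph{time exchangeability} (\Cref{ass:time-exc}), lets me replace the ordered sequence of prior $(\E{t},\X{t})$ by its unordered multiset. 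The third, \emph{non-link ignorability} (\Cref{ass:link}), strips from that multiset every non-link (and every not-yet-executed pair), leaving only the observed edges of $\G{t_0}$. The fourth, \emph{identifier exchangeability} (\Cref{ass:id-exc}), says $\fx{t_1}$ commutes with permutations of node identifiers, so any two inputs that differ by a $\pi\in\aut{\G{t_0}}$ (in particular probes at $\IJ$ and at $\pi\cdot\IJ$) feed into $\fx{t_1}$ equivalently up to the exogenous variable.

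Combining these four reductions, $\fx{t_1}$, viewed as a function of $(\IJ,\G{t_0},\Ux{t_1})$, factors through $(\bW_{\ORB{IJ}},\Ux{t_1})$ for some measurable orbit-level latent $\bW_{\ORB{IJ}}$. Invoking $\Ux{t_0+1}\indep(\Ux{t})_{t=1}^{t_0}\mid\IJ$ and relabeling $\Ux{t_0+1}$ as $\U_{(K,L)}$ for the pair being probed, I obtain independent pair-specific exogenous inputs across different pairs in the orbit, while $\bW_{\ORB{IJ}}$ is a common parent shared by all of them. This is precisely the DAG in \Cref{fig:gamma-scm}, which establishes (i). For (ii), that DAG makes $\Y{IJ}$ a deterministic function of $(\bW_{\ORB{IJ}},\U_{\IJ})$ and, because $\pi\cdot\IJ$ lies in the same orbit, $\Y{\pi\cdot\IJ}$ is the \emph{same} function of $(\bW_{\ORB{IJ}},\U_{\pi\cdot\IJ})$; the assumed symmetry $P(\U_{\IJ})=P(\U_{\pi\cdot\IJ})$ then gives equality of the marginals after integrating out $\bW_{\ORB{IJ}}$ and the exogenous variable.

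The main obstacle is the fourth reduction: formally arguing that, after the first three invariances have stripped the input of ordering, non-edges, and intermediate dynamics, what remains is exactly the labeled edge-set of $\G{t_0}$, and that identifier exchangeability then quotients this by $\aut{\G{t_0}}$ so that $\fx{t_1}$ becomes a function of the isomorphism class of $(\G{t_0},\IJ)$, i.e., of $\ORB{IJ}$. One has to be careful because $\fx{t_1}$ could in principle distinguish an edge from its automorphic copy through some residual input channel; verifying that none survives after \Cref{ass:time-igno,ass:time-exc,ass:link} have been applied is the delicate bookkeeping step, and it is what lets me legitimately introduce the single latent $\bW_{\ORB{IJ}}$ rather than a family of pair-indexed latents.
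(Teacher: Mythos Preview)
Your proposal is correct and follows essentially the same route as the paper. The only organizational difference is that the paper bundles \Cref{ass:time-exc,ass:link,ass:id-exc} into a single equivalence relation on histories $\big((\x{t})_{t=1}^{t_0},(\e{t})_{t=1}^{t_0+1}\big)$ and then proves an explicit bijection between the resulting equivalence classes and the set of orbits $\ORB{ij}$, which is precisely the formalization of the step you flag as the ``main obstacle''; your sequential application of the four invariances and the paper's equivalence-class argument amount to the same reduction, and your derivation of (ii) from (i) matches the paper's verbatim.
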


In summary, \Cref{thm:main} outlines a procedure to derive causal modeling conditions in which a probe in $\ij$ can be used as an unbiased estimate of \Cref{eq:cf} on a subset of pairs: $\ij$'s orbit. 
This is an interventional lifting for link prediction (see \Cref{eq:link-lift}). Then, finally, if the exogenous variables of probes in the same orbit $\U_{\ij}$ and $\U_{\uv}$ are i.i.d., we can introduce the following result.
\begin{corollary}[Symmetries for interventional learning]\label{col:estimator}
Under conditions of \Cref{thm:main}, let $(i,j)\in \Vt{t_0} \times \Vt{t_0}$ be an arbitrary node pair and $\uv \in \ORB{ij}$ (i.e., $\uv$ is structurally indistinguishable from $\ij$ in $\G{t_0}$).
Further, assume $\U_{\ij}$ and $\U_{\uv}$ are i.i.d.. Then, by \Cref{thm:main}(ii)'s DAG in \Cref{fig:gamma-scm}
\begin{equation}\label{eq:learning_inter}
P(\Y{uv} \mid \Y{ij}) = \int_{\bW_{\ORB{ij}}} P(\Y{uv} \mid \bW_{\ORB{ij}}) P(\bW_{\ORB{ij}} \mid \Y{ij}) \,d\bW_{\ORB{ij}}.
\end{equation}
If $P(\bW_{\ORB{ij}} \mid \Y{ij})$ has a single mode and low variance, the above equation can be approximated by
$$
P(\Y{uv} \mid \Y{ij}) \approx P(\Y{uv} \mid \bW^\star_{\ORB{ij}}),
$$
where $\bW^\star_{\ORB{ij}} = \argmax_{\bW_{\ORB{ij}}} P(\bW_{\ORB{ij}} \mid \Y{ij})$ is a Maximum A Posteriori (MAP) estimate.
\end{corollary}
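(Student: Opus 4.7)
The plan is to derive the stated factorization by working directly with the causal DAG in \Cref{fig:gamma-scm}, which by \Cref{thm:main}(i) is an equivalent representation of the probe-generating process. First I would observe that the i.i.d.\ assumption on $\U_{\ij}$ and $\U_{\uv}$ eliminates the dashed bidirected edge between them in \Cref{fig:gamma-scm} (since that edge encoded a possible dependence between the two exogenous contexts). What remains is a simple DAG in which $\bW_{\ORB{ij}}$ is a common parent of $\Y{ij}$ and $\Y{uv}$, and where the only other parents are the mutually independent exogenous variables $\U_{\ij}$ and $\U_{\uv}$.

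Next I would apply the law of total probability by marginalizing over the latent $\bW_{\ORB{ij}}$:
\begin{equation*}
P(\Y{uv} \mid \Y{ij}) \;=\; \int P(\Y{uv} \mid \bW_{\ORB{ij}}, \Y{ij})\, P(\bW_{\ORB{ij}} \mid \Y{ij}) \, d\bW_{\ORB{ij}}.
\end{equation*}
The key step is to argue that $\Y{uv} \indep \Y{ij} \mid \bW_{\ORB{ij}}$, which follows from d-separation: after conditioning on $\bW_{\ORB{ij}}$, every path from $\Y{ij}$ to $\Y{uv}$ must pass through $\bW_{\ORB{ij}}$ (now blocked) or through the pair $\U_{\ij}, \U_{\uv}$ (blocked because the bidirected edge is absent under the i.i.d.\ assumption, and the colliders at $\Y{ij}$ and $\Y{uv}$ are not opened by any conditioning on a descendant). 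Substituting $P(\Y{uv} \mid \bW_{\ORB{ij}}, \Y{ij}) = P(\Y{uv} \mid \bW_{\ORB{ij}})$ gives \Cref{eq:learning_inter} exactly.

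For the MAP approximation, I would argue that if $P(\bW_{\ORB{ij}} \mid \Y{ij})$ is unimodal and has low variance, then it is well-approximated by a point mass (Dirac measure) concentrated at its mode $\bW^\star_{\ORB{ij}} = \argmax_{\bW_{\ORB{ij}}} P(\bW_{\ORB{ij}} \mid \Y{ij})$; plugging this point mass into the integral collapses it to $P(\Y{uv} \mid \bW^\star_{\ORB{ij}})$, provided $\bW_{\ORB{ij}} \mapsto P(\Y{uv} \mid \bW_{\ORB{ij}})$ is sufficiently smooth near $\bW^\star_{\ORB{ij}}$ that its variation over the effective support of the posterior is negligible.

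The main obstacle I anticipate is the d-separation argument, specifically verifying that no collider path between $\Y{ij}$ and $\Y{uv}$ gets opened once we condition on $\Y{ij}$ itself. This is delicate because conditioning on $\Y{ij}$ could in principle unblock a collider, but here the only parents of $\Y{ij}$ are $\bW_{\ORB{ij}}$ (already conditioned on) and $\U_{\ij}$ (independent of $\U_{\uv}$ by assumption, with no other directed path into $\Y{uv}$), so no unblocking occurs. Once this is verified carefully from the explicit structure of \Cref{fig:gamma-scm}, the rest is routine probabilistic calculation.
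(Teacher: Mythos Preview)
Your proposal is correct and follows essentially the same approach as the paper. The paper's proof is a one-liner stating that the result ``follows directly from the DAG in \Cref{thm:main} and the backdoor adjustment, see 3.2 in \cite{pearl2009causality},'' whereas you spell out the underlying d-separation argument explicitly: remove the bidirected edge via the i.i.d.\ assumption, marginalize over $\bW_{\ORB{ij}}$, and use the conditional independence $\Y{uv} \indep \Y{ij} \mid \bW_{\ORB{ij}}$ to simplify the integrand---this is precisely what the backdoor-style adjustment amounts to here.
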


\Cref{thm:main} together with \Cref{col:estimator} provide an identification and estimation procedure for causal link prediction. In \Cref{fig:gamma-scm} we depict how the hidden parameters associated with the orbit $\ORB{ij}$ of a node pair $\ij$ in $\G{t_0}$ d-separates the probe from the generating process of $\G{t_0}$ . As such, we can see \Cref{eq:learning_inter} as a sequence of {\em abduction, action, and prediction} (Theorem 7.1.7 in Pearl~\cite{pearl2009causality}), using structural information to compose the admissible set of variables in the task. Note, however, that such a procedure learns from a single probe in an isomorphic node pair. Next, we will extend our causal assumptions to allow us to learn from multiple probes in different orbits. 

\subsection{Lifting in causal link prediction with multiple probes}\label{sec:learn} 
%
\Cref{col:estimator} shows how a probe in $\ij$ can serve as unbiased estimate of the counterfactual query (\Cref{eq:cf}) on a pair $\uv$ in $\ij$'s orbit. 
A single training example (probe), however, is not an ideal learning setting: We have the outcome of a probe in a single pair and we can answer counterfactual queries only about its orbit. Now, we show how to extend our causal assumptions to learn to answer counterfactual queries about \emph{any pair} using \emph{multiple probes} as training data (\Cref{eq:cf-multi}).

We start with the condition that allows us to use multiple probes to estimate the expected value in, for instance, \Cref{col:estimator}: Non-interfering probes (c.f.\ \Cref{def:y-int}, formalized in \Cref{sec:link-lift}). In summary, non-interference in probes means that the outcome of other probes does not interfere in each others' results. This assumption is widely used in causal inference for graph data, see for instance the work of Eckles et al.~\cite{eckles2017design}. In \Cref{def:y-int}, however, we provide ---to the best of our knowledge--- the first formalization of non-interference with respect to the the underlying causal mechanisms of the task. Note that non-interfering probes is an invariance condition depending on both the causal mechanisms and on the pairs $\IJt{m}$'s we probe at each time $t_m$. Regarding the mechanisms, it needs to be taken as a causal modeling assumption. As of how to choose probes that don't interfere with each other, we can treat it as an experimental design choice. For instance, non-interference tends to be satisfied with higher probability if the probes are performed in  pairs far away in the graph~\cite{leung2022causal}. Finally, we refer the reader to \cite{eckles2017design} for a thorough analysis of experimental design choices that result in non-interfering probes.

\begin{assumption}[Non-interfering probes (informal)]\label{def:y-int}
We say that a sequence of probes in $M$ pairs $\big(\IJt{m}\big)_{m=1}^M$ is non-interfering if every mechanism $\fx{t_m} \colon 1 < m \leq M $ is invariant to probes performed between $t_1$ and $t_m-1$.
\end{assumption}

Now, we can extend \Cref{col:estimator} to use $\sY$, where under non-interference, multiple probes in the orbit of $\ij$ can be used to estimate the counterfactual query about any pair $\uv$ also in $\ij$'s orbit.

\begin{corollary}[Symmetries for interventional learning with multiple non-interfering interventions]\label{col:multi-estimator}
Under the same conditions as \Cref{thm:main}(ii), let $\sY := ( \Yt{i}{j}{m} )_{m=1}^M $ be a sequence of outcomes of probes in node pairs in the same orbit ($\ORB{i^{(1)}j^{(1)}}=\ORB{i^{(2)}j^{(2)}}=\ldots=\ORB{i^{(M)}j^{(M)}}$). Then, for $\uv \in \ORB{i^{(1)}j^{(1)}}$, if the exogenous variables $\{ \ecal{U}_{\ijt{m}} \colon m \in [M]\} \cup \{\ecal{U}_{\uv}\}$ are i.i.d.\ and $\sY$ is a sequence of non-interfering interventions, then
\begin{equation}\label{eq:learning_inter_multi}
P(\Y{uv} \mid \sY ) = \int_{\bW_{\ORB{ij}}} P(\Y{uv} \mid \bW_{\ORB{ij}}) P(\bW_{\ORB{ij}} \mid \sY ) \,d\bW_{\ORB{ij}}.
\end{equation}
If $\bW_{\ORB{ij}} \mid \sY$ has low variance, the above equation can be well-approximated by
$$
P(\Y{uv} \mid \sY) \approx P(\Y{uv} \mid \bW^\star_{\ORB{ij}}),
$$
where $\bW^\star_{\ORB{ij}} = \argmax_{\bW_{\ORB{ij}}} P(\bW_{\ORB{ij}}) \prod_{m=1}^M P( \Y{ i^{(m)}j^{(m)} } \mid \bW_{\ORB{ij}}) $ is a MAP estimate.
\end{corollary}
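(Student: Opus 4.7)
My approach is to reduce the claim to two conditional independence relations that follow from extending the DAG of \Cref{thm:main}(ii) (\Cref{fig:gamma-scm}) from one probe to $M$ probes, and then apply Bayes' rule.

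First, I would argue that the assumptions in force---\Cref{ass:time-igno,ass:time-exc,ass:link,ass:id-exc} on the edge-value mechanisms, non-interference (\Cref{def:y-int}), i.i.d.\ exogenous variables across $\{\ecal{U}_{\ijt{m}}\}_{m=1}^M \cup \{\ecal{U}_{\uv}\}$, and the orbit symmetry $P(\ecal{U}_{\ij}) = P(\ecal{U}_{\pi \cdot \ij})$ carried over from \Cref{thm:main}(ii)---together imply that in the effective SCM each probe outcome $\Yt{i}{j}{m}$ has only $\{\bW_{\ORB{ij}}, \ecal{U}_{\ijt{m}}\}$ as parents, and analogously $\Y{uv}$ has parents $\{\bW_{\ORB{ij}}, \ecal{U}_{\uv}\}$. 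Non-interference is exactly what prevents earlier probe outcomes from entering as parents of later probes' mechanisms; the remaining invariances, already deployed in \Cref{thm:main}, justify collapsing the entire pre-probe history into the single orbit-level latent $\bW_{\ORB{ij}}$ (valid because all probes are assumed to lie in a common orbit). The result is the natural plate extension of \Cref{fig:gamma-scm}: $M+1$ leaves, all sharing the single parent $\bW_{\ORB{ij}}$ and each attached to its own independent exogenous variable.

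From this extended DAG, d-separation immediately yields
\begin{equation*}
\Y{uv} \indep \sY \;\big|\; \bW_{\ORB{ij}}, \qquad P\big(\sY \mid \bW_{\ORB{ij}}\big) \;=\; \prod_{m=1}^{M} P\!\left(\Yt{i}{j}{m} \mid \bW_{\ORB{ij}}\right).
\end{equation*}
The first independence and the law of total probability give $P(\Y{uv} \mid \sY) = \int P(\Y{uv} \mid \bW_{\ORB{ij}}) P(\bW_{\ORB{ij}} \mid \sY)\, d\bW_{\ORB{ij}}$, which is \Cref{eq:learning_inter_multi}. The second independence together with Bayes' rule gives $P(\bW_{\ORB{ij}} \mid \sY) \propto P(\bW_{\ORB{ij}}) \prod_{m=1}^{M} P(\Yt{i}{j}{m} \mid \bW_{\ORB{ij}})$, so the $\argmax$ of the posterior agrees with the stated $\bW^\star_{\ORB{ij}}$. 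Under a sharply peaked posterior (low variance, single mode), replacing $P(\bW_{\ORB{ij}} \mid \sY)$ by a Dirac mass at $\bW^\star_{\ORB{ij}}$ inside the integral produces the approximation $P(\Y{uv} \mid \sY) \approx P(\Y{uv} \mid \bW^\star_{\ORB{ij}})$.

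The main technical obstacle I anticipate is the first step: turning the mechanism-level condition \Cref{def:y-int} into a clean graphical statement about the absence of edges between probe outcomes at distinct times in the multi-probe DAG. Once this extended DAG is in hand, the remainder is, exactly as in the single-probe case of \Cref{col:estimator}, a routine application of d-separation and Bayes' rule. I expect this construction to require a bookkeeping argument analogous to the one behind \Cref{thm:main}(i), but carried out simultaneously across all $M$ probe times together with the target pair $\uv$.
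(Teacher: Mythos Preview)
Your proposal is correct and follows essentially the same route as the paper. The paper's proof is a two-line sketch: it observes that non-interference (\Cref{def:y-int}) together with i.i.d.\ exogenous variables makes the probe outcomes $\{\Yt{i}{j}{m}\}_{m=1}^M$ i.i.d.\ (conditionally on $\bW_{\ORB{ij}}$), and then invokes \Cref{col:estimator}; your plate extension of \Cref{fig:gamma-scm} and the ensuing d-separation/Bayes argument is exactly the unpacking of that sentence, so the ``technical obstacle'' you flag is handled in the paper by a single appeal to \Cref{def:y-int} rather than a separate bookkeeping argument.
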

Although we can now use multiple probes to estimate our counterfactual query, we are still restricted to probing in the orbit $\ORB{i^{(1)}j^{(1)}}$ of the first queried pair $(i^{(1)},j^{(1)})$. Such a setting tends to be impractical since i) both random and real-world graphs tend to be nearly asymmetric (most orbits are of size one with high probability)~\cite{erdHos1963asymmetric,zhou2022ood} and ii) computing the orbit of a pair of nodes is as hard as solving the graph isomorphism problem~\cite{flum2006parameterized}. Therefore, we finally consider a more practical solution, where we sample pairs according to an arbitrary policy $\pol$, probe into their relationships and \emph{learn a model} capable of predicting our counterfactual query (\Cref{eq:cf-multi}) for any $\UV \sim \pol$.

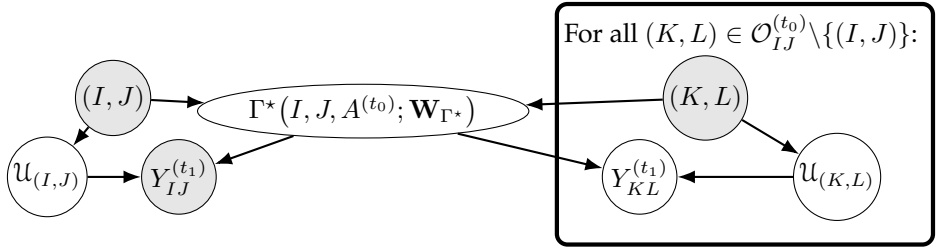
\begin{figure}[ht]
\begin{center}
\tikzstyle{vertex}=[font=\normalsize, draw, fill=white, circle, minimum size=20pt,inner sep=1pt, align=left]
\tikzstyle{selected vertex} = [draw, vertex, fill=green!24]
\tikzstyle{edge} = [draw,thick]
\tikzstyle{weight} = [font=\small, align=center, above, sloped, inner sep=2pt, pos=0.3]
\tikzstyle{selected edge} = [draw,line width=5pt,-,green!50]
\tikzstyle{ignored edge} = [draw,line width=5pt,-,black!20]
\pgfdeclarelayer{bottom}
\pgfdeclarelayer{middleb}
\pgfdeclarelayer{middlet}
\pgfdeclarelayer{top}
\pgfsetlayers{bottom,middleb,main,middlet,top}
\begin{tikzpicture}[scale=1.75]
\node[vertex,ellipse, minimum size=20pt, fill=white](Gamma) at (-2.1,-0.5) {$\Gammamostexp\big( I,J,\A{t_0}{};\bW_{\Gamma^\star} \big)$};
\node[vertex,fill=black!10](kl) at (0.5,-0.4) {$(K,L)$};
\node[vertex,fill=black!10](ij) at (-4,-0.4) {$\IJ$};
\node[draw, minimum height=3.2cm,minimum width=5cm, ultra thick, rounded corners] at (0.8,-0.6) {};
\node[] at (.77,0.1) {For all $(K,L) \in \ORB{IJ} \backslash \{\IJ\} $: };
\node[vertex](U) at (1.5,-1.) {$\ecal{U}_{(K,L)}$};
\node[vertex](Uij) at (-4.5,-1.) {$\ecal{U}_{\IJ}$};
\node[vertex](A) at (0,-1.) {$\Y{KL}$};
\node[vertex,fill=black!10](Aij) at (-3.5,-1.) {$\Y{IJ}$};
\path[edge] (kl) -> (Gamma);
\path[edge] (ij) -> (Gamma);
\path[edge] (ij) -> (Uij);
\path[edge] (kl) -> (U);
\path[edge] (U) -> (A);
\path[edge] (Uij) -> (Aij);
\path[edge] (Gamma) -> (A);
\path[edge] (Gamma) -> (Aij);
\end{tikzpicture}






\caption{The causal diagram of the data generating process of a probe in $(i,j)$ (\textit{left}) and in its orbit (\textit{right}) that allows us to learn from interventions in a supervised learning fashion (see \Cref{eq:obj} and \Cref{prop:2}). \label{fig:w-scm}}
\end{center}
\end{figure}

Such a general solution relies on two extra assumptions: 
i) exogenous variables from the pairs sampled during training, $\{\ecal{U}_{\ijt{m}}\}_{m=1}^M$, and the tested pair, $\ecal{U}_{\uv}$, are i.i.d.\ and ii) the parameters of the probes' distributions are shared across all orbits. In particular, for ii) we consider a shared set of parameters $\bW_{\Gamma^\star}$, which parameterizes a representation function $\Gamma^\star$. Note that, in order to express all possible probe distributions, the set of parameters $\bW_{\Gamma^\star}$ must assign the same value to $\ij$ and $\uv$ if and only if $\uv$ is in $\ij$'s orbit. We refer to such as a most-expressive representation, defined next.

\begin{definition}[Most-expressive pairwise representation] \label{def:joint} \label{def:inv-pair-emb}
A most-expressive pairwise representation is given by the functional $\Gammastar \colon \Vt{t_0} \times \Vt{t_0} \times \domA^{\n{t_0} \times \n{t_0}} \times \R^p \to \R^d$, parameterized by $\bW_{\Gammastar} \in \R^p$, $p\geq 1$, where
\begin{itemize}[leftmargin=*]
    \item[i.] for any parameter $\bW_{\Gammastar}$, any input graph $\a{t_0}{} \in \domA^{\n{t_0} \times \n{t_0}}$ and any pair $(i,j) \in \Vt{t_0} \times \Vt{t_0}$, we have that $\Gammastar(i,j,\A{t_0}{};\bW_{\Gammastar}) = \Gammastar(k,l,\A{t_0}{};\bW_{\Gammastar}), \forall \; (k,l) \in \ORB{i,j}$, and
    \item[ii.] $\exists \; \bW'_{\Gammastar} \in \R^p$ such that for any input graph $\a{t_0}{} \in \domA^{\n{t_0} \times \n{t_0}}$ and any pair $(i,j) \in \Vt{t_0} \times \Vt{t_0}$, we have that $\Gammastar(i,j,\A{t_0}{};\bW'_{\Gammastar}) \neq \Gammastar(r,s,\A{t_0}{};\bW'_{\Gammastar}), \forall \; (r,s) \in (\Vt{t_0} \times \Vt{t_0}) \setminus \ORB{i,j}$.
\end{itemize}
\end{definition}
 Leveraging \Cref{def:joint}, we depict this modification in the SCM in \Cref{fig:w-scm}. Now, we are ready to state our main learning result and summarize its practical implications in experimental design and assumptions.

\begin{proposition}\label{prop:2}
Under conditions of \Cref{thm:main}(ii), if $\sY$ is a sequence of outcomes of non-interfering interventions (\Cref{def:y-int}), the exogenous variables $\{\ecal{U}_{\uv}\} \cup \{\ecal{U}_{\ijt{m}} \colon 1 \leq m \leq M\}$ are i.i.d., and $\Gammastar(\cdot,\cdot,\cdot;\bW_{\Gamma^\star}) \colon \Vt{t_0} \times \Vt{t_0} \times \domA^{\n{t_0} \times \n{t_0}} \to \R^d$ be a most-expressive pairwise representation function (cf.\ \Cref{def:inv-pair-emb}), we have that the causal DAG in \Cref{fig:gamma-scm} can be equivalently described by the causal DAG in \Cref{fig:w-scm}, which by \Cref{col:estimator} yields 
\begin{equation}\label{eq:learning_general}
P(\Y{uv} \mid \sY ) = \int_{\bW_{\Gamma^\star}} P(\Y{uv} \mid \bW_{\Gamma^\star}) P(\bW_{\Gamma^\star} \mid \sY ) \,d\bW_{\Gamma^\star}.
\end{equation}
\end{proposition}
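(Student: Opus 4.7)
The plan is to show that the causal DAG in Figure~\ref{fig:w-scm} is a valid reparameterization of the DAG in Figure~\ref{fig:gamma-scm} coming from Theorem~\ref{thm:main}(ii), after which Equation~\eqref{eq:learning_general} follows by applying Corollary~\ref{col:multi-estimator}.

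First, I would verify that $\Gammastar(I,J,\A{t_0}{};\bW_{\Gammastar})$ acts as a sufficient statistic for the orbit-indexed latent variable $\bW_{\ORB{IJ}}$ appearing in Figure~\ref{fig:gamma-scm}. By condition (i) of Definition~\ref{def:inv-pair-emb}, $\Gammastar$ assigns identical values to all pairs in the same orbit of $\G{t_0}$. By condition (ii), there exists a parameter choice $\bW'_{\Gammastar}$ under which $\Gammastar(\cdot,\cdot,\A{t_0}{};\bW'_{\Gammastar})$ distinguishes pairs from distinct orbits. Hence, at this parameter setting, $\Gammastar(I,J,\A{t_0}{};\bW'_{\Gammastar})$ is in one-to-one correspondence with the orbit $\ORB{IJ}$, which is precisely the information encoded by $\bW_{\ORB{IJ}}$. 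The crucial conceptual difference is that $\bW_{\Gammastar}$ is a single set of parameters shared across all orbits, while $\bW_{\ORB{IJ}}$ is orbit-specific; this is what enables generalization across orbits during supervised learning.

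Second, I would argue that substituting the orbit-specific latent with the shared-parameter representation preserves the causal mechanisms and all relevant conditional independencies. In Figure~\ref{fig:gamma-scm}, $\bW_{\ORB{IJ}}$ d-separates each probe outcome $\Y{IJ}, \Y{KL}$ (with $(K,L)$ in $\ORB{IJ}$) from the rest of the generative process of $\G{t_0}$, apart from its own exogenous variable $\ecal{U}_{(\cdot,\cdot)}$. Since $\Gammastar(I,J,\A{t_0}{};\bW_{\Gammastar})$ carries exactly the same orbit-level information for an appropriate $\bW_{\Gammastar}$, and since by hypothesis the exogenous variables $\{\ecal{U}_{\ijt{m}}\}_{m=1}^M \cup \{\ecal{U}_{\uv}\}$ are i.i.d.\ and the probes $\sY$ are non-interfering (Assumption~\ref{def:y-int}), the joint distribution of $(\sY, \Y{uv})$ conditional on $\G{t_0}$ is left unchanged by the substitution. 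This gives the equivalence of the DAGs in Figures~\ref{fig:gamma-scm} and~\ref{fig:w-scm}.

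Finally, with $\bW_{\Gammastar}$ now playing the role that $\bW_{\ORB{ij}}$ played in Corollary~\ref{col:multi-estimator}, the derivation of \eqref{eq:learning_inter_multi} goes through verbatim: marginalizing over $\bW_{\Gammastar}$, using Bayes' rule together with conditional independence of $\Y{uv}$ and $\sY$ given $\bW_{\Gammastar}$ (inherited from the DAG in Figure~\ref{fig:w-scm} and the non-interference of probes), yields
\begin{equation*}
P(\Y{uv} \mid \sY) = \int_{\bW_{\Gammastar}} P(\Y{uv} \mid \bW_{\Gammastar})\, P(\bW_{\Gammastar} \mid \sY)\, d\bW_{\Gammastar},
\end{equation*}
which is \eqref{eq:learning_general}. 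The main obstacle will be to make the sufficient-statistic argument precise despite $\Gammastar$ depending on parameters shared across all orbits: one must argue that at the specific $\bW'_{\Gammastar}$ of Definition~\ref{def:inv-pair-emb}(ii) the representation is lossless at the orbit level, and that any other $\bW_{\Gammastar}$ can only coarsen --- never refine --- this partition, so that the conditional independence structure the DAG in Figure~\ref{fig:w-scm} asserts is preserved under the mixture over $\bW_{\Gammastar}$.
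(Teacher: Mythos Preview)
Your proposal is correct and follows essentially the same approach as the paper: both use Definition~\ref{def:inv-pair-emb}(ii) to obtain a parameter setting at which $\Gammastar$ is in bijection with orbits, then argue that the orbit-specific latent $\bW_{\ORB{IJ}}$ in Figure~\ref{fig:gamma-scm} can be replaced by the shared-parameter representation without changing the SCM. The paper's proof is terser---it constructs an explicit surjection $s$ from representation values to the $\bW_{\ORB{ij}}$ and rewrites the mechanism as $f(s(\Gammastar(\cdot)),\ecal{U}_{\cdot})$---while you phrase the same step via sufficient statistics and d-separation, and additionally flag the subtlety that the integral in \eqref{eq:learning_general} ranges over all $\bW_{\Gammastar}$ rather than just the most-expressive choice, a point the paper leaves implicit.
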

Note that in practice we will approximate the above equation by
$$
P(\Y{uv} \mid \sY) \approx P(\Y{uv} \mid \bW^\star_{\Gamma^\star} ),
$$
where $\bW^\star_{\Gamma^\star} = \argmax_{\bW_{\Gamma^\star}} P(\bW_{\Gamma^\star}) \prod_{m=1}^M P( \Y{ i^{(m)}j^{(m)} } \mid \bW_{\Gamma^\star}) $ is a MAP estimate and the prior $P(\bW_{\Gamma^\star})$ is a hyperparameter of our model.\\
\paragraph{\Cref{prop:2} in practice} Let us now summarize the practical conditions in which the estimator presented in \Cref{prop:2} (leveraged in our final solution) can be used. We can break down the assumptions into three sets: i) causal model and mechanism invariances (\Cref{sec:scm}, \Cref{ass:time-igno,ass:time-exc,ass:link,ass:id-exc}); ii) parameter-sharing in orbit representation (\Cref{def:joint}); and iii) non-interference and i.i.d.\ exogenous variables (\Cref{def:y-int}). 

The causal modeling assumptions (i) are world models that we must believe to be true so we can identify the causal quantities in the task at hand ---as generally needed in causal inference settings~\cite{pearl2009causality}. Regarding (ii), the assumption is related to the intuition we maintained throughout the paper: Structure induces link formation. That is, in \Cref{prop:2} we are assuming that two isomorphic node pairs have the same probe distribution and that their orbit representations are given by a common set of parameters. In theory, since $\Gammastar$ is most-expressive, we can assign arbitrary and unique representations to each orbit and thus they are not necessarily related. However, in practice the parameter-sharing will induce similar predictions to node pairs that have similar, but not necessarily identical, structure. Therefore, generally speaking, under the assumption that structure can predict the (interventional) formation of  links, this assumption is justified. One way to test this in practice is to probe into node pairs with similar structure and (statistically) test whether the observed probe distributions are the same. In \Cref{sec:res-struct} we do  this for real-world recommender systems data and confirm the hypothesis.

Finally, we have the assumptions about non-interference and i.i.d.\ exogenous variables (iii). These are assumptions may not hold in practice but, unlike (i,ii), they can be enforced by experimental design. Non-interference can be induced by sampling training (interventional) data from distant (\textit{e.g.}, different clusters) parts of the graph, as extensively discussed in Eckles et al.~\cite{eckles2017design}. Non-interference is important for our counterfactual query since it allows us to use multiple interventions as training data. Note that our query is about $t_1$ and we can only interfere sequentially. As for exogenous variables, we can first think of them as the hidden, \textit{i.e.}, unobserved, contexts of each pair. Having this in mind, it is clear that independent contexts can expected to hold by sampling training (interventional) data from distant parts of the graph. Lastly, having identical context (exogenous) distributions is a challenge often arising in causal inference~\cite{joachims2021recommendations}. A common way to accomplish it through experimental design is to stratify, \textit{i.e.}, train a model for each population stratum expected to have similar context (exogenous) distributions, \textit{e.g.} user demographics.
\subsection{Causal lifting induces a supervised learning solution to causal link prediction}
\label{sec:beyondprobes}
\Cref{prop:2} describes when it is possible to estimate our general causal link prediction task (\Cref{eq:cf-multi}) with multiple probes in different orbits. Let us now leverage such result to present our \emph{final practical solution to the task}. We are interested in a supervised learning-based approach, where the examples are the probed pairs $(i^{(t_m)},j^{(t_m)})$ with their observed outcomes $(\sy_m)$ as their labels. We are also interested in graph embedding models, \textit{i.e.}, our prediction of \Cref{eq:cf-multi} is given by $\rho\big(\Gamma(U,V,\A{t_0}{};\bW_\Gammastar);\bW_\rho^\star\big)$, where $\Gamma(U,V , \A{t_0}{} ; \bW_\Gammastar )$ outputs a pairwise representation (embedding) of $(U,V)$ in $\G{t_0}$ and $ \rho(\cdot ; \bW_{\rho} )$ is a link function (\textit{e.g.}, a downstream neural network) where the parameters $\bW_\rho^\star,\bW_\Gamma^\star$ are learned by solving
\begin{equation}\label{eq:obj}
    \bW_\rho^\star,\bW_\Gamma^\star :=
\argmin_{\bW_\rho,\bW_\Gamma} \frac{1}{M} \sum_{m=1}^M
\cL\Big(\sy_m, \rho\big(\Gamma(i^{(t_m)},j^{(t_m)}, \A{t_0}{};\bW_\Gamma);\bW_\rho\big)\Big),
\end{equation}
with $\IJt{m} \sim \pol$, where $\pol$ is an arbitrary distribution over node pairs, $\sy_m$ is the outcome of probe $(i^{(t_m)},j^{(t_m)})$ at time $t_m$ and $\cL$ a nonnegative loss function that optimizes $\bW_\rho^\star,\bW_\Gammastar$ towards the MLE estimates of the task. 


We can now see how our solution in \Cref{eq:obj} is estimating $\bW^\star_{\Gamma}$ according to \Cref{prop:2} ---where we assume a non-informative prior over $\bW_{\Gamma}$. We can also interpret $\rho(\cdot, \bW_\rho^\star)$ as our estimation of the mechanism that takes $\Gamma(I,J,\A{t_0}{};\bW_\Gammastar)$ and outputs $\A{t_1}{IJ}$. Finally, we highlight that \Cref{eq:obj} relies on the assumptions from \Cref{prop:2} to guarantee that $\rho(\cdot, \bW_\rho^\star)$ is an unbiased estimate of our counterfactual query from \Cref{eq:cf-multi}.

With the myriad of existing graph embedding methods, we are finally left with the question: What are good choices for $\Gamma$? As usual in machine learning, a choice of $\Gamma$ is better than another if it achieves lower error with the same (or less) number of samples. Next, we show how classical node embedding methods, such as matrix factorization and graph neural networks, fail at either achieving low error or capturing the correct causal structure of the task. As an alternative, we show how structural (joint) pairwise embeddings can overcome the existing issues with node embeddings.

\section{Graph Embeddings for Causal Link Prediction}\label{sec:embeddings}

We now examine our general identification result (\Cref{prop:2}) when using structural and (strictly) positional node embeddings ---two widely-used family of graph embedding predictors--- and structural (joint) pairwise embeddings. Node embedding methods build $\Gamma$ by separately computing the node embeddings of the two nodes in the represented pair. Then, they are merged by some binding function. For instance, if the graph is undirected, such binding can be done using a Hadamard product of the two node embeddings. We note that our following analysis is agnostic to the choice of the binding function. Therefore, without loss of generality, in node embedding methods we will consider $\Gamma$ as the concatenation of the two embeddings. The (arbitrary) binding operation is then incorporated by the link function $\rho$.

Overall, we find that \textbf{structural (joint) pairwise embeddings present lower bias than structural node embeddings and, unlike strictly positional node embeddings, correctly represents the causal structure of the task}.
Having the causal model from \Cref{fig:w-scm} in mind, it is natural to turn to structural representations as a candidate graph embedding choice.

\subsection{Structural (joint) pairwise embeddings are the ideal graph embeddings for causal link prediction tasks}

In \Cref{prop:2}, and its causal DAG in \Cref{fig:w-scm}, we see that the natural representation for our causal link prediction task is a most-expressive pairwise representation as in \Cref{def:joint}, which for any input pair $(i,j) \in \Vt{t_0} \times \Vt{t_0}$ it is a (possibly unique) representation of the orbit $\ORB{ij}$. 
That is, under the conditions of \Cref{thm:main}(ii), most-expressive pairwise embeddings capture all (causal) invariances in the task. 
In practice, however, one generally chooses a less expressive model family, since most-expressive graph models necessarily incur high computational costs (since they must solve the graph isomorphism task~\cite{Che+2019}).

Composing structural node embeddings is a popular attempt to design structural representations of node pairs~\cite{hu2020ogb}. However, as we later show in \Cref{sec:struct-node}, even most-expressive structural node embeddings can induce model bias in causal link prediction tasks. Thus, to distinguish structural representations acting jointly on the node pair from structural representations acting separately on its nodes, we next define structural joint pairwise embeddings.

\begin{definition}[Structural joint pairwise embeddings]\label{def:pair-struct}
    A structural joint pairwise embedding is given by the functional $\Gammajoint \colon \Vt{t_0} \times \Vt{t_0} \times \domA^{\n{t_0} \times \n{t_0}} \times \R^p \to \R^d$, parameterized by $\bW_{\Gammajoint} \in \R^p$, $p\geq 1$, where for some graph $\a{t_0}{} \in \domA^{\n{t_0} \times \n{t_0}}$ and some parameter $\bW_{\Gammajoint} \in \R^p$ the joint representation encodes more than the node orbits of $i \in \Vt{t_0}$ and $j \in \Vt{t_0}$ in $\a{t_0}{}$ separately, it encodes $\ij$'s the joint orbit $\ORB{\ij}$.
    More precisely, $\Gammajoint$ is such that 
    \begin{itemize}[leftmargin=*]
    \item[i.] for any parameter $\bW_{\Gammajoint}$, any input graph $\a{t_0}{} \in \domA^{\n{t_0} \times \n{t_0}}$ and any pair $(i,j) \in \Vt{t_0} \times \Vt{t_0}$, we have that $\Gammajoint(i,j,\A{t_0}{};\bW_{\Gammajoint}) = \Gammajoint(k,l,\A{t_0}{};\bW_{\Gammajoint}), \forall \; (k,l) \in \ORB{i,j}$, and
    \item[ii.] $\exists \bW_{\Gammajoint}' \in \R^p$, $\exists \a{t_0}{} \in \domA^{\n{t_0} \times \n{t_0}}$, and $\exists \ij \in \Vt{t_0} \times \Vt{t_0}$ such that it does \textbf{not} exist a function over the set of node orbits ($\bigcup_{v \in V} \{\ORB{v}\}$) that can be equivalent to $\Gammajoint$, that is, for all $f \colon \bigcup_{v \in V} \{\ORB{v}\} \times \bigcup_{v \in V} \{\ORB{v}\} \to \R^d$ we have that $f(\ORB{i}, \ORB{j}) \neq \Gammajoint(i,j,a;\bW'_{\Gammajoint})$.
    \end{itemize}
\end{definition}

Note how the above definition guarantees that for some input graph $\G{t_0}$, its structural pairwise embedding could not have been computed from the nodes' individual orbits. At first sight it might seem that only most-expressive pairwise representations are encompassed by \Cref{def:pair-struct}, but this is not true. For instance, pairwise distances cannot be computed from node orbits (see \cite{li2020distance}), thus any structural pairwise representation capturing distances would be a joint embedding as in \Cref{def:pair-struct}. In practice, we have seen how even for observational link prediction tasks (\textit{e.g.}, see the OGB~\cite{hu2020ogb} suite\footnote{\url{https://ogb.stanford.edu/docs/linkprop/}}) architectures under \Cref{def:pair-struct} consistently outperform others using only structural node representations. This serves as evidence that joint properties, \textit{i.e.}, as in \Cref{def:pair-struct}(ii), such as distances and common neighbors indeed play a central role in link formation mechanisms. Generally speaking, the benefits enjoyed by estimators using \Cref{def:pair-struct} are tied to how much structural properties govern the (interventional) link formation process. In \Cref{sec:res-struct} we test this assumption in practice. Finally, for simplicity we will often refer to representations satisfying \Cref{def:pair-struct} simply as {\em structural pairwise embeddings}.

\subsection{Structural node embeddings are undesirable for causal link prediction due to model bias}\label{sec:struct-node}

Unlike joint embeddings (\Cref{def:pair-struct}), Graph Neural Networks (GNNs)~\cite{Sca+2009} are the most common permutation-invariant\footnote{Invariant node embeddings can also be seen as equivariant embeddings, where the input of the model is the adjacency $a$ and the output is a matrix of embeddings $H \in \R^{n \times d}$. Then, equivariance is achieved by having the action of permutation $\pi$ in the input $\pi \cdot a$ resulting in permuting the rows of the output matrix of embeddings accordingly, \textit{i.e.}, $\pi \cdot H$.} node embeddings (a.k.a.\ structural node embeddings), where each node gets its own representation as described in \Cref{def:inv-node-emb}. A structural node embedding outputs the same node representations to any two isomorphic nodes in $\G{t_0}$. That is, if $\pi \in \aut{\G{t_0}}$ is an automorphism of $\G{t_0}$, then for any $i \in \Vt{t_0}$ we have that $\pi \cdot \a{t_0}{} = \a{t_0}{}$ and thus nodes $i$ and $\pi \cdot i$ are assigned the same representation in $\a{t_0}{}$. Overall, structural node embeddings encompass not only GNNs but also classical structural node roles~\cite{henderson2012rolx}.
\begin{definition}[Structural (permutation-invariant) node embeddings~\cite{srinivasan2020equivalence}] \label{def:inv-node-emb}
    A structural node embedding is given by the functional $Z \colon \Vt{t_0} \times \domA^{\n{t_0} \times \n{t_0}} \times \R^p \to \R^d $ parameterized by $\bW_{Z} \in \R^p, p \geq 1$, where $Z(i,\a{t_0}{};\bW_{Z}) =  Z(\pi \cdot i,\pi \cdot \a{t_0}{};\bW_{Z}) \; \forall \; \pi \in \S_{\n{t_0}}, \forall \; \a{t_0}{} \in \domA^{\n{t_0} \times \n{t_0}} , \forall \; i \in \Vt{t_0}$.
\end{definition}



We now highlight that \Cref{def:inv-node-emb,def:inv-pair-emb} are not equivalent. In fact, next we prove that the family of all models satisfying \Cref{def:pair-struct} have lower bias than those satisfying \Cref{def:inv-node-emb}. We first define the bias of a graph embedding solution to our problem below.

\begin{definition}[Graph embedding bias]\label{def:bias} We define the bias of a representation $\Gamma(\cdot;\bW_\Gamma)$ together with a link function $\rho(\cdot;\bW_\rho)$ for a fixed $t_0$ as the expectation
\[
 \cB_{\Gamma,\rho} := \min_{\bW_{\Gamma}, \bW_{\rho}} \Exp \Bigg[
\cL\Big(\Y{IJ}, \rho\big(\Gamma(I,J, \A{t_0}{};\bW_\Gamma);\bW_\rho\big)\Big) \Bigg]
\]
taken over $\IJ$, $\Y{IJ}$ and $\A{t_0}{}$ with $\cL$ being a loss function as in \Cref{eq:obj}.
\end{definition}

Now, we define a theoretically-relevant class of graphs in link prediction tasks: Pairwise symmetric graphs (c.f.\ \Cref{def:pair-graph}). In essence, pairwise symmetric graphs are symmetric graphs that contain pairs of nodes that are node-wise isomorphic. That is, there exist permutations that map nodes individually from one pair to the other, but there are at least two of these pairs that are not isomorphic, \textit{i.e.}, it does not exist a single permutation bringing one pair to the other. These graphs are important since structural node embeddings will mistakenly assign symmetries to non-isomorphic pairs ---and thus they fail to fulfill \Cref{def:pair-struct}(ii). Instead, we need to consider pairwise symmetries by using structural pairwise embeddings as in \Cref{def:pair-struct}.

\begin{definition}[Pairwise symmetric graphs]\label{def:pair-graph}
    We say that a symmetric graph $a$ is pairwise symmetric if $\exists \; \pi,\pi' \in \aut{a}, \exists \; i,j,u,v \in V$ such that $ u = \pi \cdot i, v = \pi' \cdot j $, but $\not\! \exists \; \pi^\star \in \aut{\A{t_0}{}}$ such that $ (u,v) = \pi^\star \cdot (i,j)$.
\end{definition}

We are now finally ready to state when structural pairwise embeddings have lower bias than structural node embeddings in \Cref{{thm:bias}}.

\begin{restatable}[]{thm}{biasthm}\label{thm:bias}
Let $\cB_{Z,\rho}$ and $\cB_{\Gammajoint,\rho}$ be the respective biases (c.f.\ \Cref{def:bias}) of models using structural node embeddings (c.f.\ \Cref{def:inv-node-emb}) and structural pairwise embeddings (c.f.\ \Cref{def:pair-struct}) in our causal link prediction task as described in \Cref{prop:2}. Then, we have that if $\supp(\A{t_0}{})$ contains at least one pairwise symmetric graph (cf.\ \Cref{def:pair-graph}),
\[ \cB_{Z,\rho} \geq \cB_{\Gammajoint,\rho} = 0. \]
\end{restatable}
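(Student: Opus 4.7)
The plan is to prove the two parts separately: first that the structural joint pairwise family realizes the Bayes-optimal predictor (so $\cB_{\Gammajoint,\rho}=0$), and then that any structural node embedding is forced to assign identical predictions to certain non-isomorphic pairs on a pairwise symmetric graph, placing a lower bound on its minimum expected loss that equals the bias of $\Gammajoint$. The decisive structural fact I will exploit is the witness supplied by \Cref{def:pair-graph}: an $a\in\supp(\A{t_0}{})$ together with pairs $\ij,\uv$ such that $u=\pi\cdot i$ and $v=\pi'\cdot j$ for some $\pi,\pi'\in\aut{a}$, yet $\uv\notin\ORB{ij}$.

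For the first inequality, I will invoke \Cref{prop:2} to conclude that the true conditional $P(\Y{IJ}\mid I=i,J=j,\A{t_0}{}=a)$ factors through the joint orbit $\ORB{ij}$ only, via the latent $\bW_{\Gammastar}$ of \Cref{fig:w-scm}. A most-expressive pairwise representation (\Cref{def:inv-pair-emb}) is constant on each orbit and injective across orbits, so it carries all the sufficient statistics of the Bayes predictor; it also satisfies both clauses of \Cref{def:pair-struct}, the second clause being witnessed precisely because pairwise symmetric graphs exist in $\supp(\A{t_0}{})$ (node orbits alone cannot recover joint orbits there). Composing such a $\Gammastar$ with a sufficiently expressive link function $\rho$ reproduces the Bayes-optimal conditional pointwise, so the minimum in \Cref{def:bias} attains the Bayes risk, which is what $\cB_{\Gammajoint,\rho}=0$ encodes (zero model bias relative to the Bayes-optimal predictor).

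For the second inequality, I fix the witness $(a,\ij,\uv)$ from \Cref{def:pair-graph}. Since $\pi\cdot a=a$ and $\pi'\cdot a=a$ (both lie in $\aut{a}$), \Cref{def:inv-node-emb} yields $Z(i,a;\bW_Z)=Z(\pi\cdot i,\pi\cdot a;\bW_Z)=Z(u,a;\bW_Z)$, and symmetrically $Z(j,a;\bW_Z)=Z(v,a;\bW_Z)$. Consequently any composition $\rho(Z(\cdot,a;\bW_Z),Z(\cdot,a;\bW_Z);\bW_\rho)$, with the binding operation absorbed into $\rho$, outputs the same value at $\ij$ and at $\uv$ on the graph $a$, for every choice of $(\bW_Z,\bW_\rho)$. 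But by the first part the two Bayes-optimal conditionals are indexed by distinct orbits, and hence by distinct values of $\bW_{\Gammastar}$ in \Cref{fig:w-scm}; for generic mechanisms $\fx{t_1}$ in $\M$ they do not coincide, so no node-embedding pair $(\bW_Z,\bW_\rho)$ can match both simultaneously. The minimum expected loss under a structural node embedding therefore cannot undercut the Bayes risk attained by $\Gammajoint$, giving $\cB_{Z,\rho}\geq\cB_{\Gammajoint,\rho}=0$.

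The main obstacle I anticipate is reconciling the literal reading of $\cB_{\Gamma,\rho}=0$ with its definition as a nonnegative minimum expected loss: that equality is natural only if $\cB$ is understood as the gap to the Bayes risk, or if $\cL$ is a strictly proper scoring rule paired with a distributional $\rho$ so that the Bayes risk is zero on the excess-loss scale. Either reading leaves the node-versus-pairwise separation above intact, because the argument depends only on the structural inability of $Z$ to distinguish $\ij$ from $\uv$ on the witness graph, while a most-expressive $\Gammastar\in\Gammajoint$ realizes the Bayes optimum and the true conditionals over the two orbits are generically different.
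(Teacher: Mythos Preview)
Your approach mirrors the paper's proof: both argue that most-expressive pairwise representations (\Cref{def:inv-pair-emb}) fall within the joint pairwise class and, paired with a universal link function, attain zero bias; and both use the witness pair from \Cref{def:pair-graph} together with \Cref{def:inv-node-emb} to force $Z(i,a)=Z(u,a)$, $Z(j,a)=Z(v,a)$, hence identical predictions on two pairs whose true orbit-indexed conditionals can differ. The paper adds a non-vacuousness paragraph showing that some $\m\in\M$ actually generates a pairwise symmetric graph, which is redundant under the theorem's stated hypothesis; conversely, your flagged concern about reading $\cB_{\Gammajoint,\rho}=0$ literally (given irreducible exogenous noise) is apt---the paper simply appeals to universal approximation and does not address it.
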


The core of \Cref{thm:bias}'s proof (c.f.\ \Cref{sec:appx-learn}) relies on showing (i) how structural node embeddings cannot achieve zero error in pairwise symmetric graphs and (ii) that an SCM satisfying the conditions of \Cref{prop:2} can generate pairwise symmetric graphs ---thus, structural node embeddings cannot achieve zero-bias in causal link prediction tasks. We highlight how this result encompasses even most-expressive structural node embeddings, \textit{i.e.}, representations unique to each node orbit. A natural follow-up question is: What about matrix factorization-based methods? Since they do not explicitly capture invariances, they do not seem to suffer from the same higher-bias problems as structural node embeddings. Next, we show how matrix factorization and other types of (strictly) positional node embeddings, unlike structural pairwise embeddings, do not capture the correct causal structure of our task.

\subsection{(Strictly) positional node embeddings (\textit{e.g.}, matrix factorization embeddings) incorrectly encode the causal structure for interventional lifting}

Beyond structural node embeddings, it is also natural to consider positional node embeddings as our choice for building $\Gamma$. The formal and most general definition of positional node embeddings is given in \cite{srinivasan2020equivalence}, where the embeddings are samples from an equivariant distribution.

\begin{definition}[Positional node embeddings~\cite{srinivasan2020equivalence}]\label{def:pos}
    The positional node embeddings $(\theta_i)_{i = 1}^{\n{t_0}}$ of a graph $\G{t_0}$ with adjacency $\A{t_0}{}$ are defined as joint samples $(\theta_i)_{i = 1}^{\n{t_0}}\mid  \A{t_0}{} =\a{t_0}{} \sim P( \btheta \mid \A{t_0}{} = \a{t_0}{})$ with $\supp(\theta_i) \subseteq \R^d$ and $ P(\pi \cdot \btheta \mid \A{t_0}{} =  \a{t_0}{}) = P(\btheta  \mid \A{t_0}{} = \pi \cdot \a{t_0}{} )$. 
\end{definition}

Deterministic algorithms for matrix factorization may not appear to follow the above definition, however, once we consider that node identifiers are arbitrarily assigned, even deterministic factorization methods follow \Cref{def:pos}. We refer the reader to Srinivasan \& Ribeiro~\cite{srinivasan2020equivalence} for a more comprehensive discussion.
At this point, the attentive reader has noted how the presented definition of positional node embeddings is quite general. As such, as previously noted, the embeddings possibly do not encode any graph symmetry, \textit{e.g.}, assign the same representation to isomorphic pairs in the graph. However, the definition is so general that the opposite might also be true, \textit{i.e.}, structural node embeddings can be framed as positional in this framework. Thus, to capture the positional node embeddings that do not encapsulate any symmetry, we next define strictly positional node embeddings.

\begin{definition}[Strictly positional node embeddings]\label{def:strict}
Given a symmetric graph $\G{t_0}$ ($\setsize{\aut{\G{t_0}}} > 1$), we say that the positional node embeddings $(\theta^{(\text{pos+})}_i)_{i=1}^n$ of nodes in $\G{t_0}$ are 
strictly positional if for every $\ij \in \Vt{t_0} \times \Vt{t_0}, i \neq j$ we have that $\theta_i^{(\text{pos+})} \neq \theta_j^{(\text{pos+})} $ almost everywhere.
\end{definition}

In words, \Cref{def:strict} is defining the set of positional node embedding distributions that always assign distinct representations to every node in a symmetric graph. Note that we focus on symmetric graphs here since in asymmetric graphs ($\setsize{\aut{\G{t_0}}} = 1$) even structural node embeddings can assign distinct representations to all nodes. We start by further noting how, just like structural pairwise embeddings, strictly positional node embeddings can achieve zero-bias in any causal link prediction task. It follows from 
Theorem 2 in \cite{srinivasan2020equivalence} that, with a powerful enough link function, strictly positional embeddings can in expectation recover most-expressive pairwise representations (cf.\ \Cref{def:inv-pair-emb}) ---which achieve zero-bias as shown in \Cref{thm:bias}.


Although strictly positional node embeddings do not necessarily impose a model bias to the causal link prediction task, they do suffer from a different challenge: \Cref{fig:pos-scm} shows that because strictly positional node embeddings generally assigns different embeddings for pairs of nodes in the same orbit, they do not capture the correct causal DAG of \Cref{fig:scm-ex}.
\begin{figure}[ht]
\begin{center}
\tikzstyle{vertex}=[font=\normalsize, draw, fill=white, circle, minimum size=20pt,inner sep=1pt, align=left]
\tikzstyle{selected vertex} = [draw, vertex, fill=green!24]
\tikzstyle{edge} = [draw,thick]
\tikzstyle{weight} = [font=\small, align=center, above, sloped, inner sep=2pt, pos=0.3]
\tikzstyle{selected edge} = [draw,line width=5pt,-,green!50]
\tikzstyle{ignored edge} = [draw,line width=5pt,-,black!20]
\pgfdeclarelayer{bottom}
\pgfdeclarelayer{middleb}
\pgfdeclarelayer{middlet}
\pgfdeclarelayer{top}
\pgfsetlayers{bottom,middleb,main,middlet,top}
\begin{tikzpicture}[scale=1.5]
\node[vertex,ellipse, minimum size=20pt, fill=white](Gamma) at (-2.1,-0.5) {$(\theta^{(\text{pos+})}_I,\theta^{(\text{pos+})}_J)$};

\node[vertex,ellipse, minimum size=20pt, fill=white](GammaKL) at (0.55,-0.5) {$(\theta^{(\text{pos+})}_K,\theta^{(\text{pos+})}_L)$};

\node[vertex,fill=black!10](kl) at (2.35,-0.4) {$(K,L)$};
\node[vertex,fill=black!10](ij) at (-4,-0.2) {$\IJ$};
\node[draw, minimum height=3.2cm,minimum width=7cm, ultra thick, rounded corners] at (1.5,-0.6) {};
\node[] at (1.47,0.1) {For all $(K,L) \in \ORB{IJ} \backslash \{\IJ\} $: };
\node[vertex](U) at (3.05,-1.) {$\ecal{U}_{(K,L)}$};
\node[vertex](Uij) at (-4.5,-1.) {$\ecal{U}_{\IJ}$};
\node[vertex](A) at (1.6,-1.) {$\Y{KL}$};
\node[vertex,fill=black!10](Aij) at (-3.5,-1.) {$\Y{IJ}$};
\path[edge] (ij) -> (Gamma);
\path[edge] (ij) -> (Uij);
\path[edge] (kl) -> (U);
\path[edge] (kl) -> (GammaKL);
\path[edge] (U) -> (A);
\path[edge] (Uij) -> (Aij);
\path[edge] (GammaKL) -> (A);
\path[edge] (Gamma) -> (Aij);
\end{tikzpicture}






\caption{Strictly positional node embedding's incorrect causal structure of probes in $(i,j)$ (\textit{left}) and other pairs in the orbit of $\ORB{\ij}$ (\textit{right}).}
\label{fig:pos-scm}
\end{center}
\end{figure}
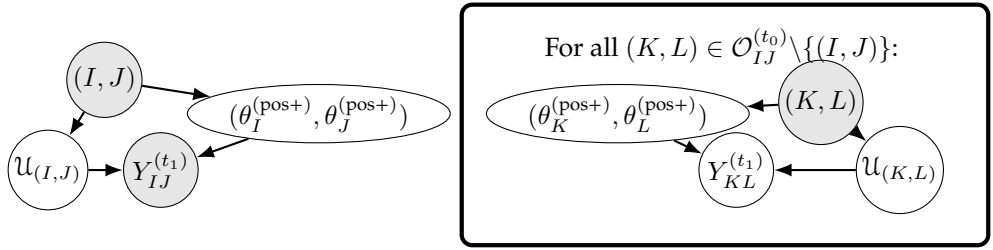
Further, having the causal model from \Cref{fig:scm-ex} in mind, we can see how strictly positional node embeddings make it difficult to generalize to unseen node pairs. In practice, unless we choose or learn a link function (a link function is the function that takes the embeddings and outputs a link prediction) that assigns the same prediction to the different embeddings of node pairs in the same orbit, the lifting does not occur.

\paragraph{Matrix factorization gives strictly positional node embeddings} 
When considering (strictly) positional node embeddings, we are interested in matrix factorization and other factor models that reflect some notion of node distances in the graph in their node embeddings, such as metric embeddings~\cite{chen2012playlist,feng2015personalized} and similar traditional (associational) link prediction methods~\cite{quadrana2018sequence,grover2016node2vec,perozzi2014deepwalk}. Thus, to illustrate how the analysis of strictly positional node embeddings is insightful for factor models, we now prove that, in a wide family of graphs, {\em SVD behaves as a strictly positional embedding}.
~\\

SVD node embeddings use the eigenvectors of $\a{t_0}{}{\a{t_0}{}}^T$ and ${\a{t_0}{}}^T\a{t_0}{}$ ---here we consider concatenating both right and left eigenvectors. SVD justifies the positional node embedding name since it does reflect the distances between nodes in the graph in its embedding space. As isomorphic nodes can be arbitrarily distant in a graph, it is easy to see that SVD can assign them different embeddings. Thus, it has been believed and recently conjectured in \cite{zhu2021node}[Theorem 4.2] that SVD in general does not assign the same embedding to isomorphic nodes. To fill this gap in literature, we next present the first result ---to the best of our knowledge--- on the exact invariances of SVD: When SVD assigns the same embedding to a node pair. The proof of \Cref{thm:svd} is presented in \Cref{sec:appx-learn}.
 
\begin{restatable}[The invariances of SVD]{thm}{svdthm}\label{thm:svd} 
Let $G$ be a graph with adjacency $a \in \{0,1\}^{n \times n}$ and $\theta^{(\text{SVD})}_i$ the SVD embedding of node $i \in V$. Then, two nodes $i,j \in V$ get the same SVD embedding $\theta^{(\text{SVD})}_i = \theta^{(\text{SVD})}_j$
if and only if
\begin{enumerate}[label=(\alph*)]
    \item the nodes are isomorphic $i \iso j$; and
    \item they have the exact same neighborhood: $a_{iv} = a_{jv},  a_{vi} = a_{vj}, \; \forall v \in V.$
\end{enumerate}
\end{restatable}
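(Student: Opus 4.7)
The plan is to reduce both directions to row and column equalities on $a$ via the matrices $aa^T$ and $a^T a$. Write the SVD as $a = U\Sigma V^T$ with $U,V$ square orthogonal and $\Sigma = \mathrm{diag}(\lambda_1,\ldots,\lambda_n)$, and take the SVD node embedding of $i$ to be $\theta^{(\text{SVD})}_i = (U_{i\cdot}\Sigma,\, V_{i\cdot}\Sigma)$ (the paper already states that both left and right eigenvectors are concatenated; any equivalent convention such as $(U_{i\cdot}\Sigma^{1/2},V_{i\cdot}\Sigma^{1/2})$ works identically). Under this convention $\theta^{(\text{SVD})}_i = \theta^{(\text{SVD})}_j$ is precisely the coordinatewise condition $U_{il}=U_{jl}$ and $V_{il}=V_{jl}$ for every $l$ with $\lambda_l > 0$, while the $\lambda_l=0$ coordinates contribute $0$ on both sides and can be ignored throughout.

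For the ($\Rightarrow$) direction, I would expand $a_{iv} = \sum_l U_{il}\lambda_l V_{vl}$ and substitute the hypothesis to conclude $a_{iv} = a_{jv}$; the mirror computation $a_{vi} = \sum_l U_{vl}\lambda_l V_{il}$ yields $a_{vi}=a_{vj}$. This is exactly~(b). Condition~(a) then drops out of~(b) alone: row equality at $k\in\{i,j\}$ and column equality at $k\in\{i,j\}$ force $a_{ii}=a_{ij}=a_{ji}=a_{jj}$, after which a short case analysis shows that the transposition $\pi=(i\,j)$ satisfies $a_{\pi(k),\pi(l)} = a_{kl}$ in every one of the nine index regions (both indices outside $\{i,j\}$; one index in, one out; both inside), so $\pi\in\aut{G}$ and hence $i\cong j$.

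For the ($\Leftarrow$) direction, (b) gives $(aa^T)_{ik} = \sum_l a_{il}a_{kl} = \sum_l a_{jl}a_{kl} = (aa^T)_{jk}$ for every $k$, so rows $i$ and $j$ of $aa^T$ coincide; column equality symmetrically makes rows $i$ and $j$ of $a^Ta$ coincide. Using $aa^T = U\Sigma^2 U^T$, the row equality rewrites as $(U_{i\cdot}-U_{j\cdot})\Sigma^2 U^T = 0$; since $U$ is square orthogonal, $U^T$ is invertible, giving $(U_{i\cdot}-U_{j\cdot})\Sigma^2 = 0$, i.e.\ $U_{il}=U_{jl}$ on exactly the coordinates with $\lambda_l > 0$. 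The identical argument applied to $a^Ta = V\Sigma^2 V^T$ produces $V_{il}=V_{jl}$ on the positive spectrum. Together these are precisely $\theta^{(\text{SVD})}_i = \theta^{(\text{SVD})}_j$.

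The main obstacle I foresee is showing that ``same SVD embedding'' is well defined in the presence of repeated singular values, since $U$ and $V$ are determined only up to orthogonal rotations within each $\Sigma$-eigenspace and up to a joint column sign flip. The clean way through is to observe that both families of ambiguities act on $U$ by the same right-multiplication $U\mapsto UR$ on every row $U_{k\cdot}$ (and symmetrically on $V$), so the coordinatewise equalities $U_{il}=U_{jl}$, $V_{il}=V_{jl}$ on the positive spectrum either hold for every admissible SVD or for none. Consequently the iff statement above is unambiguous, and the two implications go through verbatim for any valid choice of $(U,V)$.
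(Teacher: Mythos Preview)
Your proof is correct and takes a more direct route than the paper's. The paper works through an intermediate permutation-matrix characterization: it introduces the set $\Omega(G)$ of permutations that swap nodes with equal SVD embeddings, and argues (by applying both sides to each eigenvector of $aa^T$ and $a^Ta$) that $\pi\in\Omega(G)$ if and only if $P_\pi aa^T = aa^T P_\pi = aa^T$ and $P_\pi a^Ta = a^Ta P_\pi = a^Ta$; it then separately shows this permutation condition is equivalent to the identical-neighborhood condition~(b), and finally notes $\Omega(G)\subseteq\aut{G}$ to obtain~(a). You bypass the permutation-matrix layer entirely: for $(\Rightarrow)$ you reconstruct the entries of $a$ straight from $a=U\Sigma V^T$ and the coordinate equalities $U_{il}=U_{jl}$, $V_{il}=V_{jl}$, and for $(\Leftarrow)$ you pass once through $aa^T=U\Sigma^2U^T$ and use invertibility of $U^T$. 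Your explicit derivation that~(b) alone already forces~(a) via the transposition $(i\,j)$ matches the paper's concluding remark but is spelled out more carefully. The main thing your write-up adds beyond the paper is the explicit treatment of SVD non-uniqueness under repeated singular values and kernel directions; your $\Sigma$-scaled embedding convention cleanly zeros the kernel coordinates and your observation that every admissible ambiguity acts by a common right-multiplication $U\mapsto UR$ makes the equality statement basis-independent, a point the paper leaves implicit.
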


From \Cref{thm:svd} we can directly derive \Cref{col:strict-svd}, in which we outline the exact set of symmetric graphs where SVD embeddings are strictly positional. These symmetric graphs are quite specific: they have duplicate nodes, \textit{i.e.}, nodes that are not only symmetric, but that \textbf{carry the exact same information: equal neighborhood}.

\begin{corollary}\label{col:strict-svd}
    Given a symmetric and unattributed ($\domA = \{0,1\}$) graph $\G{t_0}$, its SVD embeddings $(\theta^{(\text{SVD})}_i)_{i=1}^{\n{t_0}}$ are strictly positional if and only if there exist two nodes $i,j \in \Vt{t_0}$ such that $j \in \ORB{i}$ and $\a{t_0}{iv} = \a{t_0}{jv}, \; \forall v \in \Vt{t_0}$.
\end{corollary}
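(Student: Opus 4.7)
The plan is to deduce the corollary as an immediate consequence of \Cref{thm:svd} combined with the definition of strictly positional embeddings (\Cref{def:strict}).

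First, I would unpack \Cref{def:strict}: the SVD embeddings $(\theta^{(\text{SVD})}_i)_{i=1}^{\n{t_0}}$ fail to be strictly positional iff there exist two distinct nodes $i, j \in \Vt{t_0}$ with $\theta^{(\text{SVD})}_i = \theta^{(\text{SVD})}_j$. Because, for a fixed adjacency $\a{t_0}{}$, each node's SVD embedding is deterministic up to an orthogonal (sign/basis) ambiguity shared across all nodes, the ``almost everywhere'' clause is inert here — equality of two node vectors is preserved under any common orthogonal transform, so one does not need to argue about a probability distribution over embeddings beyond the uniform permutation of identifiers built into \Cref{def:pos}.

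Next, I would invoke \Cref{thm:svd} to characterize when two nodes receive identical SVD embeddings: $\theta^{(\text{SVD})}_i = \theta^{(\text{SVD})}_j$ iff (a) $i \iso j$, equivalently $j \in \ORB{i}$, and (b) the row/column identities $\a{t_0}{iv}=\a{t_0}{jv}$ and $\a{t_0}{vi}=\a{t_0}{vj}$ hold for every $v \in \Vt{t_0}$. For symmetric (undirected) adjacencies as assumed in the corollary, conditions (b) collapse into the single neighborhood equality $\a{t_0}{iv}=\a{t_0}{jv}$ for all $v$.

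Combining the two steps yields: the SVD embeddings are not strictly positional iff there exist distinct $i, j \in \Vt{t_0}$ with $j \in \ORB{i}$ and $\a{t_0}{iv} = \a{t_0}{jv}$ for all $v$, which is exactly the duplicate-node condition in the corollary (taken via contraposition against the ``strictly positional'' side, matching the discussion of duplicate nodes immediately preceding the statement). The main obstacle is already discharged by \Cref{thm:svd}; the remaining work is purely bookkeeping: verifying (i) the row/column collapse under symmetry, and (ii) that the ``almost everywhere'' qualifier is vacuous for the deterministic-up-to-orthogonal SVD construction, so the pointwise characterization of \Cref{thm:svd} transfers directly to \Cref{def:strict}.
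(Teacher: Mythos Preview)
Your approach is exactly the paper's: the proof there is a one-liner stating that the corollary follows directly from \Cref{thm:svd} together with \Cref{def:strict}, which is precisely what you unpack. One caveat worth flagging: your derivation correctly yields ``not strictly positional $\iff$ duplicate nodes exist,'' which is the contrapositive of what the corollary \emph{should} say---the printed statement appears to have the direction reversed (a typo), but your reasoning aligns with the intended content and the surrounding discussion.
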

%
%
%
%
Note that even in graphs with duplicate nodes, SVD will assign different embeddings to all other nodes. Therefore, although it can capture some degree of invariance from very specific nodes, SVD in general behaves as a strictly positional embedding. As such, it encodes the causal process from \Cref{fig:pos-scm}, \textit{i.e.}, it does not encapsulates the known invariance of the task: two isomorphic pairs must be assigned the same prediction.

Apart from SVD, we can also relate strictly positional node embeddings to neural matrix factorization~\cite{dziugaite2015neural}. Neural matrix factorization can be seen as using a one-hot encoding of the node identifier as the node embedding. Then, it uses a multi-layer perceptron in the link function ---in general if the graph is undirected there exists some level of weight-sharing between the two node embeddings in the link function as well. Since node identifiers are by definition unique, it is straightforward to see that {\em neural matrix factorization produces strictly positional node embeddings}.

Overall, our theory implies that the use of strictly positional node embeddings, such as SVD and neural matrix factorization, entails an incorrect causal structure under our assumptions that does not result in a causal lifting for the causal link prediction task in \Cref{eq:cf}. As a result, models using strictly positional node embeddings will not generalize as well as models using pairwise structural embeddings. More specifically, the sensitivity to training data here is attached to the fact that positional node embeddings might overfit to the training probes while not capturing its symmetry (invariance) properties of their orbits. Next, in the experimental section, we show how this result indeed translates into practice: (Strictly) Positional node embeddings struggle with generalizing to node pairs not probed during training.



\newcommand{\movetoappx}[1]{Details in the appendix}

\section{Experimental Results}\label{sec:res}

We now evaluate our theoretical findings on identifying and estimating the counterfactual query from \Cref{eq:cf-multi} with experimental data using our supervised learning solution from \Cref{eq:obj}. Concretely, we investigate how accurately (strictly) positional node embeddings (\Cref{def:strict}), structural node embeddings (\Cref{def:inv-node-emb}), and pairwise invariant embeddings (\Cref{def:pair-struct}) can estimate our counterfactual query from \Cref{eq:cf-multi}. We are interested in empirically answering the following four questions. 

\begin{enumerate}[leftmargin=*]
    \item[\textbf{ (Q1)}] How do structural node embeddings perform when our test set contains tuples $\UV$ that are node-wise isomorphic to the ones used in training $\IJt{1}, \ldots, \IJt{M}$? In \Cref{thm:bias} we proved that structural node embeddings are not unbiased estimators of \Cref{eq:cf-multi} in this setting. We are now interested in evaluating the practical impact of this result.
    \item[\textbf{ (Q2)}] How do (strictly) positional node embeddings perform when our test set contains tuples $\UV$ such that neither $U$ or $V$ participated in the probes $\IJt{1}, \ldots, \IJt{M}$ used for training? In \Cref{fig:pos-scm} we can see that (strictly) positional node embeddings cannot learn a single representation for multiple pairs. Does such a property translate into poor practical performance when new pairs are tested? That is, in an inductive setting where we have to extrapolate our knowledge from training, do we need to explore symmetries?
    \item[\textbf{ (Q3)}] How do (strictly) positional and structural (node and pairwise) embedding solutions perform in real-world scenarios where we are not aware of the underlying causal model? Here we are interested in testing all the assumptions involved in \Cref{prop:2} in real-world data.
    \item[\textbf{ (Q4)}] (\Cref{sec:res-struct}) In \Cref{fig:w-scm} we can see a central assumption in \Cref{prop:2}: The (interventional) link formation process can be retrieved by a common set of parameters representing the structure of the node pairs. Here we ask: Is this assumption reasonable with real-world data? To answer this, we test whether structurally similar node pairs have the same probe outcome distribution (\Cref{eq:int}).
\end{enumerate}

We evaluate \textbf{Q1} and \textbf{Q2} in a knowledge base completion task, \textbf{Q2} in a covariance matrix estimation task and \textbf{Q2}, \textbf{Q3} and \textbf{Q4} in two user-item recommendation tasks. For each scenario, we present the problem as the counterfactual query in causal link prediction and discuss the conditions for its identification. Next, we briefly introduce the graph embedding methods we evaluate in the proposed tasks.
\\~\\
\paragraph{Node embedding methods}
As representatives of (strictly) \textbf{positional node embedding} methods we use Nonnegative Matrix Factorization (NMF)~\cite{lee1999learning}, SVD~\cite{halko2009finding} and positional GCN~\cite{Kip+2017}, which is obtained by using node identifiers as node features. As for \textbf{structural node embeddings}, we choose the classical GCN~\cite{Kip+2017} model and the more recent UniMP~\cite{shi2020masked} GNN. Specifically for the knowledge base experiment, we also choose some popular \textbf{knowledge graph embeddings}, namely TransE~\cite{bordes2013translating}, DistMult~\cite{yang2014embedding} and ComplEX~\cite{trouillon2016complex}. For all methods, we use multi-layer perceptrons as link functions ($\rho$ in \Cref{eq:obj}). Implementation details can be found in \Cref{sec:data}.
\\~
\paragraph{Structural pairwise embedding methods}
The importance of structural pairwise embeddings has been showed only recently in \cite{srinivasan2020equivalence} and thus its literature is still underexplored. Here, we use SEAL~\cite{zhang2018link} and Neo-GNNs~\cite{yun2021neo} as representative methods. SEAL extends GNNs to output structural pairwise embeddings. The idea behind SEAL was generalized and named labeling trick~\cite{zhang2021labeling}. Essentially, methods of this kind apply a GNN over the graph but mark (only) the two nodes in the represented pair. In its simplest form, this marking can be done by a single bit added to the node features ---indicating whether the node is in the pair. Here, we also consider what we call Label-GCN ---an approximate and more scalable application of the labeling trick. Originally, the labeling trick computes the GNN representation of each pair separately. This process is considerably less scalable than a standard GNN model where only one GNN computation is performed to represent all pairs. To reduce the computational burden of the labeling trick, in Label-GCN we mark all the pairs in a mini-batch of our (stochastic) optimization procedure. By using a small mini-batch size, we expect representations from a sparse graph to not interfere with each other ---providing a good approximation of the labeling trick method. Note that during test we also need to use the same small mini-batch size. Implementation details can be found in \Cref{sec:data}.

\subsection{Impact on knowledge graph queries}\label{sec:res-kg}
Here we consider the causal link prediction task in knowledge graphs. In this scenario, we are interested in questions as those described earlier:
\begin{displayquote} ``Given our current knowledge about the world and some fact-finding mission that added a new piece of information, \textit{i.e.}\ new relations, what other pieces would have been added had we investigated these relations in other parts of the graph?''
\end{displayquote}

\begin{figure*}[ht]
\centering
\begin{minipage}[t]{.4\textwidth}
\centering
\includegraphics[width=1\linewidth]{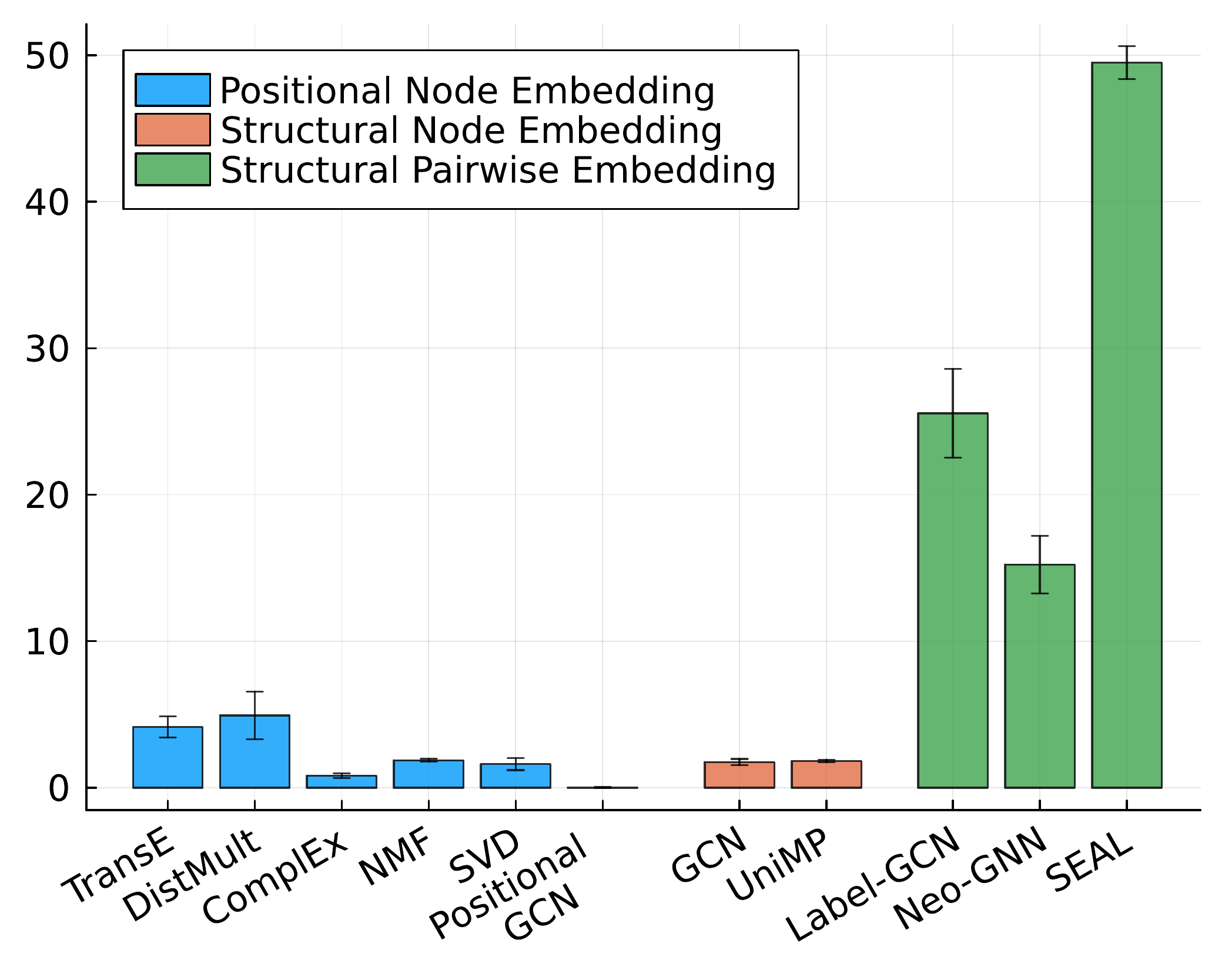}
\caption{Results for the family tree dataset. We present the average and the standard deviation over five runs for the Hits@500 (out of 10,538 potential edges) metric in each method. Note that the larger $\uparrow$ the better the performance.}
\label{fig:family-tree}
\end{minipage}\qquad
\begin{minipage}[t]{.55\textwidth}
\centering
\resizebox{1\linewidth}{12em}{
\tikzstyle{vertex}=[font=\scriptsize, draw, fill=black!10, circle, minimum size=20pt,inner sep=1pt, align=center]
\tikzstyle{selected vertex} = [draw, vertex, fill=green!24]
\tikzstyle{edge} = [draw,thick]
\tikzstyle{weight} = [font=\scriptsize, align=center, above, sloped, inner sep=2pt, pos=0.5]
\tikzstyle{green edge} = [draw,line width=4pt,-,green!50]
\tikzstyle{red edge} = [draw,line width=4pt,-,red!50]
\tikzstyle{ignored edge} = [draw,line width=5pt,-,black!20]
\pgfdeclarelayer{bottom}
\pgfdeclarelayer{middleb}
\pgfdeclarelayer{middlet}
\pgfdeclarelayer{top}
\pgfsetlayers{bottom,middleb,main,middlet,top}

\begin{tikzpicture}[scale=3.8]

\node[vertex](John) at (0,0) {John};
\node[vertex](Bob) at (0,-0.5) {Bob};
\node[vertex](Anna) at (-0.5,-0.5) {Anna};
\node[vertex](Sarah) at (0,-1.0) {Sarah};
\node[vertex](Alex) at (-0.5,-1.0) {Alex};

\path[edge] (John) -> node[weight] {\texttt{parentOf}} (Bob);
\path[edge] (Bob) -> node[weight] {} (Alex);
\path[edge] (Bob) -> node[weight] {\texttt{parentOf}} (Sarah);
\path[edge] (Anna) -> node[weight] {\texttt{parentOf}} (Alex);
\path[edge] (Anna) -> node[weight] {} (Sarah);
\node[fill=white] (pa1) at (-0.25,-0.66) {\scriptsize \texttt{parentOf}};

\node[vertex](Mary) at (0.5,0) {Mary};
\node[vertex](Alice) at (0.5,-0.5) {Alice};
\node[vertex](David) at (1.0,-0.5) {David};
\node[vertex](Emily) at (1.0,-1.0) {Emily};
\node[vertex](Ben) at (0.5,-1.0) {Ben};

\path[edge] (Mary) -> node[weight] {\texttt{parentOf}} (Alice);
\path[edge] (Alice) -> node[weight] {} (Emily);
\path[edge] (Alice) -> node[weight] {\texttt{parentOf}} (Ben);
\path[edge] (David) -> node[weight] {\texttt{parentOf}} (Emily);
\path[edge] (David) -> node[weight] {} (Ben);
\node[fill=white] (pa1) at (0.75,-0.66) {\scriptsize \texttt{parentOf}};

\path[edge] (John) -> node[weight] {} (Alice);
\path[edge] (Mary) -> node[weight] {} (Bob);
\node[fill=white] (pa1) at (0.25,-0.167) {\scriptsize \texttt{parentOf}};

\path (Alex) ->  node[weight] {\texttt{sibling}} (Sarah);
\path[green edge] (Alex) -> (Sarah);
\path[-]  (Alex)  edge   [bend right=20] node[weight] {\texttt{sibling}} (Emily);
\path[-]  (Alex)  edge [red edge, bend right=20] (Emily);
\path (Sarah) ->  node[weight] {\texttt{sibling}} (Ben);
\path[red edge] (Sarah) -> (Ben);
\path (Ben) ->  node[weight] {\texttt{sibling}} (Emily);
\path[green edge] (Ben) -> (Emily);

\node[draw,align=left, dashed] (pa1) at (1.3,0) {
\scriptsize 
\colorbox{red!50}{ Possible negative interventions at $t_1$ } \\ 
\scriptsize
\colorbox{green!50}{ Possible positive interventions at $t_1$ } }
;
\end{tikzpicture}
}
\caption{Synthetic illustrative example of a knowledge graph with two isomorphic subtrees as the knowledge base observed at (pre-trial) time $t_0$. In green and red we highlight possible new relations or non-existent relations after an intervention at (post-trial) time $t_1$.}
\label{fig:ft-ex}
\end{minipage}
\end{figure*}

To answer queries of this type, we construct a knowledge base which comprises 100 non-isomorphic family trees, built using the methods provided in Hohenecker et al.~\cite{Hohenecker2020}, whose procedure we follow verbatim. In total, the dataset contains 28 relation types. The \texttt{parentOf} relation represents the current knowledge about the world and it is used to construct the observed graph $\G{t_0}$. The goal is to predict the other family relations, which can be entirely determined just from the \texttt{parentOf} relations, \textit{i.e.}\ the observed graph $\G{t_0}$. Thus, it is straightforward that our model completely satisfies all the conditions from \Cref{thm:main,prop:2}.

In this dataset, 30\% of the family trees contain two isomorphic subtrees, that we split between train and test such that all the relations within one subtree form the training probes and all the relations in the isomorphic subtree form the test relations. Structural node embeddings assign the same representation to isomorphic nodes and therefore will not distinguish, for example, \texttt{girlCousinOf} and \texttt{sisterOf} relations, see Alex and Sarah vs.\ Alex and Emily in the pairwise symmetric graph from \Cref{fig:ft-ex}. This last example is indeed the type of situation from \Cref{thm:bias} where we evaluate \textbf{Q1}. To induce this kind of problem in the task, we consider test non-links having each end-point in a different subtree. Strictly positional node embeddings can distinguish those two relations, but they may not make the same prediction for isomorphic pairs due to higher variance (\Cref{fig:pos-scm}, \textbf{Q2}),
and therefore the predictions in one subtree might be different from the ones in the other isomorphic subtree, impacting generalization. As predicted by \Cref{thm:bias}, structural pairwise embeddings do not suffer from these issues, as shown by the results in \Cref{fig:family-tree}, where both (strictly) positional node embeddings and structural node embeddings obtain poor performances when compared to structural pairwise embeddings. We show the results using the Hits@500 metric where we get the $500$ node pairs with the highest probability of forming a link given by the model and compute the ratio that do form a link according to the data, \textit{i.e.}, their labels are positive.

\subsection{Impact on covariance matrix estimation}
We now present a task mostly unexplored in the causal link prediction literature. Consider observing samples from a joint distribution. For instance, each sample is a list of measurements from a subject (e.g., a patient) and the distribution is over their corresponding (medical) attributes. We can then use these samples to construct an estimated covariance matrix for the attribute variables. We now present the following counterfactual query:
\begin{displayquote}
``At a later time we want to refine our estimated covariance matrix by adding more data from more subjects but only for a subset of the attribute variables.
What would have been the re-estimated covariance values for the attributes that we did not collect extra data?''
\end{displayquote}
Note that in this setting the experiments are made simultaneously, since a node in a graph is an attribute variable and for a subject all pairs measurements are collected at the same time (\Cref{def:y-int}). Further, since each subject is (presumably) not related to other subjects, measurements do not interfere with the ones from other subjects (\Cref{def:y-int}). Here, the only assumption needed is that exogenous variables from subjects are i.i.d.~\Cref{thm:main}. Such an assumption should hold if subjects are selected independently at random, \textit{i.e.}, they are not related. Time exchangeability (cf.\ \Cref{ass:time-exc}) is also satisfied since the links are created simultaneously and the order in which they change (estimation is refined) is irrelevant due to the samples being i.i.d.. Further, since the covariance matrix forms a complete graph, the non-link ignorability assumption (cf.\ \Cref{ass:link}) is not needed to apply our result. Finally, the identifier of an attribute, \textit{i.e.}, the name we give to it, does interfere in its probe outcomes (cf.\ \Cref{ass:id-exc}).

\begin{figure*}[ht]
\centering
\begin{minipage}[t]{.4\textwidth}
\centering
\includegraphics[width=1\linewidth]{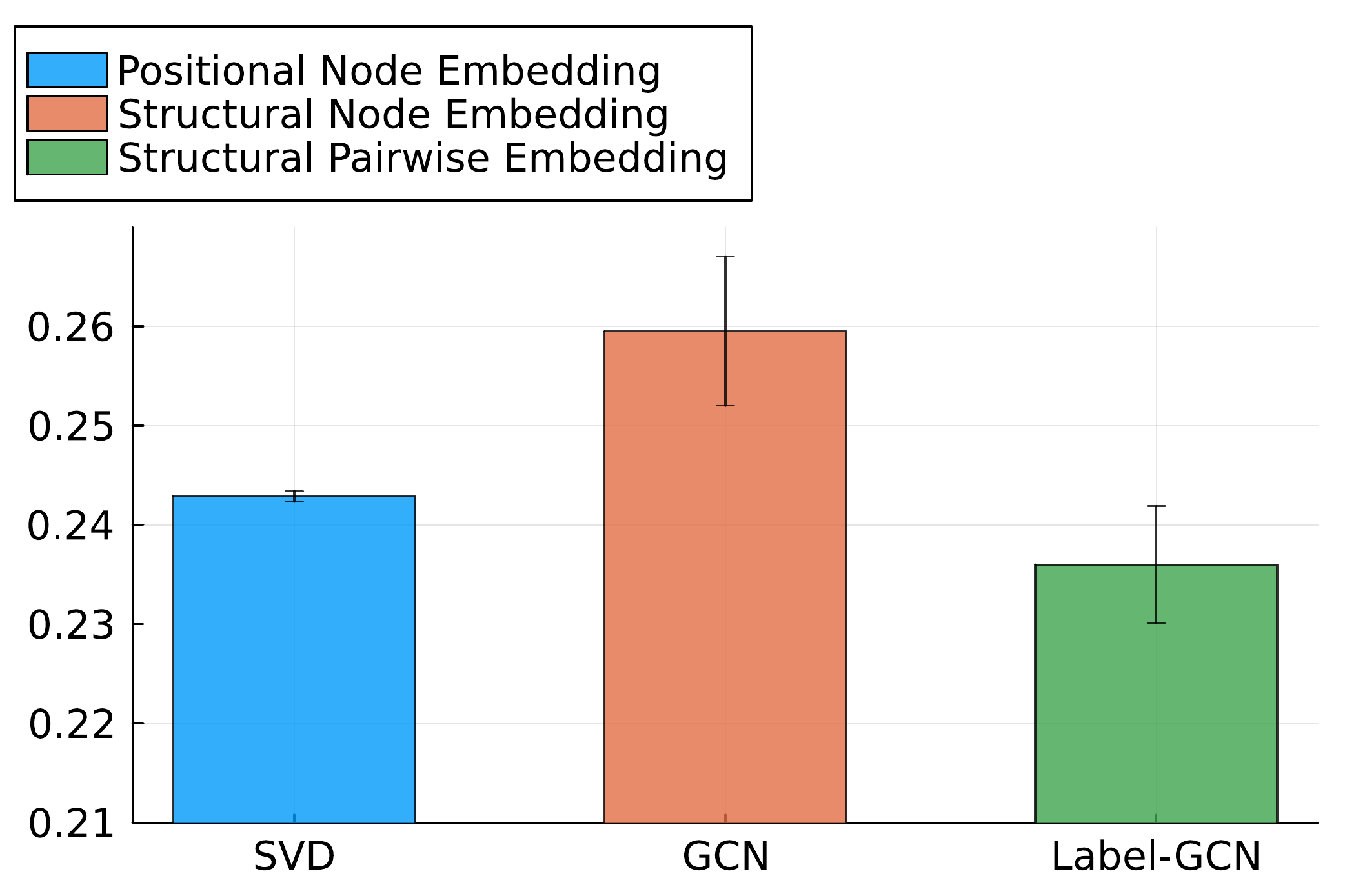}
\caption{Results for the covariance matrix dataset. We present the average and the standard deviation over five runs for the MSE metric in each method. Note that the smaller $\downarrow$ the better the performance.}
\label{fig:covariance}
\end{minipage}\qquad
\begin{minipage}[t]{.55\textwidth}
\centering
\resizebox{1\linewidth}{10em}{
\tikzstyle{weight} = [font=\small, align=center, above, sloped, inner sep=2pt, pos=0.3]
\begin{tikzpicture}[scale=0.75]
\node at (2.2,4.5) {\small Attributes};
\fill[blue!40!white] (2,2) rectangle (4,0);
\fill[orange!40!white] (2,2) rectangle (4,4);
\fill[orange!40!white] (0,0) rectangle (2,2);
\fill[orange!40!white] (0,4) rectangle (2,2);

\node (a) at (2,3) {};
\node[align=left] (b) at (9,3.5) {\scriptsize What would have been the\\ \scriptsize new estimated covariance between these \\ \scriptsize  attributes had we intervened in them?};
\path[draw, very thick, dashed] (a) -> (b);

\node (c) at (2.5,1.5) {};
\node[align=left] (d) at (9,1) {\scriptsize Covariance updated with extra\\ \scriptsize measurements after intervening in \\ \scriptsize this subset of attributes };
\path[draw, very thick, dashed] (c) -> (d);

\draw[step=0.5cm,black,very thin, -] (0,0) grid (4,4); 
\node[rotate=90] at (-0.5,2.25) {\small Attributes};
\end{tikzpicture}
}
\caption{An illustration of the process of acquiring more data (measurements) for a submatrix ---induced by a subset of attributes--- of our original estimated covariance matrix. We then query how would the rest of the matrix be updated had we acquired data for the other attributes as well.}
\end{minipage}
\end{figure*}

\paragraph{Baselines} We do not evaluate Neo-GNNs and SEAL here since these methods consider the neighborhood of a pair and, in this task, the equivalent graph $\G{t_0}$ is often complete. Hence, SEAL would be equivalent to Label-GCN and Neo-GNNs would be equivalent to a standard GNN. Therefore, we chose to evaluate only Label-GCN as a structural pairwise embedding method. Moreover, the sparsity assumption to train Label-GCN does not hold and, hence, we compute each pair representation separately in the mini-batch ---which is equivalent to performing the labeling trick. Our experiments compare Label-GCN to one method of each other node embedding type.

\subsection{Impact on recommender systems}\label{sec:res-rec}

Here we consider two user-item recommendation datasets, the Amazon Electronics (AE)~\cite{wan2020addressing} and the Last FM (LFM)~\cite{melchiorre2021investigating} datasets. We built the observed graph $\G{t_0}$ from user-item interactions occurring between 11/24/2015 and 12/24/2015 in AE and between 2007 and 2013 in LFM. Now, given these observed graphs we face the following counterfactual problem:
\begin{displayquote}
`` At a later time, we can probe by exposing a subgroup of users to items. After observing their interactions, we wonder what would the other users consume had we exposed the items to them?''
\end{displayquote}
In both datasets we consider the subgroup of male users as the one we probe in. At test time, our counterfactual queries are about female users. We use in both train and test interactions happening between 12/24/2015 and 12/31/2015 for AE and in 2014 for LFM. Note that in such datasets we have the outcome of probes that turn out to create edges, but not of probes with nonedge outcomes. To overcome this issue, we select nonedge probe examples by sampling nodes uniformly at random. Since the graph is quite sparse, with high probability the two nodes are unrelated and would have formed a nonedge had we probed in them. 

In these datasets we will evaluate tasks that do not necessarily fulfill all assumptions needed for our application of causal lifting. Apart from fulfilling identifier exchangeability (cf.\ \Cref{ass:id-exc}), it is unclear whether \Cref{ass:time-igno,ass:time-exc,ass:link,def:y-int} hold (\textbf{Q3}) in this application. For instance, in LFM a user listening to a song from an artist might influence their choice to listen to another song of the same artist ---possibly violating \Cref{def:y-int} in the experiments. Then, given this scenario, how do (strictly) positional node embeddings and structural (node and pairwise) embeddings perform? 
The results in \Cref{fig:electronics,fig:music} show that even under these assumption violations, {\em pairwise embedding methods consistently outperform node embedding methods} (the results use the Hits@500 and Hits@50 metrics as described in \Cref{sec:res-kg}). In particular, testing pairs of users unseen in training (probes) makes (strictly) positional node embeddings have poor performance in both tasks. 
This observed behavior can be attributed to the predictor needing to use symmetries between males and females for knowledge transfer in this task, rather than relying solely on a node's positional embedding.
Interestingly, in the AE dataset (\Cref{fig:electronics}) we see that the GCN structural node embedding, although worse than structural pairwise embeddings, still achieves reasonable performance, but its performance is poor in the LFM dataset (cf.\ \Cref{fig:music}), a result that can be attributed to a higher bias of structural node embedding (\Cref{thm:bias}) in the LFM dataset.
Overall, the results clearly show that structural pairwise embeddings are superior for these tasks than node embeddings.

\paragraph{Baselines} Note that in user-item recommendation tasks the graph is bipartite. Thus, Neo-GNN, a method that considers the one-hop neighborhood of a pair would be equivalent to a standard GNN model. Thus, we choose not to evaluate it here.

\begin{figure*}[ht]
\centering
\begin{minipage}[t]{.45\textwidth}
\centering
\includegraphics[width=1\linewidth]{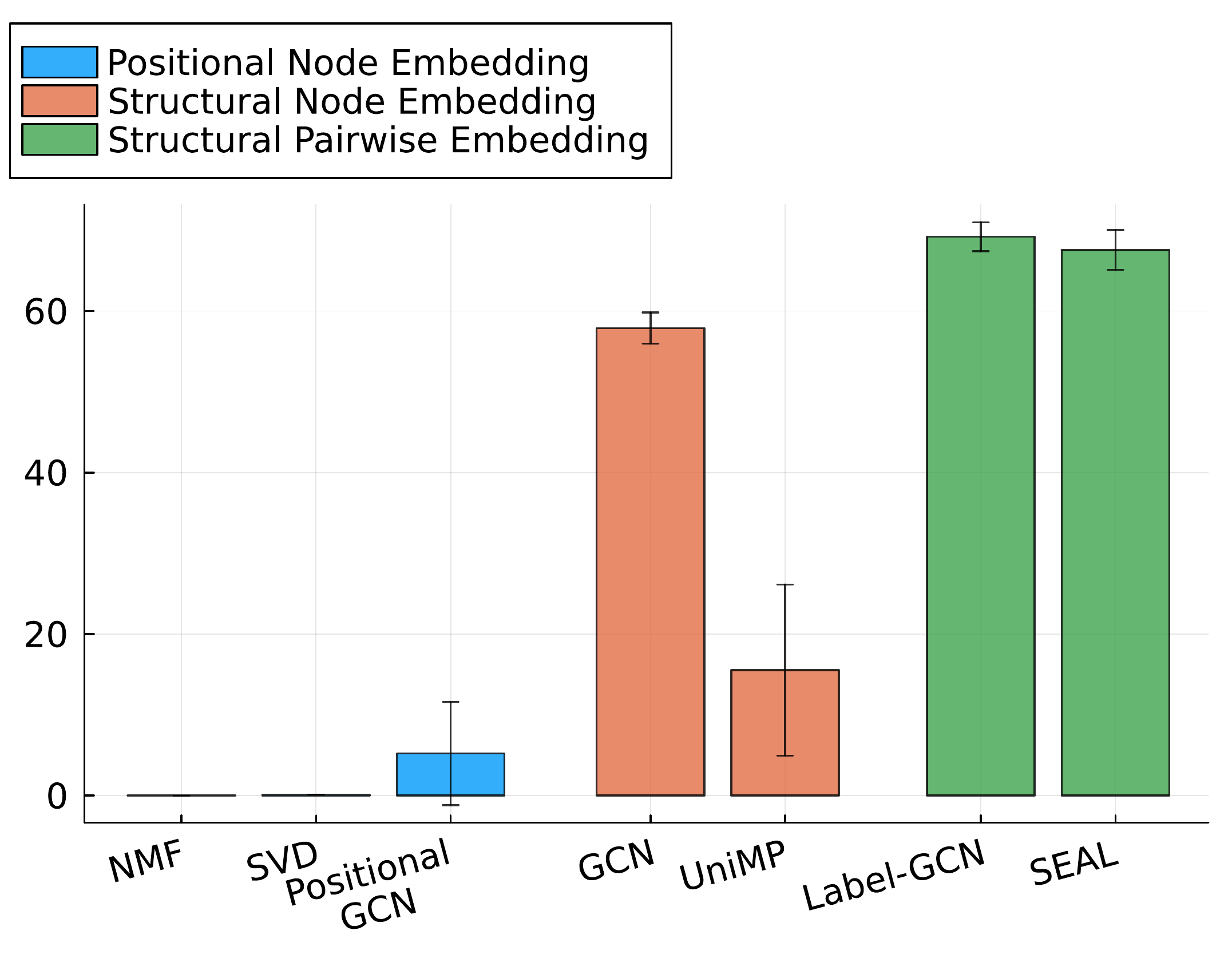}
\caption{Results for the AE dataset. We present the average and the standard deviation over five runs for the Hits@500 metric (out of 6,138 potential edges) in each method. Note that the larger $\uparrow$ the better the performance.}
\label{fig:electronics}
\end{minipage}\qquad
\begin{minipage}[t]{.45\textwidth}
\centering
\includegraphics[width=1\linewidth]{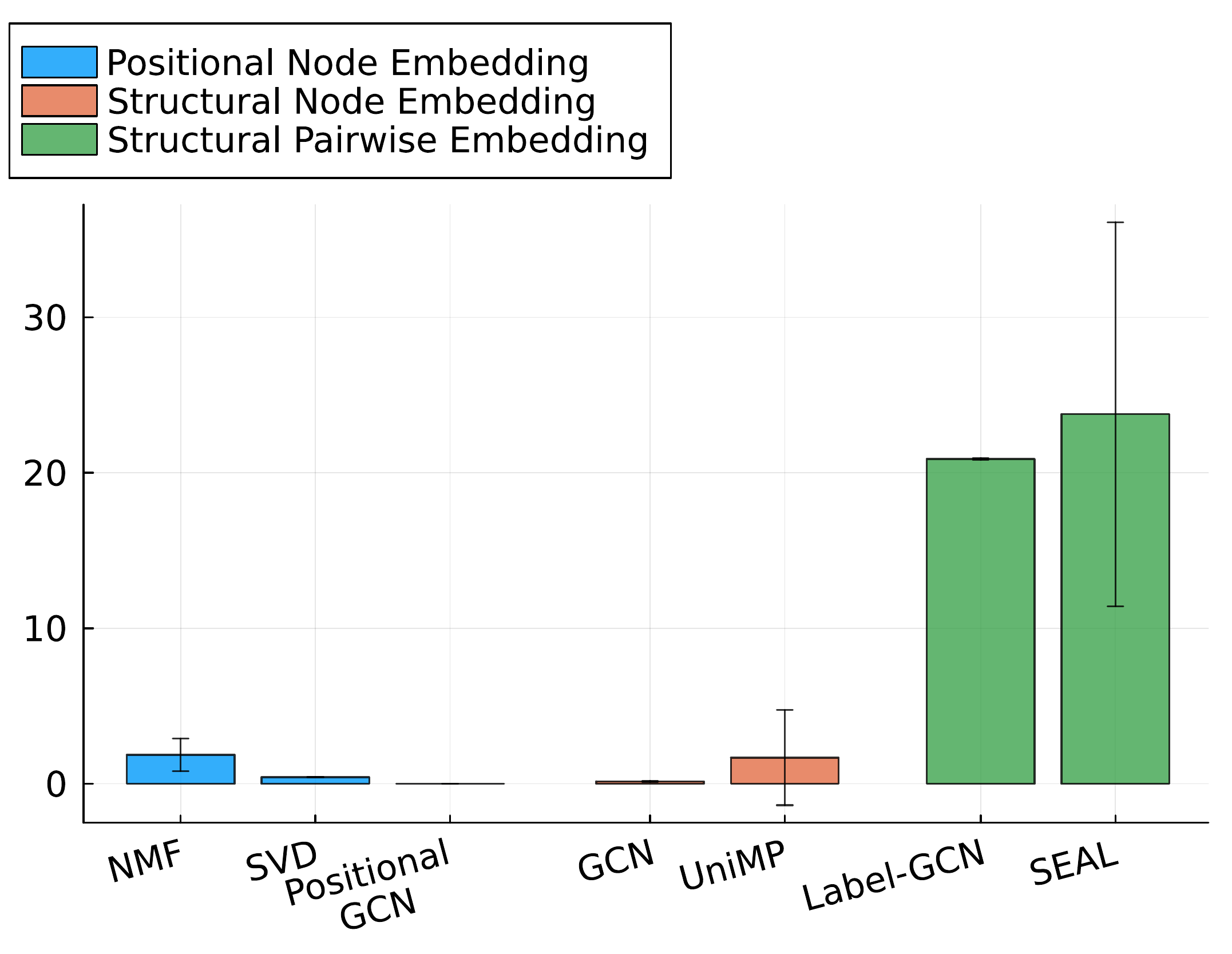}
\caption{Results for the LFM dataset. We present the average and the standard deviation over five runs for the Hits@50 metric (out of 4,256 potential edges)  in each method. Note that the larger $\uparrow$ the better the performance.}
\label{fig:music}
\end{minipage}
\end{figure*}





\section{Conclusions}
In this work we have shown how invariances can play a key role in identifying counterfactual queries with interventional data. Classical back-door adjustments for link prediction rely on over-simplifying causal independence assumptions (DAG-based constructions). Here, we take a different route: Consider a universal class of causal models and define causal mechanism invariances. By doing so, we show the SCM acquires a property that we denote {\em causal lifting}, which enables performing symmetry-based adjustments, \textit{i.e.}, use orbits instead of covariates.

By approaching causal link prediction with symmetry-based adjustments, we are able to consider a wide class of \emph{possibly path-dependent} causal models. To the best of our knowledge, our work is the first to consider such a setting and, moreover, to show the importance of invariant pairwise embeddings as estimators. We hope to shed light into this (until now) overlooked area of invariant representation learning and to incorporate symmetry-based adjustments in future causal literature works.

\section{Acknowledgments}
This work was funded in part by the National Science Foundation (NSF) awards CAREER IIS-1943364, CCF-1918483, and CNS-2212160 and an Amazon Research Award. L.\ Cotta is funded in part, by a postdoctoral fellowship provided by the Province of Ontario, the Government of Canada through CIFAR, and companies sponsoring the Vector Institute. Any opinions, findings and conclusions or recommendations expressed in this
material are those of the authors and do not necessarily reflect the views of the sponsors. 

\enlargethispage{20pt}



\bibliographystyle{apalike}
\bibliography{refs}

\appendix
\newpage
\appendix

\section{Notation and Background}
\label{sec:background}
\paragraph{Graph notation} We consider a graph $G$ with adjacency $a \in \domA^{n \times n}$, where each entry $a_{ij} \in \domA$ belongs to an arbitrary domain $\domA$ with at least two elements. Without loss of generality, we consider the node set $V = [n]:= \{1,\ldots,n\}$. In simple unattributed graphs $a$ is a binary matrix, while for general attributed (multi)graphs $a$ can be seen as a tensor, where its third mode encodes edge and node attributes\footnote{The node attribute of node $i$ can be encoded in its self-loop entry $a_{ii}$.}. Note that following this definition $a$ completely defines $G$. Since our exposition is mostly agnostic to its third mode, unless otherwise stated we consider $a \in \domA^{n \times n}$. Further, we denote the value representing a non-link as $\mathbf{0} \in \domA$. Finally, we use capital letters to denote the corresponding random variable of an observation, \textit{e.g.},\ $A$ is a random variable of $a$.
As such, if $a \in \domA^{n \times n}$ was produced by some mechanism that contains some intrinsic randomness (even noise), then $A$ describes the other possible outcomes with their respective probabilities.
\\~\\
\paragraph{Symmetry definitions} We denote the symmetric group of $[n]$ by $\S_n$, \textit{i.e.}, the set of all permutations of $\{1,\ldots,n\}$. Further, we let $\pi \cdot i$ be the mapping of $i \in [n]$ in permutation $\pi \in \S_n$ and  $\pi^{-1} \cdot i$ its inverse, \textit{i.e.}, the element that $\pi$ maps to $i$. Finally, we let the action of $\pi$ in $a$ be denoted by $ \pi \cdot a $, \textit{i.e.}, $a_{ij} = (\pi \cdot a)_{\pi \cdot i\pi \cdot j}$. Note that $ \pi \cdot a $ permutes the rows and columns of $a$ (or first two modes if $a$ is a tensor) according to the permutation $\pi$. We say that $G \iso H$ are isomorphic graphs if there exists a permutation $\pi \in \S_n$ such that the adjacency of $H$ is equal to $\pi \cdot a$.

Apart from isomorphism between graphs, we now define isomorphism between nodes and node pairs in the same graph. For that, we need to first define the automorphism group of a graph $G$: $\aut{G} \colon \{ \pi \in \S_n \colon \pi \cdot a = a \}$. Note that $\aut{G}$ defines the set of permutations that map the graph $G$ to itself. Now,  we say nodes $i$ and $j$ in $G$ are isomorphic\footnote{In graph theory we also often say $i$ is similar to $j$.} $i \iso j$ if there exists a permutation 
$\pi \in \aut{G}$ where $j = \pi \cdot i$. Similarly, two node pairs $\ij,\uv$ in $G$ are isomorphic $\ij \iso \uv$ if there exists a permutation $\pi \in \aut{G}$ where $\uv = \pi \cdot (i, j)$. Finally, we denote by $\orb{i}: \{ j \in V \colon j \iso i \}$ the orbit of node $i$ in $G$ and by $\orb{ij}: \{ \uv \in V^2 \colon \uv \iso \ij \}$ the orbit of the node pair $\ij$ in $G$. Overall, the orbit of a node (or node pair) defines the set of nodes (or node pairs) isomorphic to it (including itself). Note that the orbit of a node or of a node pair in $G$ contains elements other than themselves only if $\setsize{\aut{G}} > 1$. We call graphs that satisfy such property \emph{symmetric graphs}.

Now that we have defined symmetry between graphs, nodes and node pairs, we can turn our attention to a symmetry in the graph distribution. The random variable of a graph is finite (jointly) exchangeable if any two isomorphic graphs are generated with the same probability. That is, node identifiers, that are what distinguish isomorphic graphs, are not relevant to the task at hand. In \Cref{def:exc} we overload this definition and define finite exchangeable graphs as graphs that come from such a distribution. Throughout this work we consider finite exchangeable graphs as input data to our problem, \textit{i.e.}, $\G{t_0}$ in \Cref{eq:cf,eq:cf-multi}. Although this is an assumption about the data distribution, finite exchangeability is also what distinguishes graph from sequence data. If node identifiers are not arbitrary, we can treat the graph's adjacency as a long sequence of edges $(a_{11}, a_{12}, \ldots)$. Thus, finite exchangeability can be seen as assuming that our input data is a graph.

\begin{definition}[Finite exchangeable graphs] \label{def:exc} We say that an adjacency matrix random variable $A$ is finite (jointly) exchangeable if
$$ P(A=a) = P(A=\pi \cdot a ), \quad \forall \pi \in \S_n, a \in \domA^{n \times n} .$$
We overload finite (joint) exchangeability to ease notation and say that a graph $G$ is finite exchangeable if its adjacency matrix random variable $A$ is finite (jointly) exchangeable.
\end{definition}

\paragraph{Causality definitions} 

Any causal query can be seen as an inquiry about the underlying Structural Causal Model (SCM) of the task~\cite{elias2020PCH}. An SCM is a mathematical description of the mechanisms behind the data generating process of interest. We start by formally defining what an SCM is below.

\begin{definition}[Structural Causal Model (SCM)~\cite{elias2020PCH}]\label{def:scm}
A Structural Causal Model (SCM) is a 4-tuple $\m = ( \U, \V, \F, P(\U) ) $, where

\begin{itemize}

    \item $\U$ is the set of external noise random variables ---also called background or exogenous variables--- which are generated by (unknown) mechanisms outside the model;
    
    \item $\V = \{ \V_1, \V_2 \dots, \V_q \}$ is the set of endogenous random variables, which are generated by variables inside the model ($\V \cup \U$);
    
    \item $\F = \{ f_1, f_2 \dots, f_q\} $ is a set of functions defining a mapping between $\U$ and $\V$ with each $f_i$ mapping from the domains of $\Parents_i$ and $\U_{\V_{i}}$ to $\V_i$ with $\U_{\V_{i}} \subseteq \U, \Parents_i \subseteq \V \backslash \V_i $. In practice, each $f_i \in \F$ is a mechanism that outputs the $i\th$ endogenous variable $\V_i$ given its exogenous variables $\U_{\V_i}$ and its endogenous parent variables $\Parents_i$, \textit{i.e.}
    $$ \V_i = f_i( \Parents_i, \U_{\V_{i}} ) ; $$
    and
    
    \item $P(\EuScript{U})$ is a probability distribution over $\EuScript{U}$.
    
\end{itemize}

\end{definition}

Note from above that an SCM $\m$ is a data generating process for $\V$ and as such induces an (associational) probability distribution $P(\V = v) = \sum_{u} \prod_{i \mid \V_i \in \V } P(v_i \mid \parents_i u_{\V_{i}} ) P(u)$. We can then define the interventional distribution $P( \V(\V_i = v_i) )$ as the distribution induced by an altered model $\m_{\V_i=v_i}$, where $\m_{\V_i=v_i}$ is exactly as $\m$ with the difference that $\V_i$ is replaced by a constant value $v_i$ ---we often denote such distribution by $P_{\V_i=v_i}(\V)$. Now, the counterfactual distribution $P( \V(\V_i = v_i) \mid \V = v' )$ can be defined as the distribution induced by the altered model $\m_{\V_i=v_i}\mid v'$, where $\m_{\V_i=v_i} \mid v'$ is exactly as $\m_{\V_i=v_i}$ with the difference that $P(\U)$ is replaced by $P(\U \mid \V = v')$, \textit{i.e.}, the exogenous variables distribution changes based on our observed evidence $v'$ ---we often denote such distribution by $P_{\V_i=v_i \mid v'}(\V)$. Finally, note that the evidence $v'$ can also be the outcome of an interventional distribution, as we have in \Cref{eq:cf}. In this case, what changes is how we update the exogenous variable distribution, \textit{i.e.}, $P(\U)$ is replaced by $P(\U \mid \V(\V_i = v_i) = v)$. From these definitions, we now have the tools to evaluate interventional and counterfactual quantities, as required by our task (cf.\ \Cref{eq:cf,eq:cf-multi}).

\section{Link Prediction through self-supervision} \label{sec:selfsuper}
In what follows we formalize the \textbf{observational} task of predicting whether a node pair $\ij \in \Vt{t_0} \times \Vt{t_0}$ forms a link in $\G{t_0}$ in the context of Graph Neural Networks and Matrix Factorization. We will show how, for both methods, this task can be viewed as a self-supervised learning one. Let $\A{t_0}{-ij}$ and $\a{t_0}{-ij}$ be respectively the random variable $\A{t_0}{}$ and the adjacency matrix $\a{t_0}{}$ without the pair $(i,j) \in \Vt{t_0} \times \Vt{t_0}$. 
Note how this is different from assuming $(i,j)$ is a nonedge (\textit{i.e.}, that there is no edge between nodes $i$ and $j$).
The adjacency matrix $\a{t_0}{-ij}$ has no information about $\a{t_0}{ij}$, including whether it is an edge or a nonedge.
\\~\\
\paragraph{Relation to Graph Neural Networks (GNNs)} Existing works in link prediction using GNN node embeddings make use of the graph structure to predict missing links~\cite{hu2020ogb}. To predict $\A{t_0}{IJ}$, the embeddings of $I$ and $J$ are obtained by applying a GNN over the training graph $\G{t_0}$ with adjacency $\a{t_0}{}$. Thus, instead of learning the self-supervision task
$P(\A{t_0}{IJ}= \a{t_0}{IJ} \mid \A{t_0}{-IJ} = \a{t_0}{-IJ} )$,
previous works on GNNs for link prediction are learning
$P(\A{t_0}{IJ}= \a{t_0}{IJ} \mid \A{t_0}{} = \a{t_0}{} )$.
Note, however, that $\A{t_0}{IJ}$ is contained in $\A{t_0}{}$, hence this is not a sound statistical learning task. Instead, we should remove the information about $\A{t_0}{IJ}$ and thus use the self-supervised objective. In practice, we note that the information about $\A{t_0}{IJ}$ is not directly encoded in GNN embeddings, \textit{i.e.}, it is used implicitly via the message-passing scheme. Thus, most training procedures are not impacted when using $\a{t_0}{}$ instead of $\a{t_0}{-IJ}$ to compute GNN embeddings.
\\~\\
\paragraph{Relation to matrix factorization} Matrix (or tensor) factorization methods are one of the most used tools for link prediction and clustering tasks in graphs. For undirected unattribued graphs, such methods learn a matrix of embeddings $\phi \in \mathbb{R}^{ n \times d}$, where each row $\phi_i$ is the embedding of node $i$. For directed graphs, $\phi$ is a pair of such matrices, encoding the source and the target embedding of each node. Finally, heterogeneous graphs also add edge type embeddings in $\phi$. How do these methods learn $\phi$?

Matrix factorization-based models learn the joint distribution
$P(\A{t_0}{} = \a{t_0}{} \mid \Phi = \phi )$. The key assumption here is edge (conditional) independence, that is
\begin{equation}
    P(\A{t_0}{} = \a{t_0}{} \mid \Phi = \phi ) = \prod_{\ij \in \Vt{t_0} \times \Vt{t_0}} P(\A{t_0}{ij} = \a{t_0}{ij} \mid \Phi = \phi ).\label{eq:mf}
\end{equation}
These methods learn $\phi^*$ by maximizing the log-likelihood of \Cref{eq:mf}, a problem that can be written as
\begin{equation}
   \phi^* = \argmax_{\phi} \mathbb{E}_{\IJ}\big[ \log P( \A{t_0}{IJ}=\a{t_0}{IJ} \mid \Phi = \phi ) \big],\label{eq:mle-mf}
\end{equation}
with $\IJ \sim \text{Uniform}(\Vt{t_0} \times \Vt{t_0})$.

Finally, what is the relationship between \Cref{eq:mf} and the self-supervised learning task
$P(\A{t_0}{IJ} = \a{t_0}{IJ} \mid  \A{t_0}{-IJ} = \a{t_0}{-IJ} )$
? First, note that we can rewrite the self-supervised learning task when learning parameters $\phi$ as
\begin{equation}
    P(\A{t_0}{IJ} = \a{t_0}{IJ} \mid  \A{t_0}{-IJ} = \a{t_0}{-IJ} ) = \int_{ \phi } P( \A{t_0}{IJ} = \a{t_0}{IJ} \mid \Phi = \phi ) P(\Phi = \phi \mid \A{t_0}{-IJ} = \a{t_0}{-IJ}  ) \,d\phi,\label{eq:ssl-mf}
\end{equation}
which can be expressed as the log-likelihood
\begin{equation}
    \phi^* = \argmax_{\phi} \mathbb{E}_{\IJ}\big[ \log \mathbb{E}_{\Phi = \phi \mid \a{t_0}{-IJ} }[ P( \A{t_0}{IJ} = \a{t_0}{IJ} \mid \Phi = \phi ) ] \big],\label{eq:mle-mf-ssl}
\end{equation}
with $(I,J) \sim \text{Uniform}(\Vt{t_0} \times \Vt{t_0})$. We can see how the difference between the training objectives \Cref{eq:mle-mf-ssl} and \Cref{eq:mle-mf} is the expectation over the prior
$P(\Phi = \phi \mid \A{t_0}{-IJ} = \a{t_0}{-IJ} )$.
Finally, if we assume a flat prior $P(\Phi = \phi \mid \A{t_0}{-ij} = \a{t_0}{-ij} )$ over all $(i,j) \in \Vt{t_0} \times \Vt{t_0}$, the two objectives become the same. Thus, we can see how matrix factorization methods are simply ignoring the rest of the observed graph as an input signal to predict $ \A{t_0}{IJ}$.

\section{A Universal Family of Causal Models for Graphs}\label{sec:scm}

The conditions needed to perform counterfactual lifting in our task (\Cref{eq:cf,eq:cf-multi}) are with respect to its underlying Structural Causal Model (SCM), \textit{i.e.}, the graph's causal generating process. To this end, here we present a universal family of SCMs for graphs, where we are able to define our task and derive sufficient conditions for counterfactual lifting and other identification results. Without loss of generality, we show how to generate $\A{t_0}{}$ ---which completely defines the observed graph $\G{t_0}$. Note that unlike in usual causal models, we observe finite exchangeable graph data and thus our SCM needs to be finite exchangeable with respect to $\A{t_0}{}$. A finite (jointly) exchangeable SCM generates an observation $\a{t_0}{}$ with the same probability as $ \pi \cdot \a{t_0}{}$ for any permutation $\pi \in \S_n$, \textit{i.e.}, any two isomorphic graphs are generated with the same probability. Since an SCM generates every random variable as a function of its parents, causal models have an intrinsic (partial) ordering, \textit{i.e.}, parents must be generated before their children. Thus, designing an SCM for our task implies generating (partially) ordered sequences of random variables. How can we go from (partially) ordered sequences to finite exchangeable random variables? Next, we define a family of SCMs with such property. Intuitively, our causal models achieve finite exchangeability by randomly reassigning node identifiers. Later in \Cref{thm:main0} (i)'s proof we formally show how this family of SCMs is indeed finite exchangeable.

We denote the proposed family of SCMs by $\M$. Each SCM $\m \in \M$ has a specific set of mechanisms $\F$ and exogenous variables distribution $P(\EuScript{U})$. To generate the observed graph, $\m$ has two stages: the data generating process (\Cref{def:g-scm}) and the adjacency observation process (\Cref{def:g-obs}). The data generating process outputs two infinite-size sequences $( \X{t} )_{t=1}^{\infty}, ( \E{t} )_{t=1}^{\infty}$. Each $\X{t}$ has the same support as the entries of $\A{t_0}{}$ ($\supp(\X{t}) = \domA$), while each $\E{t}$ defines a node pair ($\supp(\E{t}) = \Z^+ \times \Z^+$). Since it generates $( \X{t} )_{t=1}^{\infty}$ and $( \E{t} )_{t=1}^{\infty}$ sequentially, the SCM has a notion of time, where at time $t$ it generates the value $\X{t}$ of the interaction (or its absence) between $\E{t}$. At a given observation time $t_0$, the adjacency observation process in \Cref{def:g-obs} generates $\A{t_0}{}$ by using $( \E{t} )_{t=1}^{t_0}$ to map $( \X{t} )_{t=1}^{t_0}$ to a matrix (or tensor) and jointly shuffling its rows and columns (or its first two modes).

\begin{definition}[Data generating process]\label{def:g-scm}

We start by defining $\E{1} = (1,1)$. Then, at time $t$, for two mechanisms $\fx{t}$ and $\fe{t}$ we define the recurrence relations
\begin{align*}
&\E{t} = \fe{t}\Big( (\E{r})_{r=1}^{t-1},  (\X{r})_{r=1}^{t-1}, \Ue{t} \Big),
\\
&\X{t} =
\begin{cases}
    \fx{t}\Big( \Ux{t} \Big) , & \text{if } t = 1,\\
    \fx{t}\Big( (\E{r})_{r=1}^{t},  (\X{r})_{r=1}^{t-1}, \Ux{t} \Big),             & \text{otherwise,}
\end{cases}
\end{align*}
where $\Ux{t} \sim P(\Ux{t} \mid \E{t}, (\Ux{m})_{m=1}^{t-1} )$ are sampled (given $\E{t}$ and all previous exogenous variables) and $\Ue{t} \sim P(\Ue{t} \mid (\Ue{m})_{m=1}^{t-1} )$ are exogenous variables sampled at time $t$. Note that the distribution of $\Ux{t}$ takes $\E{t}$ as a parameter and thus the exogenous variable at time $t$ is dependent not only on the previously generated exogenous variables, but also on the pair being generated\footnote{Note that in fact $\Ux{t}$ is not an exogenous variable, since it is modeled inside our SCM, but we refer to it as such for the sake of simplicity.}. Finally,

\begin{align*}
&\fx{1} \colon \supp( \Ux{1} ) \to \domA \\
&\fx{t} \colon \bigcup_{r=1}^t \Big(\big( \Z^+ \times \Z^+ \big)^r \times \big( \domA \big)^{r-1}\Big) \!\! \times \supp\Big( \Ux{t} \Big) \to \domA, \: t > 1,
\end{align*}
are measurable maps and

\begin{align*}
\fe{t} \colon  \Big( \Z^+ \times \Z^+ \Big)^t \times \Big( \domA \Big)^{t} \!\! \times \supp\Big( \Ue{t} \Big) \to \Z^+ \times \Z^+
\end{align*}
is a measurable map such that $\max(\E{t}) \leq \max\Big( \big(\max(\E{r})\big)_{r=1}^{t-1} \Big) + 1$, \textit{i.e.}, the pair to be generated at time $t$ can contain up to one node that has not yet been generated until time $t-1$.

\end{definition}

The only mechanism restriction is a simple rule on $\fe{t}$: the pair to be generated can contain only up to one unseen node\footnote{An unseen node in the SCM is a node which has not appeared in any generated pairs until time $t$.} and its identifier is the smallest positive integer not seen yet. This way, if $n$ nodes have appeared in interactions at any point of time, their identifiers will be $[n]:=\{1,\ldots,n\}$. Note, however, that we do not observe the sequences that \Cref{def:g-scm} outputs. Instead, we partially observe them as $\G{t_0}$, \textit{i.e.}, a graph at some given point of time $t_0$. The process generating our observed data (\Cref{def:g-obs}) takes as input $( \E{t} )_{t=1}^{t_0},( \X{t} )_{t=1}^{t_0}$ from \Cref{def:g-scm} and outputs $\A{t_0}{}$. It samples a permutation $\pi$ of the node identifiers and maps the sequence to the observed matrix (or tensor) $\A{t_0}{}$. Thus, $t_0$ and $\pi$ are the observation parameters ---when we observe the graph and what we see as arbitrary node identifiers. Note how in \Cref{def:g-scm} a pair $\ij$ can be generated multiple times by $\fe{t}$. Therefore, the observation process only considers the most recent interaction occurred at time $t^\circ$ (cf.\ \Cref{def:g-obs}).

\begin{definition}[Adjacency observation process]\label{def:g-obs}
Let $\n{t_0}:= \max\Big( \big(\max(\E{t})\big)_{t=1}^{t_0} \Big) $ be the number of different nodes appearing in pairs generated by $\m$ until time $t_0$. Then, we can generate $\A{t_0}{}$ by first sampling a permutation of the node identifiers
$$ \pi \sim \text{Uniform}(\S_{\n{t_0}}) $$
and then assigning

\[
  \A{t_0}{ij} =
    \begin{cases}
    \mathbf{0}, \text{if } (\pi^{-1} \cdot i,\pi^{-1} \cdot j) \notin \{ \E{t}\}_{t=1}^{t_0},\\
      \X{t^\circ_{ij}}\text{, } t^\circ_{ij} := \max\big( \{t \colon \E{t} = (\pi^{-1} \cdot i,\pi^{-1} \cdot j), 1 \leq t \leq t_0 \} \big),  \text{otherwise.}
    \end{cases}
\]

\paragraph{Undirected graphs} Note that if $\G{t_0}$ is an undirected graph we have that all mechanisms $\fx{t}, t \geq 1$ are invariant to the order of the input pairs, \textit{i.e.}, each $\E{t}$ is treated as a set rather than a tuple. Finally, we have an extra step here setting $\A{t_0}{ji}$ to $\A{t_0}{ij}$ if $(\pi^{-1}_i,\pi^{-1}_j) \in \{ \E{t} \}_{t=1}^{t_0}$. In the case that $(j,i)$ was also generated by the SCM, \textit{i.e.},  $(\pi^{-1} \cdot j,\pi^{-1} \cdot  i) \in \{ \E{t} \}_{t=1}^{t_0}$, we set $\A{t_0}{ji}$ to $\A{t_0}{ij}$ only if $t^\circ_{ij} > t^\circ_{ji}$.
\end{definition}

We thus define a graph SCM $\m \in \M$ as the coupling of \Cref{def:g-scm,def:g-obs}. At this point, it is worth taking a moment to understand $\m$ ---see \Cref{fig:scm-ex} for a sample generation of a graph with four nodes. The data generating process (\Cref{def:g-scm}) is an underlying evolving process, deciding at each time step which node pairs will be assigned a link (and its value) or a non-link. We do not observe the execution of \Cref{def:g-scm} or the observation parameter $\pi$, \textit{i.e.}, we are not aware of nodes' original identifiers. We only observe the graph's adjacency $\A{t_0}{}$. Finally, note that a nonedge entry in $\A{t_0}{}$ does not imply that the pair was generated as a nonedge. It can also be that it was not yet generated by the causal model. What does this mean in practice? For instance, consider a streaming platform where we do not observe an interaction between user $i$ and movie $j$. In this case, either user $i$ does not know movie $j$ is in the platform or $i$ has actively made the decision to not watch $j$. The underlying causal model explaining the lack of an interaction is hidden from us. Later we show how this notion between nonedges and pairs yet not generated by the SCM is central to identify our counterfactual task.
\\~\\
\paragraph{Generating temporal and dynamic graphs} Up until now, we have referred to our observed graph $\G{t_0}$ as a static graph. That is, a graph where all edges are observed together at once with no temporal information. However, note that although there is no explicit notion of time in $\G{t_0}$, its underlying data generating process (\Cref{def:g-scm}) is temporal. Because of such feature, our model can actually represent not only static, but also temporal and dynamic graphs. In temporal graphs we observe the edge creation time. In this case, our model can output in $\A{t_0}{ij}$ the most recent time step $t^\circ$ (cf.\ \Cref{def:g-obs}) together with the interaction value. In dynamic graphs, an edge can be created multiple times, disappear, take new values and so on. In this setting we would then not only observe the time of an interaction, but also all past interactions. Note that although $\fe{t}$ can generate an interaction between the same pair multiple times, the observation model only outputs the most recent one at $t^\circ$. On the other hand, since $\fx{t}$ takes as input all previous interactions, it can copy the history of interactions between $i$ and $j$ to $\A{t_0}{ij}$ (together with their time steps) making $\A{t_0}{}$ a sample of a dynamic graph.
\\~\\
\paragraph{Exchangeability and expressive power} After designing our family of SCMs $\M$, we turn to the two central theoretical questions around it: i.\ Is $\M$ finite exchangeable? ii.\ How expressive is $\M$? In \Cref{thm:main0} we show that i.\ any SCM $\m \in \M$ generates any two isomorphic graphs with the same probability, hence the entire family $\M$ is finite exchangeable and ii.\ there exists an SCM $\m \in \M$ that generates every pair of non-isomorphic graphs (with countable domain) with different probabilities. Ultimately, \Cref{thm:main0}(i, ii) proves that $\M$ is both a finite exchangeable and universal family of graph SCMs. Next, we present the proofs of items i.\ and ii.\ from \Cref{thm:main0}.

\subsection{Proof of \texorpdfstring{\Cref{thm:main0}}{Theorem 1}}
\zeromainthm*
\begin{proof}
~\\
\noindent i.\ Let $\m(u,\pi,t_0)$ be the output of an SCM $\m \in \M$ with input exogenous variables $u \in \supp(\U)$, permutation $\pi$ and observation time $t_0$. Note that given these three variables assignments $\m$ is a deterministic mapping to an observed graph $a$.
We need to prove that any $\m(u,\pi,t_0)$ gives isomorphic graphs the same probability.
Now, the probability of any model $\m \in \M$ generating graphs $a,a' \in \domA$ is
\begin{equation}\label{eq:prob1}
P( \A{t_0}{} = a \mid \m ) = \sum_{t_0=1}^{\infty} \frac{1}{\setsize{\S_n}} \sum_{\pi \in \S_n } \int_{u \in \supp(\U) } \mathds{1}(\m(u,\pi,t_0) = a) P(\U=u)du,
\end{equation}
    and
\begin{equation}\label{eq:prob2}
P( \A{t_0}{}  = a' \mid \m) = \sum_{t_0=1}^{\infty} \frac{1}{\setsize{\S_n}} \sum_{\pi \in \S_n } \int_{u \in \supp(\U) } \mathds{1}(\m(u,\pi,t_0) = a') P(\U=u)du
\end{equation}
respectively. Note that since $a$ and $a'$ are isomorphic, we can rewrite $a'$ as $\pi' \cdot a$ for some $\pi' \in \S_n$. Now, since $\S_n$ is a group, for every $\pi \in \S_n$ there exists another $\pi^\dagger$ such that $\pi = \pi^\dagger \circ \pi'$. Thus, for every $\pi \in \S_n$ we can define another permutation $\pi^\star := \pi^\dagger \circ \pi'$ giving us $ \pi \cdot a = \pi^\star \cdot a'$. Thus, whenever $\m(u,\pi,t_0) = a$ there exists a $\pi^\star$ such that $\m(u,\pi^\star,t_0) = a'$. As a result of such bijection, since the other terms match in \Cref{eq:prob1,eq:prob2}, we have that $ P( \A{t_0}{}  = a \mid \m)  = P( \A{t_0}{}  = a' \mid \m) $.
\\~\\
\noindent ii.\ We now show that for any finite graph (finite jointly exchangeable random array) $A$ there exists an SCM $\m \in \M$ and a time $t_0 \in \R$ such that $A \stackrel{d}{=} \A{t_0}{}$, where $\A{t_0}{}$ is the graph generated by $\m$ at time $t_0$.
Since $\domA$ is countable, let's enumerate all graphs $A_1,A_2,\ldots \in \domA$ such that $\domA = \cup_{i=1}^\infty \{A_i\}$, noting that graph with distinct indices may be isomorphic. Now, we partition the interval $[0,1)$ into intervals $U_j = \big[\sum_{i=1}^{j} P(A_i),\sum_{i=1}^{j+1} P(A_i) \big)$, where we define $P(A_0) = 0$, noting that by the total law of probability $[0,1) = \cup_{j=1}^\infty U_j$.
Following \Cref{def:g-scm} we can 
construct arbitrary functions $\fx{t},\fe{t},  t=1,\ldots,t_0$, for SCM $\m \in \M$ such that with probability $U_j$ we generate deterministic sequences $\Ux{1},\ldots,\Ux{t_0}$ and $\Ue{1},\ldots,\Ue{t_0}$ that will exactly generate the edges of $A_j$ (in the same order).
The exchangeability of $\A{t_0}{}$ comes from \Cref{def:g-scm}, which permutes the node ids at the observation time $t_0$.

\end{proof}

From the above we highlight the result in (ii). Our family of SCMs is not trivial nor attached to a restricted set of distributions. There exists a causal model in it that is able to generate all non-isomorphic graphs with different probabilities, \textit{i.e.}, it can distinguish them and thus we call it an universal class of causal models for graphs. This result comes at the cost of a (possibly\footnote{We say possibly since in practice the mechanisms of the true and unknown underlying SCM might be ignoring parts of the input and thus removing such dependencies.}) prohibitively large amount of causal dependencies between variables (see \Cref{fig:scm-ex} for an illustration). Since in practice it is hard to have assumptions removing them, we maintain such structural dependencies and still identify and estimate \Cref{eq:cf,eq:cf-multi} by posing invariance restrictions to the causal mechanisms and some experimental (probe) conditions.

\section{Interventional Lifting for Link Prediction}\label{sec:link-lift}

Now that we have formalized or class of SCMs, we can precisely describe \Cref{ass:time-igno,ass:time-exc,ass:link,ass:id-exc} and \Cref{def:y-int}.

\setcounter{assumption}{0}
\begin{assumption}[Time gap ignorability]
 We say that an SCM $\m \in \M$ satisfies time gap ignorability for observation time $t_0$ and first experiment time $t_1$ if
\[ \fx{t_1}\Big( (\E{t})_{t=1}^{t_1}, (\X{t})_{t=1}^{t_1-1}, \Ux{t_1} \Big) =     \fx{t_0+1}\Big( (\E{t})_{t=1}^{t_0+1}, (\X{t})_{t=1}^{t_0} , \Ux{t_0+1}  \Big).
\]
\end{assumption}

\begin{assumption}[Time exchangeability]
 We say that an SCM $\m \in \M$ satisfies time exchangeability for observation time if
 \begin{align*} \fx{t_0+1}\Big( (\E{t})_{t=1}^{t_0+1}, (\X{t})_{t=1}^{t_0}, \Ux{t_0+1} \Big) =
  \fx{t_0+1}\Big( (\E{ \pi_{t} })_{t=1}^{t_0+1}, (\X{ \pi_{t} })_{t=1}^{t_0} , \Ux{t_0+1} \Big), \!\! \\ \quad \!\! \forall \pi \in \S_{t_0+1} \colon \pi_{t_0+1} = t_0+1.
 \end{align*}
\end{assumption}

\begin{assumption}[Non-link ignorability]
Let $\ecal{N}^{(t_0)} := \big\{ t \in [t_0] \colon \X{t} = \mathbf{0} \big\}$ be the set of time steps (until $t_0$) where non-links were created. Then, we say that an SCM $\m \in \M$ satisfies non-link ignorability for observation time $t_0$ if its mechanism $\fx{t_0+1}$ is invariant to the removal of non-links from the input sequence, \textit{i.e.},
\begin{align*}&\fx{t_0+1}\Big( (\E{t})_{t=1}^{t_0+1}, (\X{t})_{t=1}^{t_0}, \Ux{t_0+1} \Big) \\&=     \fx{t_0+1}\Big( \big((\E{t})_{t=1}^{t_0+1}\big)_{-\ecal{N}^{(t_0)}}, \big((\X{t})_{t=1}^{t_0}\big)_{-\ecal{N}^{(t_0)}} , \Ux{t_0+1} \Big).
\end{align*}
\end{assumption}

\begin{assumption}[Identifier exchangeability]
We say that an SCM $\m \in \M$ satisfies identifier exchangeability for observation time $t_0$ and first experiment time $t_0+1$ if 
\begin{align*}
\fx{t_0+1}\Big( (\E{t})_{t=1}^{t_0+1}, (\X{t})_{t=1}^{t_0}, \Ux{t_0+1} \Big) =
\fx{t_0+1}\Big( \big( (\pi_i,\pi_j) \colon (i,j) := \E{t}\big)_{t=1}^{t_0+1}, (\X{t})_{t=1}^{t_0} ,\Ux{t_0+1} \Big),\\ \quad \forall \pi \in \S_n.
\end{align*}
\end{assumption}

\setcounter{definition}{3}
\begin{definition}[Non-interfering probes]
We say that a sequence of probes in $M$ pairs $\big(\IJt{m}\big)_{m=1}^M$ is non-interfering if the following invariance holds 
\begin{align*}
\fx{t_m}\Big( \big((\E{t})_{t=1}^{t_0},(\IJt{m'})_{m'=1}^{m}\big), \big((\X{t})_{t=1}^{t_0},(\IJt{m'})_{m'=1}^{m-1}\big), \Ux{t_m} \Big) =\\
\fx{t_0+1}\Big( \big((\E{t})_{t=1}^{t_0}, \IJt{1}\big), (\X{t})_{t=1}^{t_0}, \Ux{t_0+1} \Big).
\end{align*}
\end{definition}

\subsection{Proof of \texorpdfstring{\Cref{thm:main}}{Theorem 2} }

\mainthm*

\begin{proof} ~\\
ii.\ We start by noting that from \Cref{def:g-scm}, we have that
\[
\Y{IJ} = \fx{t_1} \Big( (\X{t})_{t=1}^{t_1-1}, (\E{t})_{t=1}^{t_1}, \Ux{t_1} \Big),
\]
where $\E{t_1} = \IJ$ is the intervened random variable while $(\X{t})_{t=1}^{t_1-1} \sim P((\X{t})_{t=1}^{t_1-1} \mid \G{t_0})$ and $(\E{t})_{t=1}^{t_1-1} \sim  P((\E{t})_{t=1}^{t_1-1} \mid \G{t_0})$ are (random) sequences that must have generated $\G{t_0}$ until $t_0$. 
We start by considering \Cref{ass:time-igno}, which directly reduces the above problem to an intervention at the next time step $t_0+1$
\[  \Y{IJ} = \fx{t_1} \Big( (\X{t})_{t=1}^{t_0}, (\E{t})_{t=1}^{t_0+1}, \Ux{t_0+1} \Big), \]
where $\E{t_0+1} = \IJ$ is the intervened random variable while $(\X{t})_{t=1}^{t_0} \sim P((\X{t})_{t=1}^{t_0} \mid \G{t_0})$ and $(\E{t})_{t=1}^{t_0} \sim  P((\E{t})_{t=1}^{t_0} \mid \G{t_0})$ are (random) sequences that must have generated $\G{t_0}$. 

Further, from \Cref{thm:main}'s statement, we have that the exogenous variables sampled before time $t_0+1$ are independent from $\Ux{t_0+1}$ given $\IJ$, \textit{i.e.}, $P(\Ux{t_0+1} \mid (\Ux{t})_{t=1}^{t_0}, \IJ  ) = P( \Ux{t_0+1} \mid \IJ  )$. Thus, we will be assuming $\ecal{U}_{\IJ} = \Ux{t_0+1}$ to prove (iii).\ from \Cref{thm:main} it suffices to show that there exists a mechanism $h \colon \supp(\bW_{\ORB{IJ}}) \times \supp(\Ux{t_0+1}) \to \domA $ such that
\begin{equation}\label{eq:h}
\begin{split}
&h( \bW_{\ORB{ij}}, \ux{t_0+1} ) = \fx{t_0+1} \Big( (\x{t})_{t=1}^{t_0}, (\e{t})_{t=1}^{t_0+1}, \ux{t_0+1} \Big), \\ &\forall \x{t} \in \supp(\X{t}), \forall \e{t} \in \supp(\E{t}), t=1,\ldots,t_0, \forall \ux{t_0+1} \in \supp(\Ux{t_0+1}), \\ 
&\forall 1 \leq \n{t_0} \leq t_0, \forall \a{t_0}{} \in \domA^{\n{t_0} \times \n{t_0}}, \forall \ij \in \Vt{t_0} \times \Vt{t_0},\text{ with }\e{t_0+1} = (i,j).
\end{split}
\end{equation}
Let us now define the equivalence relation $\sim$ on the set of sequences $\XE := \supp\Big((\X{t})_{t=1}^{t_0}, (\E{t})_{t=1}^{t_0+1}\Big)$:
\begin{align*}
\begin{split}
\Big((\x{t})_{t=1}^{t_0},& (\e{t})_{t=1}^{t_0+1}\Big) \sim \Big( (\xp{t})_{t=1}^{t_0}, (\ep{t})_{t=1}^{t_0+1} \Big) \text{ if }  \exists \; \pi \in \S_{\n{t_0}} , \exists \; \pi^\dagger \in \S_{t_0+1} \colon \pi^\dagger \cdot (t_0+1) = t_0 + 1
\\ \text{such that } &
\Big(
\big((\x{t})_{t=1}^{t_0}\big)_{-\ecal{N}^{(t_0)}}, \big((\e{t})_{t=1}^{t_0+1}\big)_{-\ecal{N}^{(t_0)}}\Big) \\ &= \Big(\big(\big(\xp{\pi^\dagger \cdot t})_{t=1}^{t_0}\big)_{-\ecal{N}^{(t_0)}}, \big((\pi \cdot \ep{\pi^\dagger\cdot t})_{t=1}^{t_0+1}\big)_{-\ecal{N}^{(t_0)}} 
\Big),
\end{split}
\end{align*}
where $(\cdot)_{-\ecal{N}^{(t_0)}}$ removes the non-links from the sequence as in \Cref{ass:link}.
Let $[\cdot]$ denote the equivalence class of a sequence and $[\XE] := \Big\{\Big[\big((\x{t})_{t=1}^{t_0}, (\e{t})_{t=1}^{t_0+1}\big) \Big] \colon \big((\x{t})_{t=1}^{t_0}, (\e{t})_{t=1}^{t_0+1}\big) \in \XE \Big\}$ be the set of all equivalence classes in $\XE$. Then, it directly follows from the invariances in \Cref{ass:id-exc,ass:link,ass:time-exc} that there exists a function $g \colon [\XE] \to \domA$ such that, $\forall \; \big((\x{t})_{t=1}^{t_0}, (\e{t})_{t=1}^{t_0+1}\big) \in  \XE,$
\begin{align}\label{eq:g}
\begin{split}
g\Big( \Big[\big((\x{t})_{t=1}^{t_0}, (\e{t})_{t=1}^{t_0+1}\big) \Big], \Ux{t_0+1}\Big) = 
\fx{t_0+1} \Big( (\x{t})_{t=1}^{t_0}, (\e{t})_{t=1}^{t_0+1}, \Ux{t_0+1} \Big),
\end{split}
\end{align}
where, as defined earlier, $[\cdot]$ denotes the equivalence class of the sequences. 

Now, we are ready to prove that there exists a bijective mapping from $[\XE]$ to the set of all input orbits $\O^{(t_0)} = \{\ORB{ij} \colon \n{t_0} = 1,\cdots,t_0\, \a{t_0}{} \in \domA^{\n{t_0} \times \n{t_0}}, \ij \in \Vt{t_0} \times \Vt{t_0} \}$. More specifically, consider the bijection as taking an arbitrary temporal sequence $\big((\x{t})_{t=1}^{t_0}, (\e{t})_{t=1}^{t_0}\big) \in \big[\big((\x{t})_{t=1}^{t_0}, (\e{t})_{t=1}^{t_0}\big) \big] $, applying \Cref{def:g-obs} and computing the orbit of $ \e{t_0+1} = \ij$ in the generated graph $\G{t_0}$.
It is straightforward from \Cref{def:g-obs} that any other temporal sequence in $\big[\big((\x{t})_{t=1}^{t_0}, (\e{t})_{t=1}^{t_0+1}\big)\big]$ also generates $\ORB{ij}$. Also, for an arbitrary graph $\G{t_0}$ and $\ij \in \Vt{t_0} \times \Vt{t_0}$, consider the temporal sequence $\big((\x{t})_{t=1}^{t_0}, (\e{t})_{t=1}^{t_0}\big)$ where for any $t^\dagger \in \{1,\ldots,t_0\}$, s.t.\ either (a) $\exists \e{t^\dagger} \in \Et{t_0}$, $\x{t^\dagger} = \a{t_0}{\e{t^\dagger}}$, or (b) $\nexists \e{t^\dagger} \in \Et{t_0}$, and $\x{t'}=\textbf{0}$. It is clear from \Cref{def:g-obs} that this procedure would generate any graph isomorphic to $\G{t_0}$ simply by performing a permutation $\pi \in \S_{\n{t_0}}$. If $\pi$ is the identity permutation we generate the orbit $\ORB{ij}$ with $\big((\x{t})_{t=1}^{t_0}, (\e{t})_{t=1}^{t_0+1}\big)$ where $\e{t_0+1} = \ij$. Thus, \Cref{def:g-obs} defines a surjection from $[\XE]$ to $\O^{(t_0)}$. Next, we will show that it is also an injection.

Now, consider applying \Cref{def:g-obs} to an arbitrary temporal sequence $\big((\xp{t})_{t=1}^{t_0}, (\ep{t})_{t=1}^{t_0} \big)$, with $\ep{t_0+1} = (i',j') \in \Vt{t_0}{}\times \Vt{t_0}{}$.
This temporal sequence defines a (static) graph $G'$. Let $\cO'_{i'j'}$ be the orbit of $(i',j')$ in $G'$. To prove \Cref{def:g-obs} is also an injection, we need to show that if $\cO'_{i'j'} = \ORB{ij}$, then $\big((\xp{t})_{t=1}^{t_0}, (\ep{t})_{t=1}^{t_0} \big) \in \big[ \big((\x{t})_{t=1}^{t_0}, (\e{t})_{t=1}^{t_0} \big) \big]$.
If $\cO'_{i'j'} = \ORB{ij}$, then the graphs are isomorphic $G' \iso \G{t_0}$. Then, apart from having the same number of nodes and edges, there exists a permutation $\pi^+ \in \S_{\n{t_0}}$ such that $a' = \pi^+ \cdot \a{t_0}{}$. Now, let $\bar{\pi} \in \S_{\n{t_0}}$ be the permutation sampled in \Cref{def:g-obs} to generate $\a{t_0}{}$ from an arbitrary sequence $\big((\x{t})_{t=1}^{t_0}, (\e{t})_{t=1}^{t_0+1}\big) \in \big[ \big((\x{t})_{t=1}^{t_0}, (\e{t})_{t=1}^{t_0+1}\big) \big]$ and $\pi' \in \S_{\n{t_0}}$ the one used to generate $G'$. Then, we can define the permutation $\pi^\star := \bar{\pi}^{-1} \circ (\pi^{+})^{-1} \circ \pi'$ and use it to match the two edge sets $\big\{ (\x{t},\e{t}) \colon t=1,\cdots,t_0, \x{t} \neq \textbf{0} \big\} = \big\{ (\xp{t}, \pi^\star \cdot \ep{t}) \colon t=1,\cdots,t_0, \x{t} \neq \textbf{0} \big\}$. Finally, we can define a permutation $\pi^\dagger \in \S_{t_0+1}$ where $\pi \cdot t' = t$ for $ \pi^\star \cdot  \ep{t'} = \e{t} $. We then have
\begin{align*}
\begin{split}
G' \iso \G{t_0} \implies \big((\x{t})_{t=1}^{t_0} , (\e{t})_{t=1}^{t_0+1}\big)_{-\ecal{N}^{(t_0)}} = \big((\xp{t})_{t=1}^{t_0}, (\pi^\star \cdot \ep{\pi^\dagger \cdot t})_{t=1}^{t_0+1} \big)_{-\ecal{N}^{(t_0)}}
 \\ \implies
\big((\xp{t})_{t=1}^{t_0}, (\ep{t})_{t=1}^{t_0+1} \big) \in \big[\big((\x{t})_{t=1}^{t_0} , (\e{t})_{t=1}^{t_0+1}\big)\big],
\end{split}
\end{align*}
which implies that there exists a bijective mapping $q \colon [\XE] \to \O^{(t_0)}$.

Finally, we can now define $\bW_{\ORB{ij}}$ as a bijective mapping $\bW_{\ORB{ij}} \colon \O^{(t_0)} \to \R^d$. Since $q$ and $\bW_{\ORB{ij}}$ are bijective, they are invertible and thus we can define $h(a,b) = g( q^{-1} \circ \bW_{\ORB{ij}}^{-1}(a), b)$ and \Cref{eq:h} follows from \Cref{eq:g}.

ii.\ From i.\ we have that the input to $\Y{IJ}$ and $\Y{\pi \IJ}$ only differ on their input exogenous variables, which are sampled from their marginal distributions. From the theorem statement, their marginal distributions are the same and thus the random variables are equal everywhere.

\end{proof}

\subsection{Proof of \texorpdfstring{\Cref{col:estimator}}{Corollary 1}} \Cref{col:estimator} follows directly from the DAG in \Cref{thm:main} and the backdoor adjustment, see 3.2 in \cite{pearl2009causality}.

\subsection{Proof of \texorpdfstring{\Cref{col:multi-estimator}}{Corollary 2}} Note that the definition of non-interfering probes (cf.\ \Cref{def:y-int}) and the fact that exogenous variables are i.i.d. make the entire set of random variables $\{ \Yt{I}{J}{m} \colon 1 \leq m \leq M \}$ also i.i.d.. Then, \Cref{col:multi-estimator} follows directly from \Cref{col:estimator}.

\subsection{Proof of \texorpdfstring{\Cref{prop:2}}{Proposition 1} }
\begin{proof}
Here we are left to show that under the stated conditions the causal DAG from \Cref{fig:gamma-scm} can be equivalently represented by the one in \Cref{fig:w-scm}. Note that in \Cref{fig:gamma-scm} $\A{t_1}{IJ}$ is generated by a mechanism $f$ that takes as input $\bW_{\ORB{IJ}}$ and $\ecal{U}_{\IJ}$. Now, from \Cref{def:inv-pair-emb} we know that there exists a set of weights $\W'_{\Gamma^\star}$ that makes $\Gamma$ assign a unique representation to each orbit of nodes. Therefore, there exists a surjection  $s \colon \{ \Gamma(i,j,\a{t_0}{};\bW'_{\Gamma^\star}) \colon \ij \in \Vt{t_0} \times \Vt{t_0} \} \to \{\bW_{\ORB{ij}} \colon \ij \in \Vt{t_0} \times \Vt{t_0} \}$. Hence, we can define $\A{t_1}{IJ}$ in \Cref{fig:w-scm} as $\A{t_1}{IJ} = f(s(\Gamma(I,J,\a{t_0}{};\bW'_{\Gamma^\star}) ),\ecal{U}_{\IJ})$ and by the definition of $s$ and $f$ have the same SCM as \Cref{fig:gamma-scm}.

\end{proof}

\section{Graph Embeddings for Causal Link Prediction}\label{sec:appx-learn}

\subsection{Proof of \texorpdfstring{\Cref{thm:bias}}{Theorem 3}}

\begin{proof}

From the definition of \Cref{def:pair-struct}, we can see that most-expressive pairwise representations are under structural (joint) pairwise representations. Therefore, it follows from \cite{Che+2019} that with an expressive enough link function, \textit{e.g.}, multi-layer perceptron with a sufficient number of neurons, we can achieve zero-bias in the task. Now, we are left to show that representations using structural node representations cannot achieve zero-bias for in some outcome distributions.

From \Cref{def:inv-node-emb} we know that if the node pairs are node-wise isomorphic, \textit{i.e.}, $i \iso u, j \iso v$, we have that
\[ Z(i,a^{(t_0) } ; \bW_{Z} ) = Z(u,\a{t_0}{} ; \bW_{Z} ), \]
\[ Z(j,a^{(t_0) } ; \bW_{Z} ) = Z(v,\a{t_0}{} ; \bW_{Z} ). \]
Note that from the definition of isomorphic nodes there exists a permutation $\pi \in \aut{\G{t_0}}$ and a (possibly other) permutation $\pi'$ such that $u = \pi_i, v = \pi'_j$. Then, by defining $\Gamma$ as the concatenation ($[\cdot,\cdot]$) of the two structural node embeddings it follows that if $u \iso i, j \iso v$ we have that
\begin{align} \label{eq:isonodes}
\rho\bigg( \bigg[ Z(i,a^{(t_0) } ; \bW_{Z} ), Z(j,\a{t_0}{} ; \bW_{Z} ) \bigg] ; \bW_\rho \bigg) =
\rho\bigg( \bigg[ Z(u,a^{(t_0) } ; \bW_{Z} ), Z(v,\a{t_0}{} ; \bW_{Z} ) \bigg] ; \bW_\rho \bigg)
.
\end{align}
Now, in a pairwise symmetric graph there are at least two node pairs that are node-wise isomorphic, \textit{i.e.}, $u \iso i, j \iso v$, but not isomorphic, \textit{i.e.}, $(i,j) \not\iso (u,v)$. That is, in such graphs the above equality holds for $\ij$ and $\uv$ and any choice of parameters $\bW_Z, \bW_\rho$ and link function $\rho$. However, since $\ij \not\iso \uv$, there exist distributions such that $P( \Y{ij} = e ) \neq P( \Y{uv} = e ) $ for some $e \in \domA$. Finally, from \Cref{eq:isonodes} we know that structural embedding models cannot output different distributions for $\Y{ij}$ and $\Y{uv}$ and thus cannot achieve zero-bias for such an outcome distribution.

Now, to show that the bound is relevant to our task it is left for us to prove that there exists at least one pairwise symmetric graph $\G{t_0}$ generated by a model $\m \in \M$. Note that the model restrictions are about the generation process after time $t_0$, so we can leverage \Cref{thm:main} directly. It follows from \Cref{thm:main}'s proof that the results i.\ and ii.\ also hold for graphs $\G{t_0}$ observed at a fixed time $t_0$ when $n \leq \sqrt{t_0}$. Consider a graph $\G{t_0}, n \leq \sqrt{t_0}$ that has zero probability in every $\m \in \M$. Now, if $G'^{(t_0)}, n \leq \sqrt{t_0}$ is a graph non-isomorphic to it, it follows from \Cref{thm:main} that there exists a model $\m' \in \M$ that generates them with different probabilities, that is, $G'^{(t_0)}$ can be generated by $\m'$. Therefore, \Cref{thm:main} guarantees that at most one graph with $n \leq \sqrt{t_0}$ (and the others isomorphic to it) cannot be generated by any model $\m \in \M$ at time $t_0$. In \cite{zhu2021node}[Theorem 4.2] the authors show that any graph with two isomorphic components has nodes $i,j,u,v$ satisfying \Cref{eq:isonodes}. Thus, since for a number of nodes $n \geq 6$ this class has more than one graph, there exists a $\G{t_0}$ with nodes $i,j,u,v$ generated by a model $\m \in \M$  with $t_0 \geq 36$ where \Cref{eq:isonodes} is satisfied.

\end{proof}

\subsection{Proof of \texorpdfstring{\Cref{thm:svd}}{Theorem 4}}

\begin{proof}
Let $P_\pi$ be the permutation matrix of $\pi \in \S_n$ and $\theta^{(\text{SVD},r)}_i$ the SVD embedding of node $i$ with respect to the right eigenvectors and $\theta^{(\text{SVD},\ell)}_i$ with respect to the left. Remember that the embedding of a node $\theta_i$ is considered as the concatenation of both $\theta^{(\text{SVD},r)}_i$ and $\theta^{(\text{SVD},\ell)}_i$.

\begin{itemize}[leftmargin=*]
    \item \textbf{Same embeddings} $\implies P_\pi aa^T = aa^T  P_\pi = aa^T $ and $P_\pi a^Ta = a^Ta P_\pi = a^Ta.$
    
    Let $ \Omega(G) := \{ \pi \in \S_n \colon \pi_i = j , \theta^{(\text{SVD},r)}_i = \theta^{(\text{SVD},r)}_j, \theta^{(\text{SVD},\ell)}_i = \theta^{(\text{SVD},\ell)}_j \} $ be the set of permutations that map nodes to other nodes with the same SVD embeddings. Then, for every eigenvector $x$ with corresponding eigenvalue $\lambda$ of $aa^T$ and $\pi \in \Omega(G)$,
    $$ aa^T x = \lambda x, $$
    and since $P_{\pi}x = x$ we have
    $$ aa^T P_{\pi}x = \lambda x, $$
    and
    $$ P_{\pi}aa^Tx = \lambda x.$$
    That is, $P_{\pi} aa^T$, $aa^TP_{\pi}$ and $aa^T$ all have the same set of right eigenvectors and corresponding eigenvalues, thus
    $$ P_\pi aa^T = aa^T P_\pi = aa^T. $$
    
    Note that the same procedure can be applied to the eigenvectors of $a^Ta$ and thus 
    $$ P_\pi aa^T = P_\pi aa^T= aa^T, P_\pi a^Ta = a^Ta P_\pi = a^Ta, \; \forall \pi \in \Omega .$$
    
    \item  \textbf{Same embeddings} $\impliedby P_\pi aa^T = P_\pi aa^T= aa^T $ and $P_\pi a^Ta = a^Ta P_\pi = a^Ta.$

    Then, for every eigenvector $x$ with corresponding eigenvalue $\lambda$ of $aa^T$ and $\pi \in \omega(G)$,
    $$ aa^T x = \lambda x, $$
    multiply by $P_\pi$ on both sides
    $$ P_\pi aa^T x = \lambda P_\pi x. $$ 
    Now, since $ P_\pi aa^T = aa^T $ we have that
    $$ aa^T x = \lambda P_\pi x, $$
    which implies that $P_\pi x= x $ for every eigenvector $x$ of $aa^T$ and $\pi \in \omega(G)$. Note that again the exact same procedure can be applied to $a^Ta$ and thus $\omega(G) = \Omega(G)$.
    
    \item $ P_\pi aa^T =  aa^T, P_\pi a^Ta =  a^Ta \iff a_{kv} = a_{\ell v},  a_{vk} = a_{v\ell} \quad \forall v \in V, k,\ell \in V \colon k \iso \ell.$
    
    \begin{itemize}
        \item  First, if $ aa^T = aa^TP_\pi = P_\pi aa^T $ for $\pi \in \Omega(G)$,
        $$(aa^T)_{ii} = \sum_{v \in V} a_{iv}a_{iv}, $$
        and
        $$ (aa^T)_{\pi_i i} = \sum_{v \in V} a_{\pi_i v}a_{i v}, $$
        which implies that $a_{iv} = a_{\pi_i v} \text{ } \forall v \in V, \pi \in \Omega(G)$.
        \item Now, we apply the same procedure leveraging $P_\pi a^Ta$
        $$(a^Ta)_{ii} = \sum_{v \in V} a_{vi}a_{vi}, $$
        and
        $$ (a^Ta)_{ \pi i } = \sum_{v \in V} a_{v \pi_i}a_{vi}, $$
        which implies that $a_{vi} = a_{v \pi_i} \text{ } \forall v \in V, \pi \in \Omega(G)$.
        \item The other direction is satisfied straightforwardly.
    \end{itemize}
    Note from the last item that $\Omega(G) \subseteq \S_n$ is the set of permutations that swaps only nodes with identical neighborhoods. From that it follows that $\Omega(G) \subseteq \aut{G}$ is a subset of the automorphisms of $G$ as well, thus only isomorphic nodes can get the same SVD embeddings. Overall, two nodes get the same SVD embedding if they have the exact same neighborhood.
\end{itemize}
\end{proof}

\subsection{Proof of \texorpdfstring{\Cref{col:strict-svd}}{Corollary 3}}
\begin{proof}
    As mentioned in the main text, \Cref{col:strict-svd} follows directly from \Cref{thm:svd} and the definition of strictly positional node embeddings (cf.\ \Cref{def:strict}).
\end{proof}

\section{Can structure capture the link formation process of real-world networks?}\label{sec:res-struct}

Here we investigate \textbf{Q4} from \Cref{sec:res}, \textit{i.e.}, the extent to which a shared set of parameters representing node pairs' structure can encapsulate the link formation process of real-world graphs. More specifically, we are interested in testing whether $\Y{IJ} \eqdist \Y{UV} $ when $\IJ$ and $\UV$ are structurally similar. In order to select node pairs with such a property, we consider an arbitrary pair $\IJ$ in the test set and the pair $\UV$ such that $\Gamma(I,J, \A{t_0}{};\bW^\star_\Gamma);\bW_\rho\big) \approx \Gamma(U,V, \A{t_0}{};\bW^\star_\Gamma);\bW_\rho\big)$, where $\Gamma(a,b, \A{t_0}{};\bW^\star_\Gamma)$ is the trained Label GCN pairwise embedding space of pair $(a,b) \in \Vt{t_0}\times \Vt{t_0}$ on graph $\A{t_0}{}$. As an alternate hypothesis, we also consider the case where $\UV$ is a node pair selected uniformly at random (from the test set as well). Note, however, that we only have a sample of $\Y{IJ}$ and $\Y{UV}$. Thus, to construct the probe outcome distribution of each node pair we consider the outcome of its $10$ closest neighbors in the Label GCN pairwise embedding space. Since the two tested node pairs might share neighbors and directly induce the same distribution, we only consider the non-intersecting sets of $10$ neighbors ---if a node pair is a neighbor of both tested pairs, we assign it to the closest tested pair. 
To test if the distributions are the same ($\Y{IJ} \eqdist \Y{UV} $), we use Fisher's exact test~\cite{fisher1922interpretation} with a significance level of $0.05$. Results are shown for all node pairs in the test set of both AE and LFM datasets from \Cref{sec:res-rec} in \Cref{fig:e10,fig:m10}. We can see how indeed structurally similar node pairs tend to have the same probe outcome distribution, in contrast to pairs selected at random, giving credence to the central assumption in our work that in some real-world tasks structural similarity before probing also implies similarity of probe outcomes. 

\begin{figure*}[ht]
\centering
\begin{minipage}[t]{.45\textwidth}
\centering
\includegraphics[width=1\linewidth]{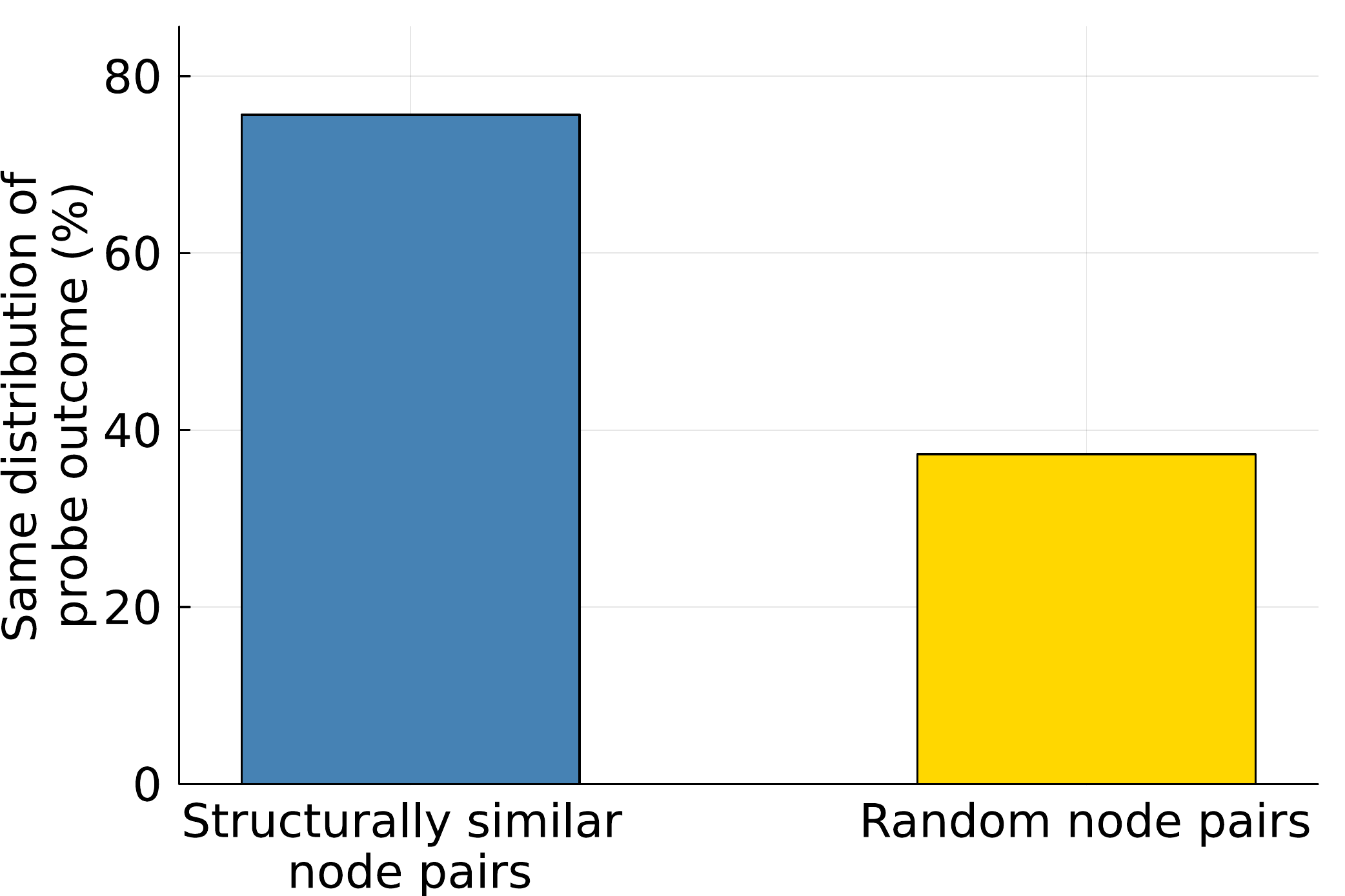}
\caption{Results for the AE dataset. We show the percentage of pairs of node pairs that are structurally similar and have the same distribution vs.\ the ones that are paired uniformly at random.}
\label{fig:e10}
\end{minipage}\qquad
\begin{minipage}[t]{.45\textwidth}
\centering
\includegraphics[width=1\linewidth]{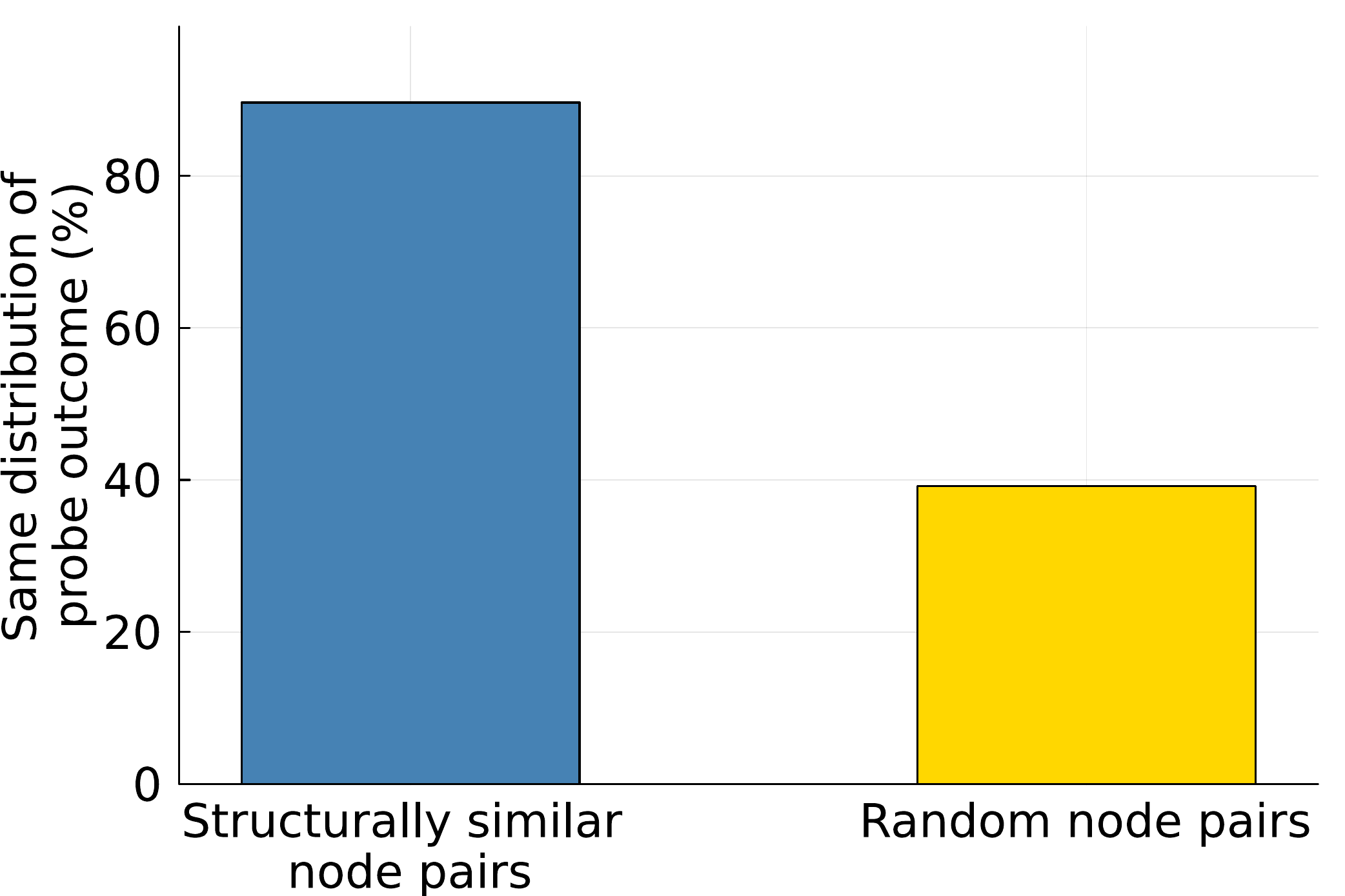}
\caption{Results for the LFM dataset. We show the percentage of pairs of node pairs that are structurally similar and have the same distribution vs.\ the ones that are paired uniformly at random.}
\label{fig:m10}
\end{minipage}
\end{figure*}

\section{Datasets and Models}\label{sec:data}
We conducted our experiments on four datasets: (1) Family tree, (2) Covariance matrix, (3) Amazon Electronics (AE) and (4) Last FM (LFM) datasets. These datasets allowed us to evaluate our findings in different settings, since they are diverse in terms of number of nodes and sparsity of their graphs. We compared (strictly) positional node embeddings, structural node embeddings and structural pairwise node embeddings, whose representative architectures are described in the main text. For the family tree dataset, we also considered knowledge graph embeddings. All models are implemented in Pytorch~\cite{pytorch} and Pytorch-Geometric~\cite{fey2019fast}. Training was done on NVIDIA GeForce RTX 2080 Ti, GeForce GTX 1080 Ti, TITAN V, and TITAN Xp GPUs. Details on the datasets, models and hyperparameter grid can be found in what follows.

\subsection{Datasets}
\paragraph{Family tree} We generated the dataset by first creating family trees using the codebase and the ontology provided in \cite{Hohenecker2020}, and then by combining them to ensure a percentage of the family trees contain two isomorphic subtrees.
Following \cite{Hohenecker2020}, each family tree is generated incrementally, starting from a single person, and it is grown by randomly selecting a person to whom a child or a parent is added. This process is repeated until the family tree reaches the maximum size of 26 nodes or the generation is randomly stopped, which happens with probability 0.002. Furthermore, each tree is constrained to a maximum depth of 5 and a maximum branching factor of 5, and the generated family trees are ensured to be non-isomorphic. After generating the family trees, we randomly selected 100 of the generated trees. In each of those, we randomly selected a leaf node and randomly chose from the previously not selected trees, two trees to be connected to the leaf or one tree to be connected twice, with probabilities $0.7$ and $0.3$ respectively. After this procedure, our dataset contains 100 family trees, with 30\% of them containing two isomorphic subtrees. Finally, we use the ontology in \cite{Hohenecker2020} to compute all possible inferences for each
tree in the dataset via symbolic reasoning. Only the \texttt{parentOf} relation is used as the observed graph, and the goal is to predict the other inferred relations (27 in total). We split the isomorphic subtrees between train and test such that relations within one subtree form the probes' (train) links, and the relations in the other subtree the counterfactual (test) links. In training nonedges are sampled uniformly at random from the training subtree. As for test, the nonedges are built taking one endpoint in the training subtree and another in the test subtree ---to get nonedges that are node-wise isomormophic to edges in the test set.
\\~\\
\paragraph{Covariance matrix} We constructed the dataset making use of the data collected in \cite{guvenir1997supervised}, available in the UCI repository~\cite{Dua2019uci}. The data contains measurements values for $279$ attributes from $452$ patients. We considered only the patients that do not have missing values in any measurement, and we removed nominal and zero variance attributes. After this pre-processing step, our data comprises $68$ patients and $182$ attributes. We used a subset of 40 patients (selected at random) and computed the covariance matrix between the attributes, which is used as the observed adjacency matrix. Then, we recompute the covariance matrix with all the $68$ patients and we split the attributes into two disjoint sets of size $75\%$ and $25\%$ of the total which are used as probes' (train) links and counterfactual (test) links respectively. 
\\~\\
\paragraph{Amazon Electronics and Last FM} We considered the Amazon Electronics (AE)~\cite{wan2020addressing} and the Last FM (LFM)~\cite{melchiorre2021investigating} datasets.
The observed graph is obtained by considering user-item interactions occurring from 11/24/2015 to 12/24/2015 in AE and from 2007 until 2013 in LFM. In train and test we use interactions happening between 12/24/2015 and 12/31/2015 for AE and in 2014 for LFM. In both datasets we experiment with the subgroup of male users, while, at test time, our counterfactual queries are about female users. Note that, as mentioned in the main text, nonedges are obtained by sampling nodes uniformly at random.

\subsection{Trained models}
\noindent For all node embedding models we use as link function a Hadamard product followed by a multi-layer perceptron with one hidden layer, of the same size as the embedding, and ELU activations. In the following, we discuss details of embedding architectures and their hyperparameters.

\subsubsection{Positional Node Embeddings} 
\noindent Below we discuss the architecture's choices for each dataset.
\\~\\
\paragraph{Family tree} We evaluated Nonnegative Matrix Factorization (NMF)~\cite{lee1999learning} using the default implementation provided in Scikit-Learn~\cite{scikit-learn}. For SVD~\cite{halko2009finding} we make use of the low-rank implementation available in Pytorch~\cite{pytorch}. Both models generated embeddings of size $256$.
We implemented positional GCN on top of a GCN architecture by using an additional embedding layer with the same size of the GNN layers. We used $3$ GCN layers with dimension $256$. Our models are trained with batches of sizes $32$ (NMF) and $1024$ (positional GCN) and an Adam optimizer with learning rate $0.005$.
\\~\\
\paragraph{Covariance matrix} Same as \emph{Family tree} for SVD, but with an embedding of size $64$ and a learning rate of $0.01$.
\\~\\
\paragraph{Amazon Electronics} Same as \emph{Family tree} for NMF and SVD but with an embedding of size $8$. Positional GCN uses $2$ GNN layers with dimension $8$. Our models are trained with batches of size $32$ and an Adam optimizer with learning rate $0.005$.
\\~\\
\paragraph{Last FM} Same as \emph{Family tree} for NMF and SVD and positional GCN. Our models are trained with batches of size $32$ and an Adam optimizer with learning rate $0.001$.

\subsubsection{Structural Node Embeddings} 
\noindent Below we discuss the architecture's choices for each dataset.
\\~\\
\paragraph{Family tree} We considered a GCN architecture with $3$ layers with dimension $256$. Our models are trained with batches of size $32$ and an Adam optimizer with learning rate $0.005$.
\\~\\
\paragraph{Covariance matrix} We considered a GCN architecture with $3$ layers with dimension $64$. We additionally make use of batch norm between each convolutional layer. Our models are trained with batches of size $16$ and an Adam optimizer with learning rate $0.01$.
\\~\\
\paragraph{Amazon Electronics} We considered a GCN architecture with $2$ layers with dimension $8$. Our models are trained with batches of size $32$ and an Adam optimizer with learning rate 0.005.
\\~\\
\paragraph{Last FM} We considered a GCN architecture with $3$ layers with dimension $256$. Our models are trained with batches of size $32$ and an Adam optimizer with learning rate $0.001$.

\subsubsection{Knowledge Graph Embeddings}
\noindent For the \emph{Family tree} dataset we used the Torch KGE~\cite{boschin2020torchkge} implementation of ComplEx, TransE and DistMult. Note that like SVD and NMF, the KGE embeddings are obtained through a pre-training procedure ---the embeddings are then used to train a link function with \Cref{eq:obj} as our task's objective. The pre-training procedure uses Torch KGE implementation with a learning rate of $0.005$, $0.00001$ as the $L2$ regularization parameter and $2000$ as the batch size.

\subsubsection{Structural Pairwise Node Embeddings}
\noindent Below we discuss the architecture's choices for each dataset.
\\~\\
\paragraph{Family tree} We use the same training hyperparameters as the GCN model, except for LabelGCN where we reduce the batch size to $8$. For SEAL we use a dimension of $32$ and $k$-hops subgraphs around the node pairs with $k=3$ ---other parameters follow the original work default implementation. For Neo-GNN, we used $3$ GCN layers of dimension $32$, learning rate of $0.005$, batch size $32$, paths of length $2$, node dimension $128$ and edge dimension $8$ --- all other parameters follow the original work default implementation.
\\~\\
\paragraph{Covariance matrix} We use LabelGCN as representative of structural pairwise embeddings. We use $3$ GCN layers of size $64$ with batch norm. We train with batches of size $16$ and an Adam optimizer with learning rate $0.01$.
\\~\\
\paragraph{Amazon Electronics} SEAL uses $k$-hops subgraphs around the node pairs with $k=1$ and a model composed of $2$ GCN layers with dimension $32$. The model is trained with batches of size $32$ and an Adam optimizer with learning rate $0.005$ and weight decay of $0.00005$. For LabelGCN we instead use a model with $2$ GCN layers with dimension $8$. The model is trained with batches of size $8$ and an Adam optimizer with learning rate $0.005$. As for Neo-GNN, we used an $L2$ regularization of $0.00005$, a learning rate of $0.005$, batch size $32$, $2$ GNN layers with dimension $32$, paths of length $1$, node dimension $128$ and edge dimension $8$ --- all other parameters follow the original work verbatim.
\\~\\
\paragraph{Last FM} For LabelGCN we use a model with $3$ GCN layers with dimension $256$. The model is trained with batches of size $10$ and an Adam optimizer with learning rate $0.001$. For SEAL we use a dimension of $16$, $3$ GCN layers, and $k$-hops subgraphs around the node pairs with $k=1$ ---other parameters follow the original work default implementation. For Neo-GNN, we used $2$ GCN layers with dimension $16$, a learning rate of $0.001$, batch size $10$, paths of length $1$, node dimension $128$ and edge dimension $8$ --- all other parameters follow the original work default implementation.

\section{Limitations and Possible Extensions}
Finally, we would like to highlight two limitations of our work that could be explored in future research. 
First, our family of SCMs is designed to model systems with causal link prediction tasks in mind, that is, they are universal models with respect to the observational distribution of the graph, designed to answer causal link prediction queries. For tasks focused on intervening, for instance, in higher-order structures, the query cannot be trivially formulated through our SCMs. Possible extensions could include higher-order generalizations of \Cref{def:scm}. Finally, we also highlight how \Cref{ass:time-igno}, a central assumption in our work, is not satisfied in some specific scenarios. In particular, in very large, highly dynamical, graphs nodes may significantly evolve in the time between a probe and its observed effect. In these scenarios, \Cref{ass:time-igno} may be violated and further work is needed in order to address these cases.

\end{document}